\documentclass[lettersize,journal]{IEEEtran}

\ifCLASSOPTIONcompsoc
  \usepackage[nocompress]{cite}
\else
  \usepackage{cite}
\fi

\usepackage{graphicx}
\usepackage{amsmath}
\usepackage{amssymb}
\usepackage{amsthm} 
\usepackage{booktabs}
\usepackage{xcolor}
\usepackage{enumitem}
\usepackage{microtype}
\usepackage{multirow}
\usepackage{makecell}
\usepackage[ruled, lined, linesnumbered, commentsnumbered, longend]{algorithm2e}
\usepackage{wrapfig}
\usepackage{soul}

\usepackage[pagebackref,breaklinks,colorlinks]{hyperref}

\usepackage[capitalize]{cleveref}
\crefname{section}{Sec.}{Secs.}
\Crefname{section}{Section}{Sections}
\Crefname{table}{Table}{Tables}
\crefname{table}{Tab.}{Tabs.}

\newcommand{\rot}{\textrm{rot}}

\newcommand{\pp}{\textrm{PP}}
\newcommand{\ppl}{\textrm{PPL}}

\newcommand{\symm}{\textrm{SPPL}}

\newcommand{\alignm}{\textrm{align}}
\newcommand{\smooth}{\textrm{smo}}
\newcommand{\arap}{\textrm{ARAP}}
\newcommand{\proj}{\textrm{proj}}

\newcommand{\corresidx}[1]{\rho_{#1}}
\newcommand{\prevcorresidx}[1]{\rho'_{#1}}
\newcommand{\newpos}[1]{\widehat{#1}}
\newcommand{\pointpairweight}[1]{\alpha_{#1}}
\newcommand{\corresidxiter}[2]{\rho_{#1}^{(#2)}}
\newcommand{\sparename}{Symmetrized Point-to-plAne distance for Robust non-rigid 3D rEgistration}
\DeclareMathOperator{\diag}{diag}
\newcommand{\mypara}[1]{\vspace{0.5em}\noindent\textbf{#1.}~}

\newcommand{\surfunc}{\overline{F}}
\newcommand{\origfunc}{F}
\newcommand{\leftsvdmat}{\mathbf{U}_\mathbf{S}}
\newcommand{\rightsvdmat}{\mathbf{V}_\mathbf{S}}
\newcommand{\correrr}{\text{Corr}_\text{err}}

\DeclareMathOperator*{\argmin}{arg\,min}

\newtheorem{proposition}{Proposition}
\usepackage{ulem}
\normalem

\newcommand{\red}[1]{\textcolor{red}{#1}}

\hyphenation{op-tical net-works semi-conduc-tor IEEE-Xplore}
\begin{document}

\title{SPARE: Symmetrized Point-to-Plane Distance for Robust Non-Rigid 3D Registration}

\author{
	Yuxin Yao,~
	Bailin Deng,~\IEEEmembership{Member,~IEEE},~
	Junhui Hou,~\IEEEmembership{Senior Member,~IEEE,}~
	Juyong Zhang,~\IEEEmembership{Member,~IEEE}	
	\IEEEcompsocitemizethanks{
    This research was in part supported by the National Natural Science Foundation of China(No.62441224, No.62272433, No.62422118), the Hong Kong Research Grants Council under Grants 11219324 and 11219422, and the USTC Fellowship (No.S19582024). (\textit{Corresponding author: Juyong Zhang.})
    
    \IEEEcompsocthanksitem 
    {Y. Yao is with the School of Mathematical Sciences, University of Science and Technology of China,  and also with the Department of Computer Science, City University of Hong Kong. Email: yaoyuxin@mail.ustc.edu.cn.}
		\IEEEcompsocthanksitem B. Deng is with the School of Computer Science and Informatics, Cardiff University. Email: DengB3@cardiff.ac.uk.
		\IEEEcompsocthanksitem J. Hou is with the Department of Computer Science, City University of Hong Kong. Email: jh.hou@cityu.edu.hk.
        \IEEEcompsocthanksitem J. Zhang is with the School of Mathematical Sciences, University of Science and Technology of China. Email: juyong@ustc.edu.cn. 
  }
}

\markboth{}%
{}



\IEEEtitleabstractindextext{
\begin{abstract}
Existing optimization-based methods for non-rigid registration typically minimize an alignment error metric based on the point-to-point or point-to-plane distance between corresponding point pairs on the source surface and target surface. However, these metrics can result in slow convergence or a loss of detail.
In this paper, we propose \emph{SPARE}, a novel formulation that utilizes a symmetrized point-to-plane distance for robust non-rigid registration. The symmetrized point-to-plane distance relies on both the positions and normals of the corresponding points, resulting in a more accurate approximation of the underlying geometry and can achieve higher accuracy than existing methods. 
To solve this optimization problem efficiently, we introduce an as-rigid-as-possible regulation term to estimate the deformed normals and propose an alternating minimization solver using a majorization-minimization strategy. 
Moreover, for effective initialization of the solver, we incorporate a deformation graph-based coarse alignment that improves registration quality and efficiency. 
Extensive experiments show that the proposed method greatly improves the accuracy of non-rigid registration problems and maintains relatively high solution efficiency.
The code is publicly available at \url{https://github.com/yaoyx689/spare}.
\end{abstract}

\begin{IEEEkeywords}
non-rigid registration, symmetrized point-to-plane distance, numerical optimization, surrogate function.
\end{IEEEkeywords} 
}

\maketitle

\IEEEdisplaynontitleabstractindextext

\IEEEpeerreviewmaketitle

\section{Introduction}
\IEEEPARstart{G}{iven} a source surface and a target surface, non-rigid registration aims to compute a deformation field that aligns the source surface with the target surface. This problem is fundamental in computer vision, with various applications such as 3D shape acquisition and tracking. 

In non-rigid registration, the challenge lies in aligning two surfaces without knowing their correspondence in advance.
Motivated by the ICP algorithm~\cite{Besl1992} for rigid registration, many non-rigid registration methods iteratively update the correspondence using closest point queries from the source points to the target surface, and then update the deformation by minimizing an alignment error metric between the corresponding points along with some regularization terms for the deformation field~\cite{Amberg2007,li2019robust,yao2022fast}. 
The quality and speed of the registration are heavily influenced by the alignment error metric.
A commonly used metric is the ``point-to-point'' distance, measured between points on the source surface and their corresponding closest points on the target surface~\cite{allen2003space,Amberg2007,li2008global,chang2011global,li2019robust,Yao_2020_CVPR,yao2022fast}.  
However, due to the unreliability of such correspondence, the registration may converge to a local minimum and produce unsatisfactory results. 
Another frequently used metric is the ``point-to-plane'' distance, which measures the distance between source surface points and the tangent planes at their closest points on the target surface~\cite{li2009robust, xu2017flycap, dou2016fusion4d, yu2018doublefusion, li2020robust2}. 
As the point-to-plane distance is a first-order approximation of the target surface shape around the closest point, it provides a more accurate proxy of the target shape than the point-to-point distance and enables faster alignment of the two surfaces.
Despite these efforts, non-rigid registration can still be challenging and time-consuming. In the rigid registration literature, alignment error metrics that incorporate higher-order geometric properties have been utilized to achieve faster convergence than the point-to-plane distance~\cite{Mitra2004,pottmann2006geometry}, but such higher-order properties are expensive to compute. Recently, a symmetrized point-to-plane metric was proposed  in~\cite{Rusinkiewicz2019} for rigid registration. It measures the consistency between the positions and normals of each source point and its corresponding closest point, and achieves the minimal value when the pair of points lies on a second-order patch of surface~\cite{Rusinkiewicz2019}. When used for rigid registration, this gains similar benefits of fast convergence as registration methods that exploit second-order surface properties, but without the need for explicit computation of such properties. 

In this paper, we propose \emph{SPARE}, a novel method that utilizes the \sparename{}. Although the symmetrized point-to-plane distance works well on rigid registration, deploying it for non-rigid registration is a non-trivial task. First, compared to rigid transformations, accurately modeling the point normals on surfaces undergoing complex non-rigid deformations presents significant computational challenges due to their nonlinear geometric dependencies. Second, there is a large space of point pair positions and normals that can minimize the symmetrized point-to-plane distance. While this is beneficial for rigid registration problems with a limited degree of freedom, it can be problematic for non-rigid registration: due to the excessive degrees of freedom, the symmetrized point-to-plane distance may be minimized without actually aligning the two surfaces. 
Finally, in real-world applications, non-rigid registration is often carried out between surface data with noise, outliers or partial overlaps, which may lead to erroneous correspondence for closest-point queries and impact the effectiveness of the symmetrized point-to-plane distance.  
To estimate point normals after deformation, we assume local surface deformation is nearly rigid. We therefore introduce an as-rigid-as-possible (ARAP) regularization term. This ARAP term, in conjunction with the symmetrized point-to-plane distance, contributes to a more accurate approximation of the distance between surfaces. Furthermore, to address the problem of excessive degrees of freedom, we incorporate a deformation graph-based coarse alignment. This not only helps the deformation field to maintain local structures of the source shape, but also reduces the degrees of freedom and restores the effectiveness of the symmetrized point-to-plane distance for non-rigid registration. 
In addition, to address the issue of erroneous correspondence due to noise, outliers and partial overlaps, we incorporate an adaptive weight into the symmetrized point-to-plane metric, which is computed according to the positions and normals of each point pair and automatically reduces the influence of unreliable correspondence.

As the resulting optimization problem is non-convex and highly non-linear, we devise an iterative solver that alternately updates the subsets of the variables, decomposing the optimization into simple sub-problems that are easy to solve. In particular, in each iteration we adopt a majorization-minimization strategy~\cite{lange2016mm} and construct a proxy problem that minimizes a simple surrogate function that bounds the target function from above, enabling us to derive closed-form update steps that can effectively reduce the original target function.
Moreover, to effectively initialize the solver and avoid undesirable local minima, we propose a strategy that uses a deformation graph~\cite{sumner2007embedded} to roughly align the two surfaces while maintaining the structure of the source shape.

We perform extensive experiments to evaluate our method on both synthetic and real datasets. Our method outperforms state-of-the-art optimization-based and learning-based methods in terms of accuracy and efficiency.
To summarize, our contributions include:
\begin{itemize}[leftmargin=*]
    \item We formulate a novel optimization-based approach for non-rigid registration problem. Our formulation adopts a robustified symmetrized point-to-plane distance in conjunction with an as-rigid-as-possible regularization term, significantly enhancing the robustness and accuracy of the registration solution.
    \item We propose an efficient alternating minimization solver for the resulting non-convex optimization problem, using a majorization-minimization strategy to derive simple sub-problems that effectively reduce the target function.
    \item We devise an effective initialization strategy for the solver, by using a deformation graph-based coarse alignment that maintains the overall structure of the source shape. 
\end{itemize}

\section{Related work}

In the following, we review existing works closely related to our approach. More complete overviews of non-rigid registration can be found in recent surveys such as~\cite{Sahillioglu2020,Deng2022}.

\subsection{Alignment Error Metric}
 For optimization-based non-rigid registration, the alignment error metric is an important component in the formulation. Many existing methods adopt the simple point-to-point distance from the classical ICP algorithm~\cite{Amberg2007,li2008global,liao2009modeling,hontani2012robust,Yao_2020_CVPR}. Others use the point-to-plane distance instead~\cite{dou2016fusion4d, yu2017bodyfusion,dou2017motion2fusion,li2018articulatedfusion,yu2018doublefusion,zhang2019interactionfusion,li2020robust2}, which benefits from faster convergence than the point-to-point distance albeit being more complicated to solve.
 Some methods utilize both metrics to complement each other~\cite{li2009robust,chang2011global, guo2015robust,  thomas2016augmented,li2020topology,su2020robustfusion,zampogiannis2019topology}.
 Another class of methods, such as the coherent point drift (CPD)~\cite{myronenko2010point}, models the sample points using Gaussian mixtures and formulates the alignment error from a probabilistic perspective~\cite{myronenko2010point,Jian2011robust,ma2013robust,ge2014non}. Recently, a Bayesian formulation of CPD has been proposed in~\cite{hirose2021bayesian}, with further work in~\cite{hirose2020acceleration,Hirose2022geodesic} to improve its performance and accuracy. In~\cite{fan2022coherent}, CPD is also generalized to non-Euclidean domains to improve its robustness on data with large deformations. {However, when the data quality is poor, e.g., with the presence of noise or partial overlap, the registration quality will degrade rapidly.}

{To address this problem,} some methods {assign} individual weights to each point pair in the alignment metric to reduce the impact of erroneous correspondence~\cite{Amberg2007, li2008global, chang2011global}. Others apply a robust norm (such as the $\ell_0$-norm or the Welsch's function) to the alignment metric, which automatically reduces the contribution of point pairs that are less reliable~\cite{newcombe2015dynamicfusion, yu2018doublefusion, xu2019unstructuredfusion, su2020robustfusion, Yao_2020_CVPR}. {Recently, DDM~\cite{ren2025ddm} proposes a distance metric based on directional distance fields (DDF) featuring a confidence score for enhanced alignment of 3D geometric surfaces. Separately, \cite{Rusinkiewicz2019} introduces a symmetrized point-to-plane metric that considers higher-order properties of surfaces. Our method incorporates this latter metric from~\cite{Rusinkiewicz2019} with an adaptive weighting scheme for non-rigid registration; this approach yields a robust alignment error metric and ultimately leads to improved alignment performance.}

 \subsection{{Deformation Field}}
 The performance of non-rigid registration also depends on the representation of the deformation field. A simple approach is to introduce a variable for the new position or transformation of each source point~\cite{liao2009modeling, huang2011global,li2019robust}. Such a representation provides abundant degrees of freedom to represent complex deformations, but the resulting problem is often expensive to solve due to the large number of variables.
 Other methods address this problem using an embedded deformation graph~\cite{sumner2007embedded}, where each source point is deformed according to the transformations associated with the nearby nodes of a small graph on the surface~\cite{li2008global, li2009robust, li2017robust, Yao_2020_CVPR}. This reduces the number of variables and improves computational efficiency, at the cost of less expressiveness due to fewer degrees of freedom. 
 \cite{bogo2014faust,bogo2017dfaust} leverage model priors to parameterize deformations into a set of pose and shape parameters.
 Recent advancements in neural networks and deep learning have introduced new strategies for representing deformation fields. Some methods directly leverage neural networks for this task~\cite{eisenberger2021neuromorph,prokudin2023dynamic}. Others focus on reduced-dimensional representations, employing deformation nodes, control points, or neural bones to represent the reduced-dimensional deformation field~\cite{sundararaman2022reduced,yang2022banmo,yao2024dynosurf}.
 Various specific techniques have also emerged: for instance,
 \cite{corona2022learned,marin2024nicp} iteratively predict the offset of each point based on a per-vertex neural network; 
 \cite{prokudin2019efficient} introduces a basis point set representation to predict the positions of deformed mesh points; \cite{feng2021recurrent} utilizes a combination of rigid transformations to predict non-rigid deformations. \cite{lei2022cadex,Cai2022NDR} adopt a bijective map to model the reversible deformation field between two shapes; and \cite{groueix20183d} employs the structure of an encoder and decoder to predict deformation.
 However, these learning-based approaches can exhibit data dependencies. In this paper, we propose a combination of coarse alignment using a deformation graph and fine registration based on pointwise variables, designed to achieve high solution accuracy while maintaining fast solution speed.

\subsection{{Correspondences}}
Due to initially unknown correspondences, optimization-based methods generally follow the ICP algorithm's paradigm~\cite{Besl1992}, which involves iteratively regarding the closest points as correspondences and applying optimization steps to gradually converge to an optimal solution.
In recent years, deep learning techniques have also been adopted to deal with challenging non-rigid registration problems. 
Many methods rely on supervised learning to improve the quality of the alignment.
Some methods learn the correspondences between source and target surfaces from ground-truth correspondences, which helps to handle data with large deformations~\cite{wei2016dense, bozic2020deepdeform,lepard2021,trappolini2021shape,qin2023deep,bhatnagar2020loopreg,wang2019deep}.
Others use the ground-truth positions of the deformed model as the supervision to train the network to learn the correspondences or the deformation fields~\cite{wang2020sequential, bozic2020deepdeform, bovzivc2020neural}.
Despite their strong capabilities, these methods rely on training data with ground-truth correspondence or deformation, which is not always easy to obtain.
Some unsupervised methods that do not require ground-truth correspondence have also been proposed to address this problem~\cite{feng2021recurrent, zeng2021corrnet3d,Wang20193DN,marin2024nicp}. 
Such methods still rely on a properly designed loss function that is minimized during training to help the network learn the registration. 
Some methods~\cite{bozic2020deepdeform,bovzivc2020neural} incorporate additional information, such as texture or colors, to assist in the establishment of corresponding points.
Despite the rapid progress in learning-based methods, research into optimization approaches still advances the field in a complementary way: such approaches do not rely on data and 
have stronger generalization capability. On the data with relatively small deformation, they can usually achieve higher accuracy. They may be used to compute the required ground-truth information for supervised learning, and they can also contribute to the knowledge of loss function design for unsupervised methods. Furthermore, optimization-based methods can be integrated with correspondences predicted by learning-based methods to reduce reliance on the spatial positions of the source and target, thereby achieving better performance. 

\section{Proposed Method}

Consider a source surface represented by sample points $\mathcal{V}=\{\mathbf{v}_1, ..., \mathbf{v}_{|\mathcal{V}|} \in \mathbb{R}^3 \}$ equipped with normals $\mathcal{N}_s=\{\mathbf{n}_1, ..., \mathbf{n}_{|\mathcal{V}|} \in \mathbb{R}^3\}$, and a target surface represented by sample points $\mathcal{U}=\{\mathbf{u}_1, ..., \mathbf{u}_{|\mathcal{U}|} \in \mathbb{R}^3\}$ with normals $\mathcal{N}_t=\{\mathbf{n}_1^t, ..., \mathbf{n}_{|\mathcal{U}|}^t \in \mathbb{R}^3\}$.
We assume that the neighboring relation between the source sample points in $\mathcal{V}$ is represented by a set of edges $\mathcal{E} = \{(\mathbf{v}_i, \mathbf{v}_j)\}$.
This representation is applicable to both meshes and point clouds: the sample points are either mesh vertices or points within a point cloud, while the edges are either mesh edges or derived from the $k$-nearest neighbors in the point cloud.
The normals can come directly from the input data, be estimated from nearby points using PCA, or calculated by averaging the normals of adjacent faces in a mesh.

We aim to compute a deformation field for the source surface to align it with the target surface. In practical applications, the non-rigid registration problem is inherently challenging due to partial overlap, the lack of known corresponding point pairs, and the presence of noise and outliers in both the source and target surfaces. 
To this end, we propose a novel optimization formulation for non-rigid registration, utilizing a robust symmetrized point-to-plane (SP2P) distance metric and an as-rigid-as-possible (ARAP) regularization term in the target function (Sections~\ref{Sec:symmDistance} \& \ref{Sec:alignment}). To efficiently solve the resulting non-convex optimization, we derive a majorization-minimization (MM) solver that decomposes it into simple sub-problems with closed-form solutions (Section~\ref{Sec:numericalSolver}). Furthermore, to avoid local minima and improve the solution accuracy and speed, we introduce a deformation graph-based coarse alignment to initialize our MM solver (Section~\ref{sec:coarseAlignment}).

\subsection{Preliminary: Symmetrized Point-To-Plane Distance}
\label{Sec:symmDistance}

\begin{figure*}[!htb]
  \centering
  \vspace{-2em}
\includegraphics[width=\textwidth]{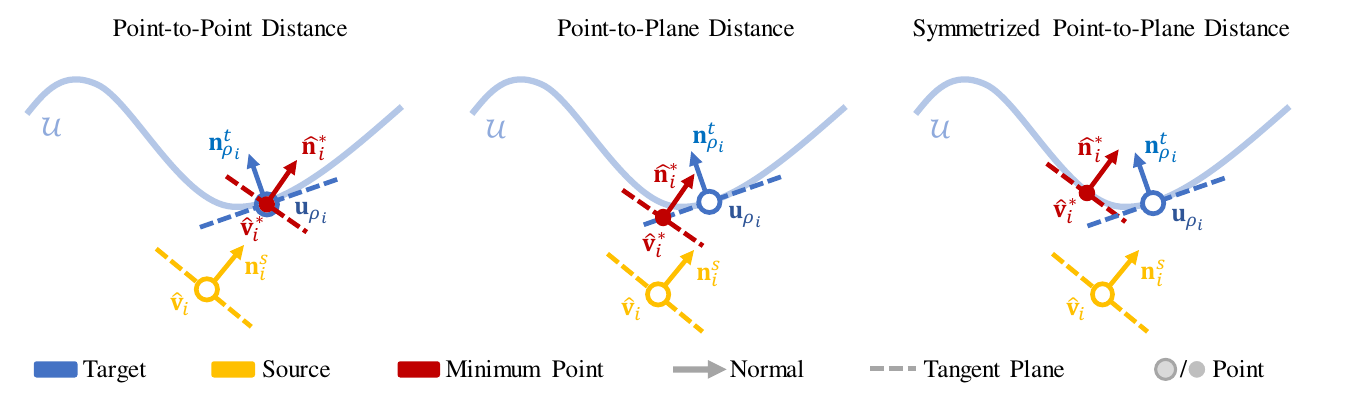} 
\vspace{-2.5em}
  \caption{The target positions of the points on the source surface when minimizing different distance metrics. Here, we only show the position and the normal vector of a potential minimum point. 
  When considering the point-to-plane distance, the position of the minimum point may slide along the tangent plane of the target point. 
  For the symmetrized point-to-plane distance, the minimum point may slide along a symmetric surface that passes through the source point. Additionally, for the point-to-point distance and the point-to-plane distance, the normal vector of the minimum point may rotate around the minimum point.} 
  \label{fig:min-distance} 
\end{figure*}

A central component for optimization-based registration is the alignment error metric that penalizes the deviation between the source surface and the target surface. Many existing non-rigid registration methods adopt a simple point-to-point metric that originates from rigid registration~\cite{Besl1992}:  
\begin{equation}
E_{\pp}^i =  \|\newpos{\mathbf{v}}_i - \mathbf{u}_{\corresidx{i}}\|^2,
\label{eq:P2P}
\end{equation}
where $\newpos{\mathbf{v}}_i$ is the new position of ${\mathbf{v}}_i$ after the deformation, and $\mathbf{u}_{\corresidx{i}}$ is the closest target point to  $\newpos{\mathbf{v}}_i$. This allows for a simple solver that iteratively updates the closest point correspondence and re-computes the deformation according to the correspondence~\cite{Amberg2007}, similar to the classical ICP algorithm for rigid registration~\cite{Besl1992}. 
However, this metric may not accurately measure the alignment error: as the source and target point clouds may be sampled from different locations of the underlying surface, they may not fully align under the ground-truth deformation, and the point-to-point metric may not reach its minimum in this case.
Moreover, the fixed closest points in each iteration fail to account for the change of correspondence for the moving source points, which can lead to slow convergence, especially when the alignment requires tangential movement along the target surface~\cite{pottmann2006geometry}.
To address such issues, other methods adopt a point-to-plane metric from the rigid registration literature~\cite{Chen1992}:
\begin{equation}
    E_{\ppl}^i =   [\mathbf{n}_{\corresidx{i}}^t\cdot(\newpos{\mathbf{v}}_i - \mathbf{u}_{\corresidx{i}})]^2,
    \label{eq:PPL}
\end{equation}
where $\mathbf{n}_{\corresidx{i}}^t$ is the normal vector at $\mathbf{u}_{\corresidx{i}}$.
The metric penalizes the deviation from the tangent planes at the closest target points.
Since the tangent plane is a first-order approximation of the target surface around the closest point, this metric helps to account for the target surface shape away from the sample points and leads to faster convergence than the point-to-point metric~\cite{pottmann2006geometry}. 
In the rigid registration literature, alignment error metrics that encode higher-order geometric properties (such as local curvatures) have also been adopted for even faster convergence than the point-to-plane metric~\cite{Mitra2004,pottmann2006geometry}. However, such higher-order properties can be expensive to compute.  

To achieve faster convergence than the point-to-plane distance without evaluating higher-order properties, a symmetrized point-to-plane (SP2P) metric (see Fig.~\ref{fig:min-distance}) has been proposed recently in~\cite{Rusinkiewicz2019} for rigid registration:
\begin{equation}
E_{\symm}^i = [(\newpos{\mathbf{n}}_i + \mathbf{n}_{\corresidx{i}}^t) \cdot (\newpos{\mathbf{v}}_i - \mathbf{u}_{\corresidx{i}})]^2,
\label{eq:symm}
\end{equation}
where $\newpos{\mathbf{n}}_i$ is the normal at the point $\newpos{\mathbf{v}}_i$ on the deformed  surface.
Here, the term $(\newpos{\mathbf{n}}_i + \mathbf{n}_{\corresidx{i}}^t) \cdot (\newpos{\mathbf{v}}_i - \mathbf{u}_{\corresidx{i}})$ vanishes when the corresponding
point pair $(\newpos{\mathbf{v}}_i, \mathbf{u}_{\corresidx{i}})$ and their normals $(\newpos{\mathbf{n}}_i, \mathbf{n}_{\corresidx{i}}^t)$ are consistent 
with a locally quadratic surface centered between them~\cite{Rusinkiewicz2019}.
In other words, the metric can be considered as penalizing the deviation between the deformed source point $\newpos{\mathbf{v}}_i$ and a family of quadratic approximations of the target surface that are consistent with the corresponding target point $\mathbf{u}_{\corresidx{i}}$  and its normal $\mathbf{n}_{\corresidx{i}}^t$. It has been observed in~\cite{Rusinkiewicz2019} that the symmetrized point-to-plane metric~\eqref{eq:symm} leads to faster convergence of ICP than the point-to-plane metric~\eqref{eq:PPL}.

\subsection{Non-rigid Registration with Robust SP2P Distance}
\label{Sec:alignment}

In this paper, we adopt the SP2P distance in Eq.~\eqref{eq:symm} to propose a new optimization formulation for non-rigid registration, to benefit from its fast convergence. However,  in real-world applications, the closest-point correspondence can become unreliable due to partial overlap, noise, outliers, and large initial position differences in the input surfaces.
In such cases, the error metric on unreliable point pairs may steer the optimization toward an erroneous alignment.
One common solution is to use dynamically adjusted weights for individual source points to control their contribution to the alignment energy based on the reliability of their correspondence~\cite{Rusinkiewicz2001}. Following this idea, we introduce a weighted alignment term for the source point ${\mathbf{v}}_i$ as
\begin{equation}
    E_{\alignm}^i = \pointpairweight{i}[(\newpos{\mathbf{n}}_i + \mathbf{n}_{\corresidx{i}}^t) \cdot (\newpos{\mathbf{v}}_i - \mathbf{u}_{\corresidx{i}})]^2,
    \label{eq:alignment}
\end{equation}
where the weight $\alpha_i$ is computed based on the deformed position $\newpos{\mathbf{v}}'_i$, the corresponding point $\mathbf{u}_{\prevcorresidx{i}}$, and their normals $\newpos{\mathbf{n}}'_i, \mathbf{n}^t_{\prevcorresidx{i}}$ in the previous iteration:
\begin{equation}
\label{eq:robustweight}
\pointpairweight{i} = \left\{
\begin{aligned}
&0, &\newpos{\mathbf{n}}'_i\cdot  \mathbf{n}^t_{\prevcorresidx{i}}<0,\\
&\exp(-\frac{\|\newpos{\mathbf{v}}'_i-\mathbf{u}_{\prevcorresidx{i}}\|^2}{2\sigma^2}), & \text{otherwise}, 
\end{aligned}
\right.
\end{equation}
where $\sigma$ is a user-specified parameter. We set $\sigma$ to the median Euclidean distance from the initial source points to their closest target points.
The weight $\pointpairweight{i}$ has a large value only if both the positions and normals of the point pair are close to each other, thus reducing the influence of unreliable correspondence.

Another challenge in using the SP2P metric for non-rigid registration is that although it works well for rigid registration, using it alone for non-rigid registration is usually insufficient to achieve good results. This is because compared to the point-to-point and point-to-plane metrics, the SP2P metric has a much larger space of position and normal values with a zero metric value: on the zero level set, the positions and normals of the point pair are only required to be consistent with a certain quadratic surface patch, but the quadratic patch is not necessarily consistent with the target surface.
This is not an issue for rigid registration: since all the source points must undergo the same rigid transformation, this induces an implicit regulation for the deformed source points and their normals, forcing them to align with the target surface eventually. 
Furthermore, in non-rigid registration, the source surface may undergo different deformations, making it challenging to accurately model the normal direction after deformation. We notice that in many non-rigid registration problems such as human body tracking, the surface exhibits local rigidity during deformation. Therefore, we introduce for each source point an as-rigid-as-possible (ARAP) term~\cite{Sorkine2007arap} to help estimate the normal direction after deformation. The ARAP term constrains the deformation in its local neighborhood to be close to a rigid transformation:
\begin{equation}
E_{\arap}^i = \frac{1}{|\mathcal{N}(\mathbf{v}_i)|} \sum_{j\in \mathcal{N}(\mathbf{v}_i)} \|(\newpos{\mathbf{v}}_i-\newpos{\mathbf{v}}_j) - \mathbf{R}_i(\mathbf{v}_i-\mathbf{v}_j)\|^2.
\label{eq:arap-each-energy}
\end{equation}
Here $\mathcal{N}(\mathbf{v}_i)$ is the index set of neighboring points for $\mathbf{v}_i$, 
and $\mathbf{R}_i \in \mathbb{R}^{3 \times 3}$ is an auxiliary rotation matrix variable for a rigid transformation associated with the source point $\mathbf{v}_i$  (i.e., it needs to satisfy $\mathbf{R}_i^T \mathbf{R}_i = \mathbf{I}$  and $\det(\mathbf{R}_i) = 1$). Therefore, we can use the auxiliary rotation matrix $\mathbf{R}_i$ to approximate the normal of $\newpos{\mathbf{v}}_i$ after the deformation as 
\begin{equation}
\label{eq:updateNormal}
\newpos{\mathbf{n}}_i = \mathbf{R}_i\mathbf{n}_i,
\end{equation}
so that the alignment term in Eq.~\eqref{eq:alignment} becomes
\begin{equation}
    E_{\alignm}^i = \pointpairweight{i}[(\mathbf{R}_i\mathbf{n}_i + \mathbf{n}_{\corresidx{i}}^t) \cdot (\newpos{\mathbf{v}}_i - \mathbf{u}_{\corresidx{i}})]^2.
    \label{eq:alignmentwithR}
\end{equation}
Combining the alignment term~\eqref{eq:alignmentwithR} and the ARAP term~\eqref{eq:arap-each-energy} for all source points, we obtain the overall alignment term and ARAP term as
\begin{equation}
E_{\alignm} = \frac{1}{|\mathcal{V}|}\sum_{\mathbf{v}_i\in\mathcal{V}} E^i_{\alignm}, \quad E_{\arap} = \frac{1}{2|\mathcal{E}|}\sum_{\mathbf{v}_i\in\mathcal{V}} E^i_{\arap}.
\label{eq:OverallTerms}
\end{equation}
Our non-rigid registration is formulated as an optimization problem
\begin{equation}
\begin{aligned}
\min_{\{\newpos{\mathbf{v}}_i\}, \{\mathbf{R}_i\}} ~~& E_{\alignm} + w_{\arap} E_{\arap}\\
\text{s.t.} ~~& \mathbf{R}_i^T \mathbf{R}_i = \mathbf{I},~\det(\mathbf{R}_i) = 1,~~\forall~i,
\end{aligned}
\label{eq:FineOptimization}
\end{equation}
where $w_{\arap}$ is a user-specified parameter.

\subsection{Numerical Solver}
\label{Sec:numericalSolver}

The optimization problem~\eqref{eq:FineOptimization} is challenging due to the non-linearity, non-convexity, and constraints of rotation matrices. To solve it efficiently and effectively, we devise an iterative solver that alternately updates the closest points $\{\mathbf{u}_{\corresidx{i}}\}$, the position variables $\{\newpos{\mathbf{v}}_i\}$, and the rotation matrix variables $\{\mathbf{R}_i\}$ while fixing the remaining variables. We denote their values after the $k$-th iteration as $\{\mathbf{u}_{\corresidx{i}^{(k)}}\}$, $\{\newpos{\mathbf{v}}_i^{(k)}\}$, and $\{\mathbf{R}_i^{(k)}\}$, respectively. In the following, we explain how to update their values in the ($k$+1)-th iteration.

\mypara{Update of $\{\mathbf{u}_{\corresidx{i}}\}$} Following the solver in~\cite{Rusinkiewicz2019}, we fix the source point positions $\newpos{\mathbf{v}}_i^{(k)}$ and update closest points via
\begin{equation}
\rho_i^{(k+1)} = \argmin_{\corresidx{i}\in\{1,...,|\mathcal{U}|\}} \|\mathbf{u}_{\corresidx{i}} - \newpos{\mathbf{v}}_i^{(k)}\|.
\end{equation}
Afterward, we also compute the updated robust weights $\{\pointpairweight{i}^{(k+1)}\}$ according to Eq.~\eqref{eq:robustweight}.

\mypara{Update of $\{\newpos{\mathbf{v}}_i\}$}
After updating $\{\mathbf{u}_{\corresidx{i}}\}$, we fix the closest points $\{\mathbf{u}_{\corresidx{i}^{(k+1)}}\}$, rotations $\{\mathbf{R}_i^{(k)}\}$, and robust weights $\{\pointpairweight{i}^{(k+1)}\}$, and minimize the target function in Eq.~\eqref{eq:FineOptimization} with respect to $\{\newpos{\mathbf{v}}_i\}$.
It is a linear least-squares problem and can be solved via a linear system. Details can be found in Appx.~\red{A-A}. 

\mypara{Update of $\{\mathbf{R}_i\}$}
Finally, we update the rotation matrices $\{\mathbf{R}_i\}$ by fixing $\{\newpos{\mathbf{v}}_i^{(k+1)}\}$ and $\{\mathbf{u}_{\corresidx{i}^{(k+1)}}\}$ and minimizing the target function with respect to $\{\mathbf{R}_i\}$. This is reduced to an independent sub-problem for each $\mathbf{R}_i$:
\begin{align}
    \min_{\mathbf{R}_i} ~& \pointpairweight{i}^{(k+1)} [(\mathbf{R}_i \mathbf{n}_i + \mathbf{n}_{\corresidxiter{i}{k+1}}^t) \cdot (\newpos{\mathbf{v}}_i^{(k+1)} - \mathbf{u}_{\corresidxiter{i}{k+1}})]^2\nonumber\\
    & + \omega \sum_{\mathbf{v}_j\in\mathcal{N}(\mathbf{v}_{i})}  \|(\newpos{\mathbf{v}}_i^{(k+1)} - \newpos{\mathbf{v}}_j^{(k+1)}) - \mathbf{R}_i (\mathbf{v}_i - \mathbf{v}_j) \|^2,\nonumber\\
    \textrm{s.t.}~& \mathbf{R}_i^T \mathbf{R}_i = \mathbf{I},~\det(\mathbf{R}_i) = 1, \label{eq:Rproblem}
\end{align}
where $\omega = \displaystyle\frac{w_{\arap}\cdot|\mathcal{V}|}{|\mathcal{N}(\mathbf{v}_i)|\cdot2|\mathcal{E}|}$.
As far as we are aware, there is no closed-form solution to this problem. For an efficient update, we adopt the idea of majorization-minimization (MM)~\cite{lange2016mm} and minimize a convex surrogate function 
$\surfunc(\mathbf{R}_i \mid \mathbf{R}_i^{(k)})$ that is constructed based on the current variable value $\mathbf{R}_i^{(k)}$. The surrogate function should bound the original target function  $\origfunc$ from above and have the same value as $\origfunc$ at $\mathbf{R}_i^{(k)}$, i.e.,
\begin{equation}
\label{eq:surrogate-condition}
    \begin{aligned}
        & \surfunc(\mathbf{R}_i \mid \mathbf{R}_i^{(k)}) \geq \origfunc(\mathbf{R}_i) \quad \forall~\mathbf{R}_i,\\
        & \surfunc(\mathbf{R}_i^{(k)} \mid \mathbf{R}_i^{(k)}) = \origfunc(\mathbf{R}_i^{(k)}).
    \end{aligned}
\end{equation}
As a result, the minimizer of $\surfunc(\mathbf{R}_i \mid \mathbf{R}_i^{(k)})$ is guaranteed to decrease the original target function $\origfunc$ compared to $\mathbf{R}_i^{(k)}$, unless $\mathbf{R}_i^{(k)}$ is already a local minimum of $\origfunc$ (in which case the minimizer of $\surfunc(\mathbf{R}_i \mid \mathbf{R}_i^{(k)})$ will be exactly $\mathbf{R}_i^{(k)}$)~\cite{lange2016mm}. In the following, we will derive a simple surrogate function that allows for a closed-form solution, enabling fast and effective update of $\{\mathbf{R}_i\}$.

First, we note that the squared term $[(\mathbf{R}_i \mathbf{n}_i + \mathbf{n}_{\corresidxiter{i}{k+1}}^t) \cdot (\newpos{\mathbf{v}}_i^{(k+1)} - \mathbf{u}_{\corresidxiter{i}{k+1}})]^2$ in Eq.~\eqref{eq:Rproblem} can be written as
\begin{equation}
f(\mathbf{R}_i) = [(\mathbf{R}_i\mathbf{n}_i+\mathbf{n}_i^t)\cdot \mathbf{d}]^2,
\label{eq:f-Ri}
\end{equation}
where $\mathbf{d}=\newpos{\mathbf{v}}_i^{(k+1)}-\mathbf{u}_{\corresidx{i}^{(k+1)}}$ and $\mathbf{n}_i^t=\mathbf{n}_{\corresidxiter{i}{k+1}}^t$. Then, we have
\begin{proposition}
\label{prop:f}
If $\mathbf{d}\neq \mathbf{0}$, then $f(\mathbf{R}_i)$ in Eq.~\eqref{eq:f-Ri} satisfies 
\begin{equation}
\label{eq:f-Ri-equal}
f(\mathbf{R}_i) = \|\mathbf{d}\|^2\cdot \min\nolimits_{\mathbf{h}\in\mathcal{P}}\|\mathbf{R}_i\mathbf{n}_i - \mathbf{h}\|^2,
\end{equation}
where $\mathcal{P}$ is a plane containing all vectors $\mathbf{h} \in \mathbb{R}^3$ that satisfy
\begin{equation}
\label{eq:Ri-condition}
(\mathbf{h} + \mathbf{n}_{i}^t)\cdot \mathbf{d} = 0,
\end{equation}
i.e., $f(\mathbf{R}_i)$ is the squared distance from $\mathbf{R}_i\mathbf{n}_i$ to the plane $\mathcal{P}$, scaled by a factor $\|\mathbf{d}\|^2$.
\end{proposition}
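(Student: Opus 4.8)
The plan is to recognize the right-hand side of \eqref{eq:f-Ri-equal} as a scaled squared distance from the fixed point $\mathbf{R}_i\mathbf{n}_i$ to the plane $\mathcal{P}$, and to evaluate that distance in closed form. First I would rewrite the defining condition \eqref{eq:Ri-condition} as $\mathbf{d}\cdot\mathbf{h} = -\mathbf{d}\cdot\mathbf{n}_i^t$. Since $\mathbf{d}\neq\mathbf{0}$ by hypothesis, the coefficient vector $\mathbf{d}$ is nonzero, so this equation genuinely describes an affine plane in $\mathbb{R}^3$ with (non-unit) normal $\mathbf{d}$; this is precisely where the assumption $\mathbf{d}\neq\mathbf{0}$ enters, since otherwise the condition would degenerate and $\mathcal{P}$ would fail to be a plane (and dividing by $\|\mathbf{d}\|^2$ would be illegitimate).

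Next I would compute $\min_{\mathbf{h}\in\mathcal{P}}\|\mathbf{R}_i\mathbf{n}_i-\mathbf{h}\|^2$ by the orthogonal-projection principle: the closest point of an affine plane to a given point is its orthogonal projection, and the minimal squared distance equals the square of the displacement component along the unit normal $\hat{\mathbf{d}}=\mathbf{d}/\|\mathbf{d}\|$. Concretely, every $\mathbf{h}\in\mathcal{P}$ satisfies $\hat{\mathbf{d}}\cdot\mathbf{h}=-\hat{\mathbf{d}}\cdot\mathbf{n}_i^t$, so the signed distance from $\mathbf{R}_i\mathbf{n}_i$ to $\mathcal{P}$ is $\hat{\mathbf{d}}\cdot(\mathbf{R}_i\mathbf{n}_i)-(-\hat{\mathbf{d}}\cdot\mathbf{n}_i^t)=\hat{\mathbf{d}}\cdot(\mathbf{R}_i\mathbf{n}_i+\mathbf{n}_i^t)$. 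Squaring yields $\min_{\mathbf{h}\in\mathcal{P}}\|\mathbf{R}_i\mathbf{n}_i-\mathbf{h}\|^2 = [\hat{\mathbf{d}}\cdot(\mathbf{R}_i\mathbf{n}_i+\mathbf{n}_i^t)]^2 = [(\mathbf{R}_i\mathbf{n}_i+\mathbf{n}_i^t)\cdot\mathbf{d}]^2/\|\mathbf{d}\|^2$, and multiplying through by $\|\mathbf{d}\|^2$ recovers exactly $f(\mathbf{R}_i)$ as defined in \eqref{eq:f-Ri}. To keep the argument self-contained rather than quoting the point-to-plane distance formula, I would instead decompose $\mathbf{R}_i\mathbf{n}_i-\mathbf{h}$ into its parts parallel and orthogonal to $\hat{\mathbf{d}}$: for every $\mathbf{h}\in\mathcal{P}$ the parallel component is fixed at $\hat{\mathbf{d}}\cdot(\mathbf{R}_i\mathbf{n}_i+\mathbf{n}_i^t)$, while the orthogonal (in-plane) component can be driven to zero by choosing $\mathbf{h}$ to match the in-plane part of $\mathbf{R}_i\mathbf{n}_i$, so the Pythagorean identity shows the minimum is attained there and equals the same value.

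I do not anticipate a genuine obstacle, as the statement is essentially a direct computation; the only step requiring care is verifying that the hypothesis $\mathbf{d}\neq\mathbf{0}$ is exactly what makes $\mathcal{P}$ a well-defined plane with a nonzero normal, thereby justifying both the projection argument and the division by $\|\mathbf{d}\|^2$.
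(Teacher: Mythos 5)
Your proof is correct and uses essentially the same argument as the paper: both hinge on identifying $\min_{\mathbf{h}\in\mathcal{P}}\|\mathbf{R}_i\mathbf{n}_i-\mathbf{h}\|^2$ with the squared signed distance along the unit normal $\mathbf{d}/\|\mathbf{d}\|$, the only difference being direction — you evaluate the minimum in closed form and recognize it as $f(\mathbf{R}_i)/\|\mathbf{d}\|^2$, whereas the paper rewrites $f(\mathbf{R}_i)$ as $[(\mathbf{R}_i\mathbf{n}_i-\mathbf{h})\cdot\mathbf{d}]^2$ for arbitrary $\mathbf{h}\in\mathcal{P}$ and then invokes the same projection fact. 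Your explicit remark on where $\mathbf{d}\neq\mathbf{0}$ is needed is a nice touch but does not change the substance.
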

\begin{proof}
If $\mathbf{d}\neq \mathbf{0}$, then for any vector $\mathbf{h}$ satisfying~\eqref{eq:Ri-condition} we have 
\begin{equation}
\begin{aligned}
\label{eq:proof-step1}
f(\mathbf{R}_i) &= [(\mathbf{R}_i\mathbf{n}_i-\mathbf{h} + \mathbf{h}+\mathbf{n}_i^t)\cdot \mathbf{d}]^2 \\
&= [(\mathbf{R}_i\mathbf{n}_i-\mathbf{h})\cdot \mathbf{d} + (\mathbf{h}+\mathbf{n}_i^t)\cdot \mathbf{d}]^2 \\
&= [(\mathbf{R}_i\mathbf{n}_i-\mathbf{h})\cdot \mathbf{d}]^2. 
\end{aligned}
\end{equation}
Moreover, the plane $\mathcal{P}$ defined by~\eqref{eq:Ri-condition} has a unit normal vector $\frac{\mathbf{d}}{\|\mathbf{d}\|}$. Since $\mathbf{h}$ belongs to the plane $\mathcal{P}$, we have 
\begin{equation}
\label{eq:proof-step2}
[(\mathbf{R}_i\mathbf{n}_i-\mathbf{h})\cdot \mathbf{d}]^2 = \|\mathbf{d}\|^2\cdot\left[(\mathbf{R}_i\mathbf{n}_i-\mathbf{h})\cdot \frac{\mathbf{d}}{\|\mathbf{d}\|}\right]^2.
\end{equation}
Note that $(\mathbf{R}_i\mathbf{n}_i-\mathbf{h})\cdot \frac{\mathbf{d}}{\|\mathbf{d}\|}$ is the signed distance from $\mathbf{R}_i\mathbf{n}_i$ to the plane $\mathcal{P}$. Therefore,  
\begin{equation}
\label{eq:proof-step3}
\left[(\mathbf{R}_i\mathbf{n}_i-\mathbf{h})\cdot \frac{\mathbf{d}}{\|\mathbf{d}\|}\right]^2 = \min_{\mathbf{h}\in\mathcal{P}}\|\mathbf{R}_i\mathbf{n}_i-\mathbf{h}\|^2. 
\end{equation}
Using Eqs.~\eqref{eq:proof-step1}, \eqref{eq:proof-step2} and \eqref{eq:proof-step3}, we obtain Eq.~\eqref{eq:f-Ri-equal}.
\end{proof}
Based on Eq.~\eqref{eq:f-Ri-equal}, we can construct a surrogate function for $f$ at the  variable value $\mathbf{R}_i^{(k)}$ as:
\begin{equation}
\label{eq:f-Ri-surrogate}
\overline{f}(\mathbf{R}_i|\mathbf{R}_i^{(k)}) = \|\mathbf{d}\|^2\cdot\|\mathbf{R}_i\mathbf{n}_i-\mathbf{h}_*^{(k)}\|^2,
\end{equation}
where $\mathbf{h}_*^{(k)}$ is the closest projection of $\mathbf{R}_i^{(k)}\mathbf{n}_i$ onto the plane $\mathcal{P}$, i.e., $\mathbf{h}_*^{(k)} = \argmin_{\mathbf{h} \in \mathcal{P}}\|\mathbf{R}_i^{(k)}\mathbf{n}_i - \mathbf{h}\|$ (see Appx.~\red{A-B} for a proof that $\overline{f}(\mathbf{R}_i|\mathbf{R}_i^{(k)})$ satisfies the conditions of a surrogate function).

It is easy to show that $\mathbf{h}_*^{(k)}$ can be computed as 
\begin{equation}
\label{eq:Rn-projection}
\mathbf{h}_*^{(k)} = \mathbf{R}_i^{(k)}\mathbf{n}_i - \mathbf{d}\frac{\left(\mathbf{n}_{i}^t+\mathbf{R}_i^{(k)}\mathbf{n}_i\right)\cdot \mathbf{d}}{\|\mathbf{d}\|^2}.
\end{equation}
By replacing $f(\mathbf{R}_i)$ with its surrogate function in Eq.~\eqref{eq:f-Ri-surrogate}, we replace the optimization problem~\eqref{eq:Rproblem} with the following proxy problem:
\begin{equation}
\begin{aligned}
\min_{\mathbf{R}_i}~ & \pointpairweight{i}^{(k+1)}\|\mathbf{d}\|^2\cdot \|\mathbf{R}_i\mathbf{n}_i-\mathbf{h}_*^{(k)}\|^2 \\
 & + \omega \sum_{\mathbf{v}_j\in\mathcal{N}(\mathbf{v}_i)}  \|(\newpos{\mathbf{v}}_i^{(k+1)} - \newpos{\mathbf{v}}_j^{(k+1)}) - \mathbf{R}_i (\mathbf{v}_i - \mathbf{v}_j) \|^2,\nonumber\\
\text{s.t. } & \mathbf{R}_i^T\mathbf{R}_i = \mathbf{I},\quad \det(\mathbf{R}_i) = 1.
\end{aligned}
\end{equation}
This problem has a closed-form solution~\cite{sorkine2017least}:
\begin{equation}
\mathbf{R}_i^{(k+1)} = \rightsvdmat
\begin{bmatrix}
1 &   &    \\
  & 1 &    \\
  &   &     \det(\rightsvdmat\leftsvdmat^T)
\end{bmatrix}
\leftsvdmat^T,
\label{eq:RUpdate}
\end{equation}
where the matrices $\leftsvdmat$, $\rightsvdmat$ are from the SVD 
\[
\mathbf{S} = \leftsvdmat\mathbf{\Sigma}_{\mathbf{S}}\rightsvdmat^T
\]
for the matrix
\begin{equation}
\begin{aligned}
\mathbf{S} = &  
~\pointpairweight{i}^{(k+1)}\|\mathbf{d}\|^2\mathbf{n}_i(\mathbf{h}_*^{(k)})^T \\
&~+ \omega \sum_{\mathbf{v}_j\in\mathcal{N}(\mathbf{v}_i)} (\mathbf{v}_i-\mathbf{v}_j)(\newpos{\mathbf{v}}_i^{(k+1)}-\newpos{\mathbf{v}}_j^{(k+1)})^T\\
= &~ \pointpairweight{i}^{(k+1)} \left(\|\mathbf{d}\|^2 \mathbf{n}_i(\newpos{\mathbf{n}}_i^{(k)})^T
- \mathbf{n}_i \mathbf{d}^T ((\mathbf{n}_{\corresidxiter{i}{k+1}}^t+\newpos{\mathbf{n}}_i^{(k)})\cdot \mathbf{d})\right)\\
&~+ \omega \sum_{\mathbf{v}_j\in\mathcal{N}(\mathbf{v}_i)} (\mathbf{v}_i-\mathbf{v}_j)(\newpos{\mathbf{v}}_i^{(k+1)}-\newpos{\mathbf{v}}_j^{(k+1)})^T
\end{aligned}
\label{eq:SMatrix}
\end{equation}
where $\newpos{\mathbf{n}}_i^{(k)} = \mathbf{R}_i^{(k)}\mathbf{n}_i$. 
Note that although the above derivation is based on Proposition~\ref{prop:f} which requires $\mathbf{d} \neq \mathbf{0}$, the solution in Eq.~\eqref{eq:RUpdate} remains effective when  $\mathbf{d}=\mathbf{0}$: in this case, the target function in~\eqref{eq:Rproblem} reduces to  $$\omega \sum_{\mathbf{v}_j\in\mathcal{N}(\mathbf{v}_{i})}  \|(\newpos{\mathbf{v}}_i^{(k+1)} - \newpos{\mathbf{v}}_j^{(k+1)}) - \mathbf{R}_i (\mathbf{v}_i - \mathbf{v}_j) \|^2,$$ and the matrix $\mathbf{S}$ in Eq.~\eqref{eq:SMatrix} becomes $$\mathbf{S} = \omega \sum_{\mathbf{v}_j\in\mathcal{N}(\mathbf{v}_i)} (\mathbf{v}_i-\mathbf{v}_j)(\newpos{\mathbf{v}}_i^{(k+1)}-\newpos{\mathbf{v}}_j^{(k+1)})^T.$$ Then the matrix $\mathbf{R}_i^{(k+1)}$ in Eq.~\eqref{eq:RUpdate} is exactly the solution to the reduced optimization problem~\cite{sorkine2017least}. Later in Sec.~\ref{sec:ablations}, we will showcase the benefits of this solution for updating $\{\mathbf{R}_i\}$.

\mypara{Termination criteria}
We stop the iteration if at least one of the following conditions is satisfied: (1) the $\ell_2$ norm of the point position changes in an iteration is less than a threshold, i.e., 
$\|\newpos{\mathbf{V}}^{(k+1)} - \newpos{\mathbf{V}}^{(k)}\|/\sqrt{|\mathcal{V}|} < \epsilon$ , where $\epsilon$ is a user-specified parameter (we set 
$\epsilon=10^{-4}$ in our experiments; (2) the number of iterations reaches an upper bound $K$ (we set $K=30$).
Algorithm~\ref{Alg:fine-registration} summarizes our solver for the optimization problem in Eq.~\eqref{eq:FineOptimization}.

\begin{algorithm}[t]
	\caption{Non-rigid registration with robust SP2P distance.}
	\label{Alg:fine-registration}
	\KwIn{$\{\mathbf{v}_i, \mathbf{n}_i\}_{i=1}^{|\mathcal{V}|}$: the source points and normals;\\ ~~~~$\{\mathbf{u}_i, \mathbf{n}_i^t\}_{i=1}^{|\mathcal{U}|}$: the target points and normals;\\ 
    \\~~~~$K$: maximum number of iterations;\\~~~~$\epsilon$: convergence threshold. 
	}
	\KwResult{The deformed point positions $\{\newpos{\mathbf{v}}_i\}_{i=1}^{|\mathcal{V}|}$.
	} 
	\BlankLine
 Set $\mathbf{R}_i^{(0)} = \mathbf{I}$ and $\newpos{\mathbf{v}}_i^{(0)}=\mathbf{v}_i$ for all $i$\;
 $k = 0$\;
	\While{$k < K$ and ${\|\newpos{\mathbf{V}}^{(k+1)}-\newpos{\mathbf{V}}^{(k)}\|}/{\sqrt{|\mathcal{V}|}} <\epsilon$}{
	    For each $i\in\mathcal{V}$, find the closest point $\mathbf{u}_{\corresidxiter{i}{k+1}}$ for $\newpos{\mathbf{v}}_i^{(k)}$\;
	Compute weight $\pointpairweight{i}^{(k+1)}$ with Eq.~\eqref{eq:robustweight}\;
        Compute $\{\newpos{\mathbf{v}}_i^{(k+1)}\}$ via linear system (\red{32})\; 
        Compute $\{\mathbf{R}_i^{(k+1)}\}$ with Eq.~\eqref{eq:RUpdate}\;
        $\newpos{\mathbf{n}}_i^{(k+1)} = \mathbf{R}_i^{(k+1)}\mathbf{n}_i$\;  
        $k = k+1$\;
	}
\end{algorithm}

\subsection{Coarse Alignment Using a Deformation Graph}
\label{sec:coarseAlignment}  

Our numerical solver presented in Sec.~\ref{Sec:numericalSolver} is a local solver that searches for a stationary point near the initial solution. Therefore, proper initialization is crucial for achieving desirable results. To this end, we initialize the solver with a coarse alignment computed using a deformation graph~\cite{sumner2007embedded}. The deformation graph controls the source surface shape with a reduced number of variables, allowing us to efficiently determine a deformation that roughly aligns the two surfaces while preserving the structure of the source shape.

Specifically, to construct a deformation graph, we follow~\cite{yao2022fast} and first uniformly sample a subset $\mathcal{V}_{\mathcal{G}}=\{\mathbf{p}_1,\ldots,\mathbf{p}_{|\mathcal{V}_{\mathcal{G}}|}\}$ from $\mathcal{V}$ as the deformation graph nodes, so that the number of nodes is much smaller than the number of source points. Next, we establish an edge set  $\mathcal{E}_{\mathcal{G}}$ by connecting the neighboring nodes, thereby deriving a deformation graph $\mathcal{G} = \{\mathcal{V}_{\mathcal{G}},\mathcal{E}_{\mathcal{G}}\}$. 
We then assign to each node an affine transformation, represented by a matrix  
$\mathbf{A}_j\in\mathbb{R}^{3\times 3}$ and a vector $\mathbf{t}_j \in\mathbb{R}^3$. 
The transformations at the nodes determine the deformation of each source point $\mathbf{v}_i$ as
\begin{equation}
\label{eq:deformed-pos}
\newpos{\mathbf{v}}_i = \sum_{\mathbf{p}_j\in\mathcal{I}(\mathbf{v}_i)} w_{ij}\cdot (\mathbf{A}_j(\mathbf{v}_i-\mathbf{p}_j)+\mathbf{p}_j + \mathbf{t}_j), 
\end{equation}
where 
$
\mathcal{I}(\mathbf{v}_i) = \{\mathbf{p}_j\in\mathcal{V}_{\mathcal{G}}~|~D(\mathbf{v}_i, \mathbf{p}_j) < R\}
$
is the node set that affects $\mathbf{v}_i$, with $D(\cdot,\cdot)$ denoting the geodesic distance and $R$ being the sampling radius. We set  $R=10\cdot \overline{l}_s$ by default, where $\overline{l}_s$ is the average edge length of the source surface. 
The weight $w_{ij}$ for the influence of $\mathbf{p}_j$ on $\mathbf{v}_i$ is defined as~\cite{yao2022fast}:
\[
w_{ij} = \frac{(1-D^2(\mathbf{v}_i, \mathbf{p}_j)/R^2)^3}{\sum_{\mathbf{p}_k\in\mathcal{I}(\mathbf{v}_i)} (1-D^2(\mathbf{v}_i, \mathbf{p}_k)/R^2)^3}. 
\]
Using the deformation graph, we perform a coarse alignment by optimizing the transformation variables $\{(\mathbf{A}_j, \mathbf{t}_j)\}$ associated with the nodes.

The target function is a combination of the following terms:
\begin{itemize}[leftmargin=*]
    \item \textbf{Alignment and ARAP terms}.
    We use an alignment term $E_{\alignm}^C$ and an ARAP term $E_{\arap}^C$ similar to the ones presented in Sec.~\ref{Sec:alignment} but with two main differences:  (1) the deformed source point position $\newpos{\mathbf{v}}_i$ is computed from the deformation graph according to Eq.~\eqref{eq:deformed-pos}; (2) as only a rough alignment is needed, we apply the alignment term only to a sampled subset $\mathcal{S}$ of the source points $\mathcal{V}$ to reduce computation, i.e.,
    \begin{equation}
     E_{\alignm}^C =  \frac{1}{|\mathcal{S}|}\sum\nolimits_{\mathbf{v}_s\in \mathcal{S}} E_{\alignm}^s, 
    \end{equation}   
    where $E_{\alignm}^s$ is defined in the same way as Eq.~\eqref{eq:alignment}. By default, we set the number of sampling points to 3000.

    \item \textbf{Other regularization terms}.
    To ensure the deformation graph induces a smooth deformation that preserves the structure of the source shape, we introduce two additional regularization terms $E_{\smooth}$ and $E_{\rot}$ from~\cite{yao2022fast} to enforce the smoothness and rigidity of the transformations, respectively. 
\end{itemize}
Using these terms, our optimization problem for coarse alignment can be written as
\begin{equation}
\min_{\{(\mathbf{A}_j, \mathbf{t}_j)\}, \{\mathbf{R}_i\} } E_{\alignm}^C + w_{\arap}^C E_{\arap}^C + w_{\smooth} E_{\smooth}  + w_{\rot} E_{\rot} ,
\label{eq:coarseOptimization}
\end{equation}
where $\{\mathbf{R}_i\}$ are auxiliary rotation matrix variables for the alignment and ARAP terms.
Similar to Sec.~\ref{Sec:numericalSolver}, we solve this problem by alternating updates of the variables using an MM strategy. Details of the solver can be found in Appx.~\red{A-C}.

\section{Results}
\label{sec:results}
We conducted comprehensive performance comparisons of the proposed method with the state-of-the-art non-rigid registration methods. In addition, we evaluated the effectiveness of the components in our formulation. This section provides details of our experiment settings and results.

\subsection{Experiment Settings}
To assess the effectiveness and accuracy of our method, we conducted comparisons with several existing methods: the non-rigid ICP method (N-ICP) from~\cite{Amberg2007}; the Welsch function-based formulation from~\cite{yao2022fast} (AMM); the Bayesian Coherent Point Drift method (BCPD)~\cite{hirose2021bayesian} and its variants BCPD++~\cite{hirose2020acceleration} and GBCPD/GBCPD++~\cite{Hirose2022geodesic}. 
Additionally, we compared our method with state-of-the-art learning-based methods, including LNDP~\cite{li2022DeformationPyramid},  SyNoRiM~\cite{huang2023synorim}, and GraphSCNet~\cite{qin2023deep}. 
The comparisons were performed using the open-source implementations of these methods\footnote{\url{https://github.com/Juyong/Fast_RNRR}}$^,$\footnote{\url{https://github.com/yaoyx689/AMM_NRR}}$^,$\footnote{\url{https://github.com/ohirose/bcpd}}$^,$\footnote{\url{https://github.com/rabbityl/DeformationPyramid}}$^,$\footnote{\url{https://github.com/huangjh-pub/synorim}}$^,$\footnote{\url{https://github.com/qinzheng93/GraphSCNet}}.
All methods were evaluated using a synthetic dataset, the DeformingThings4D (DT4D) dataset~\cite{li20214dcomplete}, and real datasets including the articulated mesh animation (AMA) dataset~\cite{vlasic2008articulated}, the SHREC'20 Track dataset~\cite{Dyke2020tracka} {the DFAUST dataset~\cite{bogo2017dfaust} and the BEHAVE dataset~\cite{bhatnagar22behave}}. Additionally, we assessed all optimization-based methods using real datasets including DFAUST~\cite{bogo2017dfaust}, DeepDeform~\cite{bozic2020deepdeform} and the face sequence from~\cite{guo2015robust}, to test their effectiveness for non-rigidly tracking.
For each dataset, we tuned the parameters of each optimization-based method to achieve the best overall performance.  
For learning-based methods, we used the pre-trained models provided by the authors for testing. 
Detailed parameter settings can be found in Appx.~\red{C}.

\begin{figure*}[t]
	\centering
	\includegraphics[width=\textwidth]{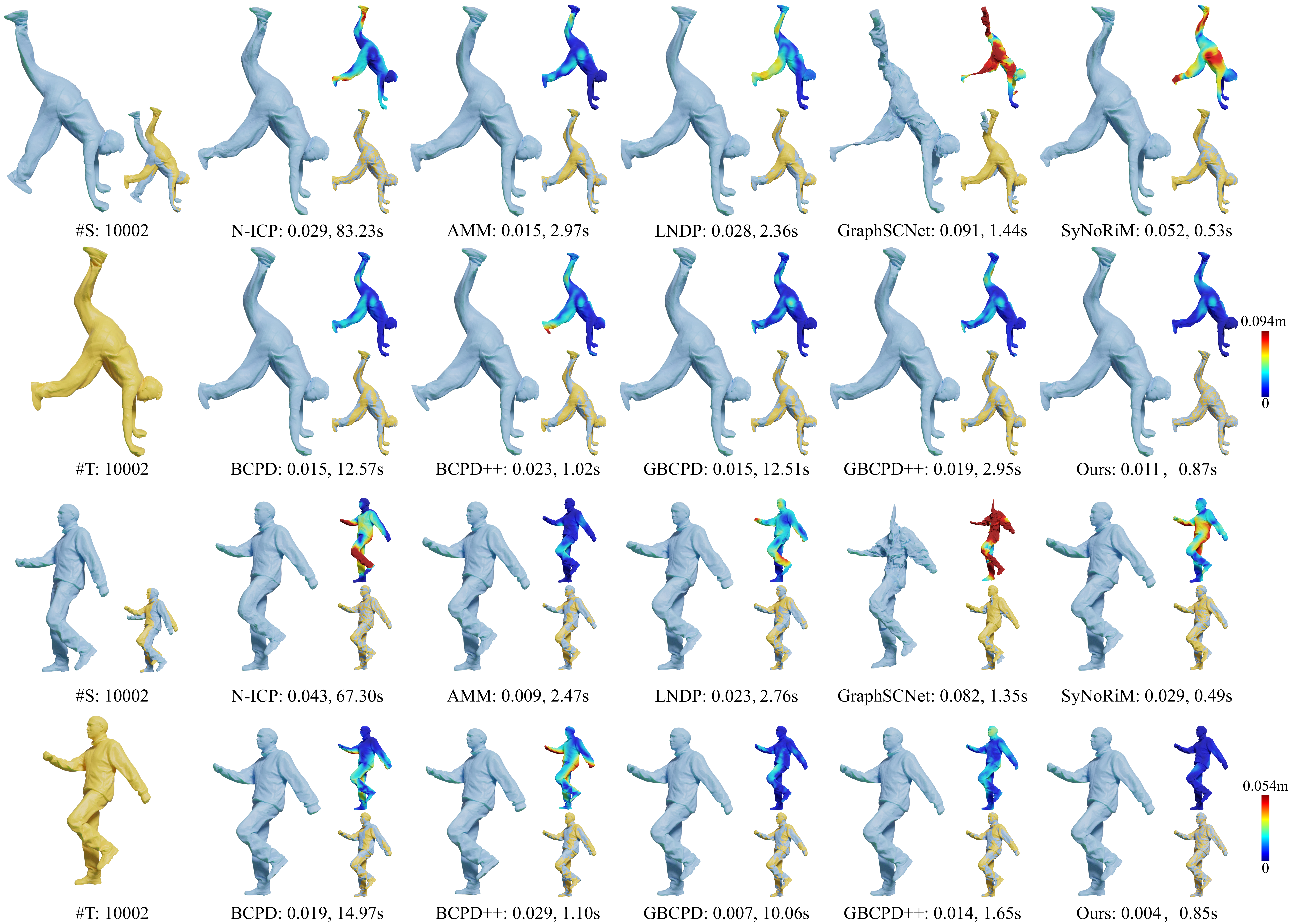}
    \vspace{-1.5em}
	\caption{{Results from different methods on two problem instances from the ``handstand''(top) and ``march1''(bottom) sequences from~\cite{vlasic2008articulated}. For each method, we show the deformed mesh (left), alignment result (right-bottom), and an error map (right-top) that visualizes the distance between each point and its ground-truth corresponding points, as well as the RMSE and the computational time.}}
	\label{fig:human_clean}
\end{figure*}

The optimization-based methods were run on a PC with 32GB of RAM and a 6-core Intel Core i7-8700K CPU at 3.70GHz. 
They were all running on Ubuntu 20.04 LTS system built with Docker and all leveraged multi-thread acceleration on the CPU. 
All learning-based methods were running on a server equipped with an NVIDIA RTX A6000 GPU and utilized CUDA acceleration.
For all problem instances, we scaled the source surface and target surface with the same scaling factor, such that the two surfaces are contained in a bounding box with a unit diagonal length to test all comparison methods.  
To have a clear error scale, we rescale the model to its original size and calculate the error in meters.
In the subsequent presentation of numerical results in tables, we highlight the best results using bold fonts, while underlining the second-best results for clarity and emphasis.
In all result figures, we render the target surface in yellow, while the source surface and the deformed surface are rendered in blue.

For the evaluation metric, we calculated the root-mean-square error between the deformed positions and the ground-truth positions, referring to~\cite{Yao_2020_CVPR,yao2022fast}, to measure the registration error:
\begin{equation}
\text{RMSE} = \sqrt{\frac{1}{|\mathcal{V}|}~{\sum\nolimits_{\mathbf{v}_i\in\mathcal{V}}\|\newpos{\mathbf{v}}_i-\newpos{\mathbf{v}}_i^*\|^2}},
\label{Eq:RMSE}
\end{equation}
where $\newpos{\mathbf{v}}_i^*$ is the ground-truth positions for $\newpos{\mathbf{v}}_i$. 
Since some data do not have ground-truth deformed positions but provide ground-truth correspondences, we also calculate the distance error between the correspondences, similar to~\cite{litany2017deep,cao2023unsupervised}:
\begin{equation}
\correrr = \frac{1}{|\mathcal{C}|}\sum\nolimits_{(i,j)\in\mathcal{C}} \mathcal{D}_{\mathcal{V}}(\mathbf{v}_{\tau_j}, \mathbf{v}_{i}),
\end{equation}
where $\mathcal{C}$ is the set of ground-truth correspondence pairs.
For a given target point $\mathbf{u}_j$, $\mathbf{v}_i$ is the ground-truth corresponding point on the source surface, while  $\mathbf{v}_{\tau_j}$ is the corresponding point obtained from the registration result where its deformed position  $\widehat{\mathbf{v}}_{\tau_j}$ is the closest to $\mathbf{u}_j$ on the deformed source surface. 
When the geodesic distance can be calculated for surface $\mathcal{V}$, $D_{\mathcal{V}}(\cdot,\cdot)$ represents the geodesic distance between two points on surface $\mathcal{V}$. Otherwise, for example, if the source surface is discretized as a mesh with multiple connected parts, $D_{\mathcal{V}}(\cdot,\cdot)$ represents the Euclidean distance between two points. In the following experiments, only the ``crane'' sequences from the AMA dataset~\cite{vlasic2008articulated} with partial overlaps use the Euclidean distance for error calculation.
In addition, following established practices~\cite{litany2017deep,cao2023unsupervised},  we plotted cumulative curves (representing the percentage of correspondences with errors below a varying threshold) and computed the area under these curves (AUC).

\subsection{Comparison with State-of-the-Art Methods}

\begin{table}[t!]
	\caption{
		{Average values of RMSE ($\times 0.01$) $\downarrow$  / $\correrr$ ($\times 0.01$) $\downarrow$ / AUC $\uparrow$ and computational time (s) on ``handstand" and ``march1" sequences from~\cite{vlasic2008articulated}.}}  
	\label{Tab:clean_small}
	\setlength{\tabcolsep}{1.8pt}
    \renewcommand{\arraystretch}{0.8}
	\centering
	\begin{small}
		\begin{tabular}{ c | c  c| c c}
			\toprule 
            \multirow{2}{*}{\makecell[c]{Method}} & \multicolumn{2}{c|}{handstand}& \multicolumn{2}{c}{march1} \\\cmidrule(r){2-3} \cmidrule(r){4-5}
			& Accuracy &  Time & Accuracy & Time \\
			\midrule
N-ICP~\cite{Amberg2007} & 3.03 / 2.17 / 0.81  & 66.58 & 1.67 / 1.18 / 0.89 & 39.84 \\
AMM~\cite{yao2022fast} & 1.19 / \underline{0.62} / \underline{0.94}  & 2.35 & 0.70 / 0.21 / 0.98 & 1.51 \\
BCPD~\cite{hirose2021bayesian} & 1.29 / 0.79 / 0.92  & 4.97 & 1.18 / 0.76 / 0.93 & 5.40 \\
BCPD++~\cite{hirose2020acceleration} & 1.71 / 1.31 / 0.87  & 3.21 & 1.74 / 1.29 / 0.88 & 3.33 \\
GBCPD~\cite{Hirose2022geodesic} & \underline{1.14} / \underline{0.62} / \underline{0.94}  & 10.07 & \underline{0.55} / \underline{0.15} / \underline{0.99} & 9.53 \\
GBCPD++~\cite{Hirose2022geodesic} & 1.46 / 1.09 / 0.89  & 2.68 & 1.08 / 0.73 / 0.93 & 2.38 \\
GraphSCNet~\cite{qin2023deep} & 9.44 / 7.61 / 0.40  & 1.95 & 7.23 / 6.96 / 0.46 & 1.85 \\
LNDP~\cite{li2022DeformationPyramid} & 2.23 / 1.93 / 0.81  & 1.90 & 1.85 / 1.39 / 0.87 & 1.43 \\
SyNoRiM~\cite{huang2023synorim} & 3.20 / 2.83 / 0.73  & \textbf{0.54} & 2.20 / 1.93 / 0.83 & \textbf{0.48} \\ \midrule 
Ours & \textbf{0.86} / \textbf{0.38} / \textbf{0.96} & \underline{0.96} & \textbf{0.26} / \textbf{0.03} / \textbf{1.00} & \underline{0.80} \\
			\bottomrule
		\end{tabular}
	\end{small}
\end{table}

\begin{figure*}[!t]
	\centering
   \includegraphics[width=1\textwidth]{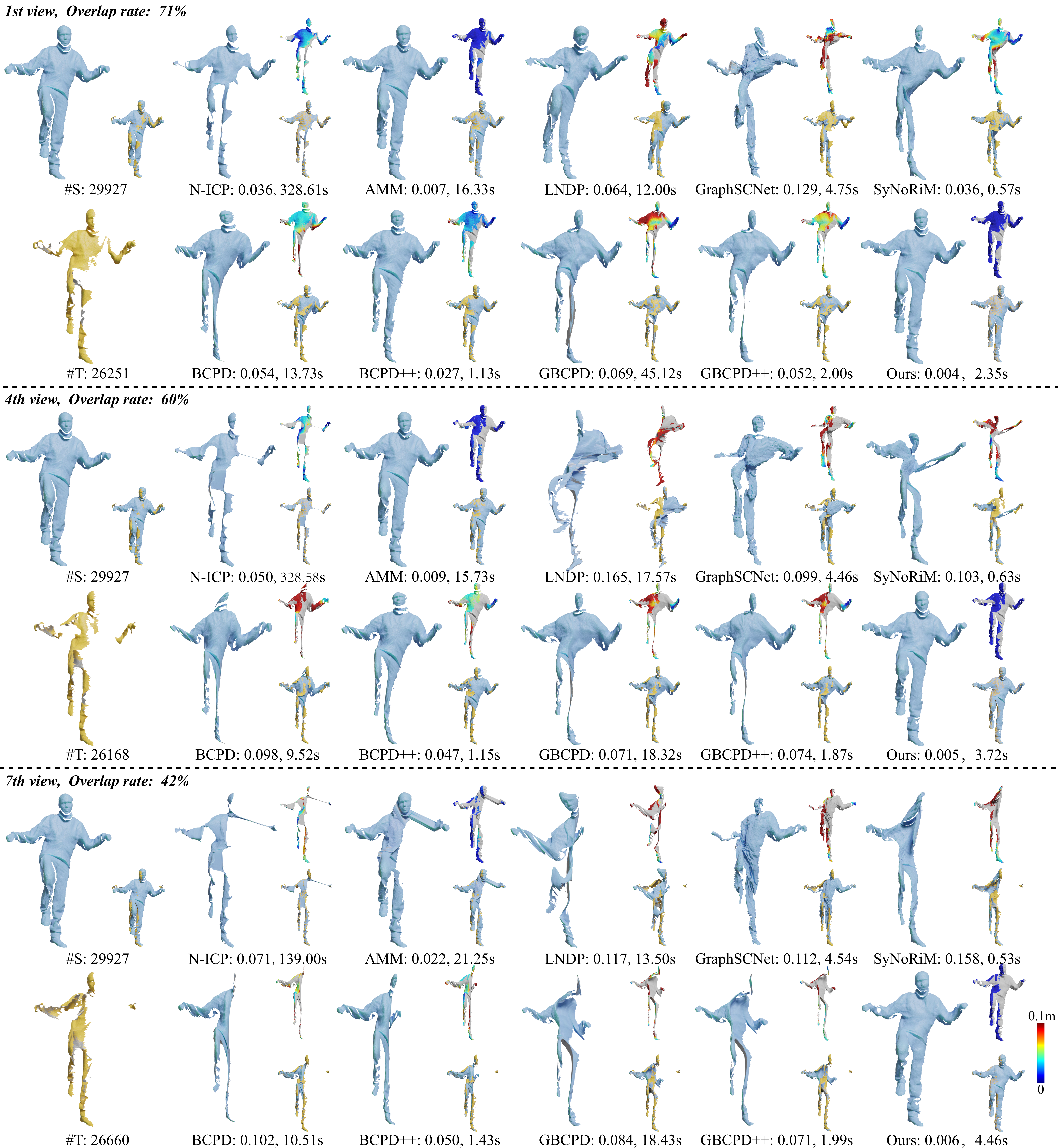}
	\caption{{The results obtained from different methods on problem instances with partial overlaps from~\cite{vlasic2008articulated}. For each method, we show the deformed mesh (left), the alignment result (right-bottom), and an error map (right-top) that visualizes the distance between the ground-truth correspondences in the overlapping area defined by Eq.~\eqref{eq:overlap_rate}  (Points with no correspondence are marked in gray), and label the value of $\correrr$ and the computational time.}}
	\label{fig:human-partial}
\end{figure*}
\begin{figure*}[!htb]
    \centering
    \includegraphics[width=\textwidth]{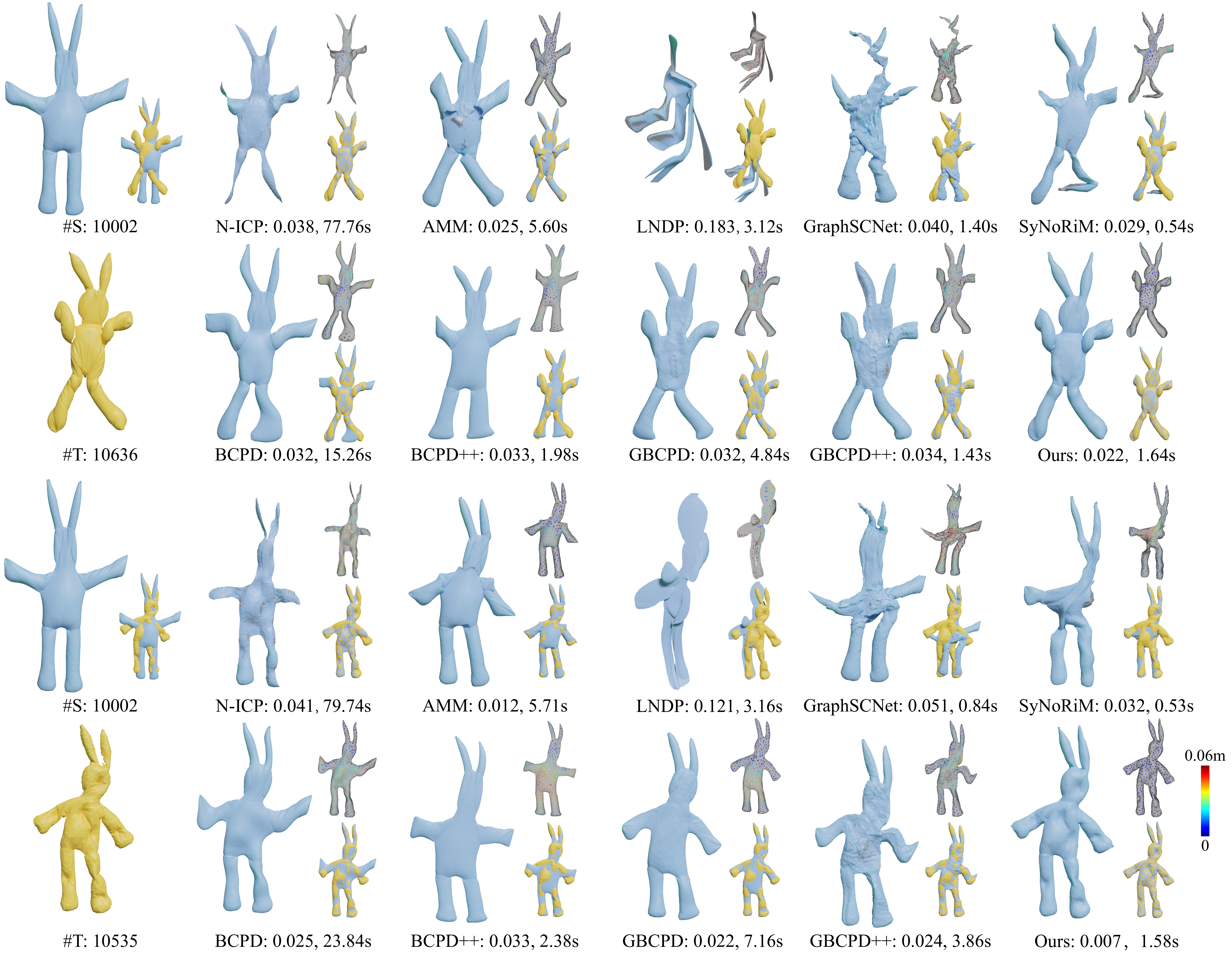}
    \caption{{The results obtained from different methods on two problem instances from the SHREC'20 non-rigid correspondence dataset~\cite{Dyke2020tracka}. For each method, we show the deformed mesh (left), the alignment result (right-bottom), and an error map (right-top) that visualizes the distance between the ground-truth correspondences (Points with no correspondences are marked in gray). We also label the value of $\correrr$ and the computational time for each method.}}
    \label{fig:shrec_compare}
\end{figure*}

\begin{table*}[!htb]
	\caption{
		{Mean values of $\correrr\downarrow$ ($\times 0.001$) and AUC$\uparrow$ on ``crane'' sequence from AMA dataset~\cite{vlasic2008articulated} with partial overlaps. Each column (View Angle Pair(the average overlap radio $o$ (\%))) shows the statistics for a particular pair of view angles.}}
	\label{Tab:partial_human}
	\setlength{\tabcolsep}{1.5pt}
    \renewcommand{\arraystretch}{0.8}
	\centering
	\begin{small}
		\begin{tabular}{ c | c  c  c  c  c c c c c}
			\toprule
          Method & 1 (76.30) & 2 (73.55) & 3 (70.94) & 4 (67.98) & 5 (64.43) & 6 (60.71) & 7 (56.60) & 8 (52.50) & 9 (48.61) \\
			\midrule
N-ICP~\cite{Amberg2007} & 3.04 / 0.71 &3.23 / 0.69 &3.28 / 0.68 &3.42 / 0.67 &3.64 / 0.65 &3.98 / 0.62 &4.26 / 0.60 &4.62 / 0.57 &4.79 / 0.56 \\
AMM~\cite{yao2022fast} & \underline{0.79} / \underline{0.92}  & \underline{0.79} / \underline{0.92}  & \underline{0.83} / \underline{0.92}  & \underline{0.85} / \underline{0.92}  & \underline{1.05} / \underline{0.90}  & \underline{0.99} / \underline{0.90}  & \underline{1.17} / \underline{0.89}  & \underline{1.43} / \underline{0.87}  & \underline{2.00} / \underline{0.83}  \\
BCPD~\cite{hirose2021bayesian} & 6.49 / 0.51  & 7.12 / 0.45  & 7.80 / 0.41  & 9.78 / 0.29  & 10.68 / 0.24  & 11.37 / 0.22  & 12.34 / 0.20  & 12.41 / 0.18 
 & 13.36 / 0.17  \\
BCPD++~\cite{hirose2020acceleration} & 3.30 / 0.70  & 3.54 / 0.68  & 3.70 / 0.66  & 4.29 / 0.61  & 4.50 / 0.59  & 4.85 / 0.58  & 6.02 / 0.47  & 6.48 / 0.43  & 7.81 / 0.38  \\
GBCPD~\cite{Hirose2022geodesic} & 7.59 / 0.43  & 9.02 / 0.34  & 9.68 / 0.32  & 9.67 / 0.30  & 10.33 / 0.27  & 10.57 / 0.25  & 10.88 / 0.24  & 11.39 / 0.23  & 11.96 / 0.22  \\
GBCPD++~\cite{Hirose2022geodesic} & 7.30 / 0.41  & 8.09 / 0.37  & 8.34 / 0.34  & 8.52 / 0.34  & 9.61 / 0.29  & 9.86 / 0.27  & 10.10 / 0.26  & 10.60 / 0.25  & 11.21 / 0.23  \\
GraphSCNet~\cite{qin2023deep} & 11.55 / 0.21  & 11.78 / 0.19  & 12.34 / 0.17  & 12.91 / 0.17  & 13.23 / 0.18  & 12.78 / 0.17  & 13.60 / 0.16  & 13.21 / 0.15  & 12.88 / 0.15  \\
LNDP~\cite{li2022DeformationPyramid} & 9.51 / 0.44  & 11.67 / 0.39  & 10.85 / 0.38  & 14.28 / 0.32  & 14.33 / 0.35  & 13.76 / 0.34  & 17.33 / 0.28  & 17.62 / 0.30  & 13.73 / 0.35  \\
SyNoRiM~\cite{huang2023synorim} & 3.89 / 0.65  & 4.42 / 0.60  & 5.01 / 0.57  & 4.79 / 0.56  & 6.41 / 0.49  & 7.35 / 0.45  & 6.85 / 0.46  & 8.93 / 0.40  & 
11.17 / 0.35  \\\midrule
Ours & \textbf{0.41} / \textbf{0.96}  & \textbf{0.42} / \textbf{0.96}  & \textbf{0.42} / \textbf{0.96}  & \textbf{0.44} / \textbf{0.96}  & \textbf{0.46} / \textbf{0.96}  & \textbf{0.48} / \textbf{0.95}  & \textbf{0.49} / \textbf{0.95}  & \textbf{0.50} / \textbf{0.95}  & \textbf{0.57} / \textbf{0.95}  \\
			\bottomrule
		\end{tabular}
	\end{small}
\end{table*}

\begin{figure*}[t]
	\centering
	\includegraphics[width=0.95\textwidth]{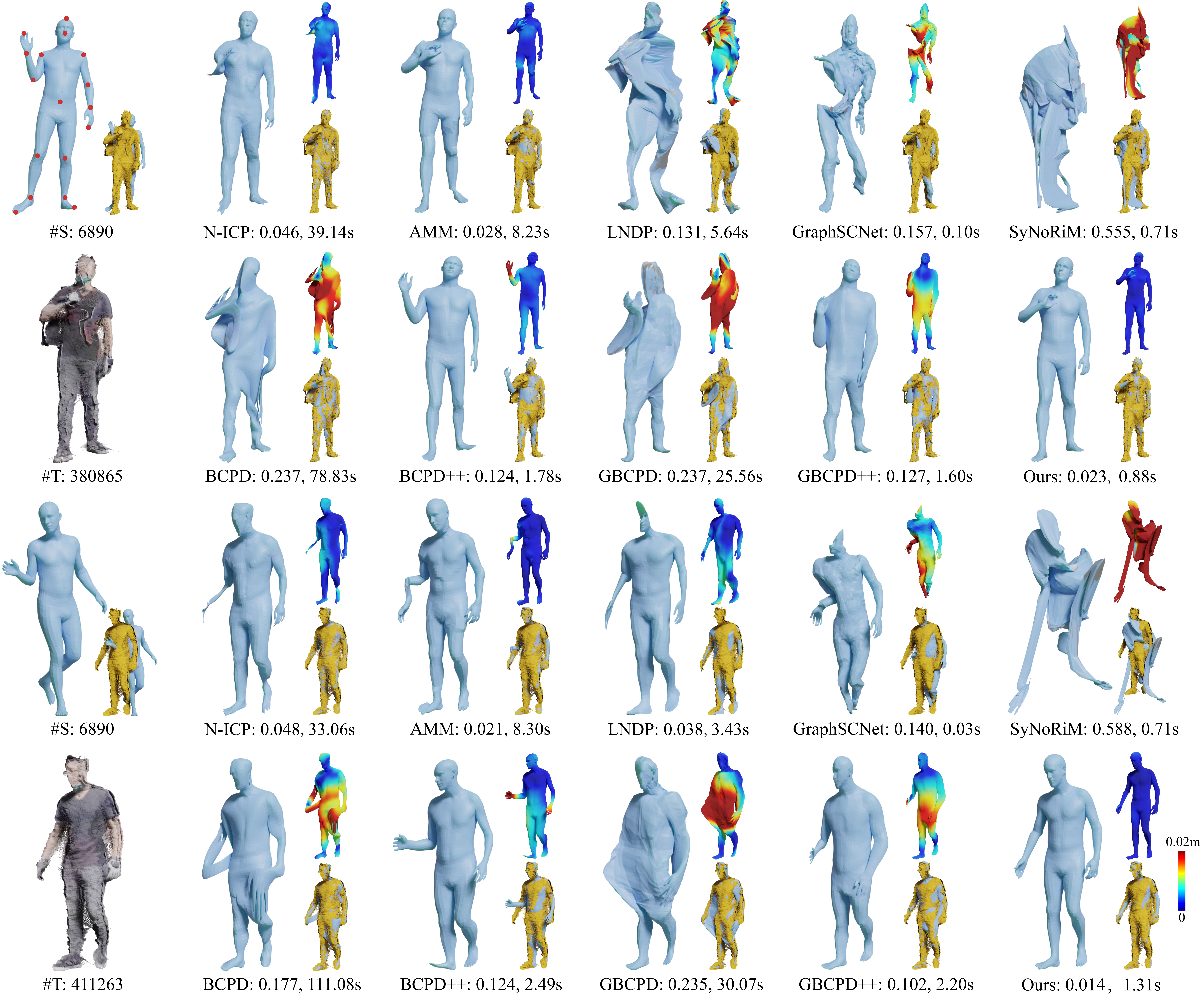}
	\caption{{The results obtained from different methods on two problem instances {from} the BEHAVE dataset~\cite{bhatnagar22behave}. For each method, we show the deformed mesh (left), the alignment results (right-bottom), and an error map (right-top) that visualizes the distance between each point and the ground-truth positions (right-top). We also label the value of RMSE and the computational time. The red dots on the source surface in the top-left corner {mark} the locations of the landmark points.}}
	\label{fig:behave}
\end{figure*}

\begin{figure}[t!]
	\centering
	\includegraphics[width=\columnwidth]{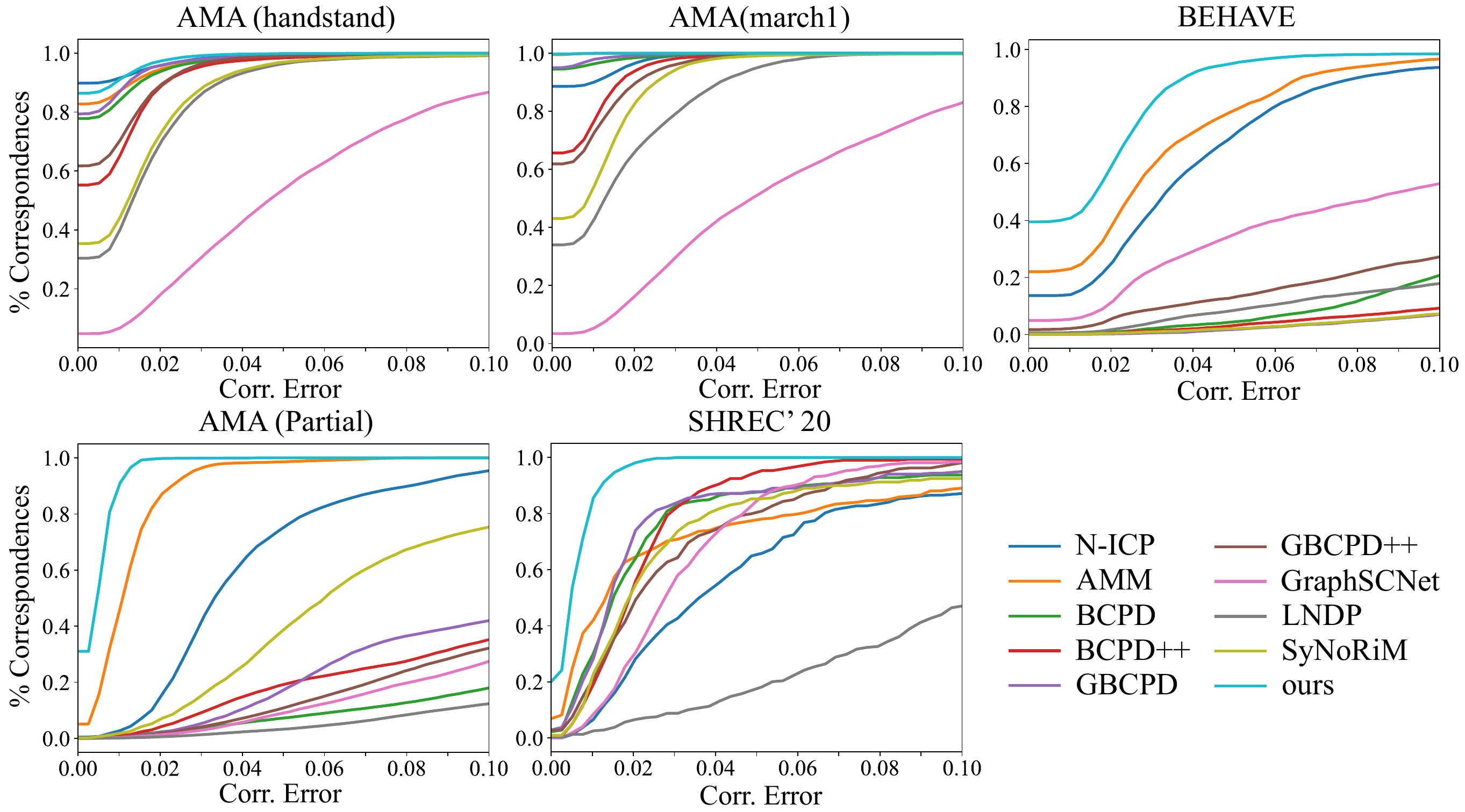}
	\caption{{The cumulative curves for the correspondence error $\correrr$ of different methods on various datasets.}}
	\label{fig:auc_curve}
\end{figure}

\begin{figure*}[!t]
	\centering
\includegraphics[width=\textwidth]{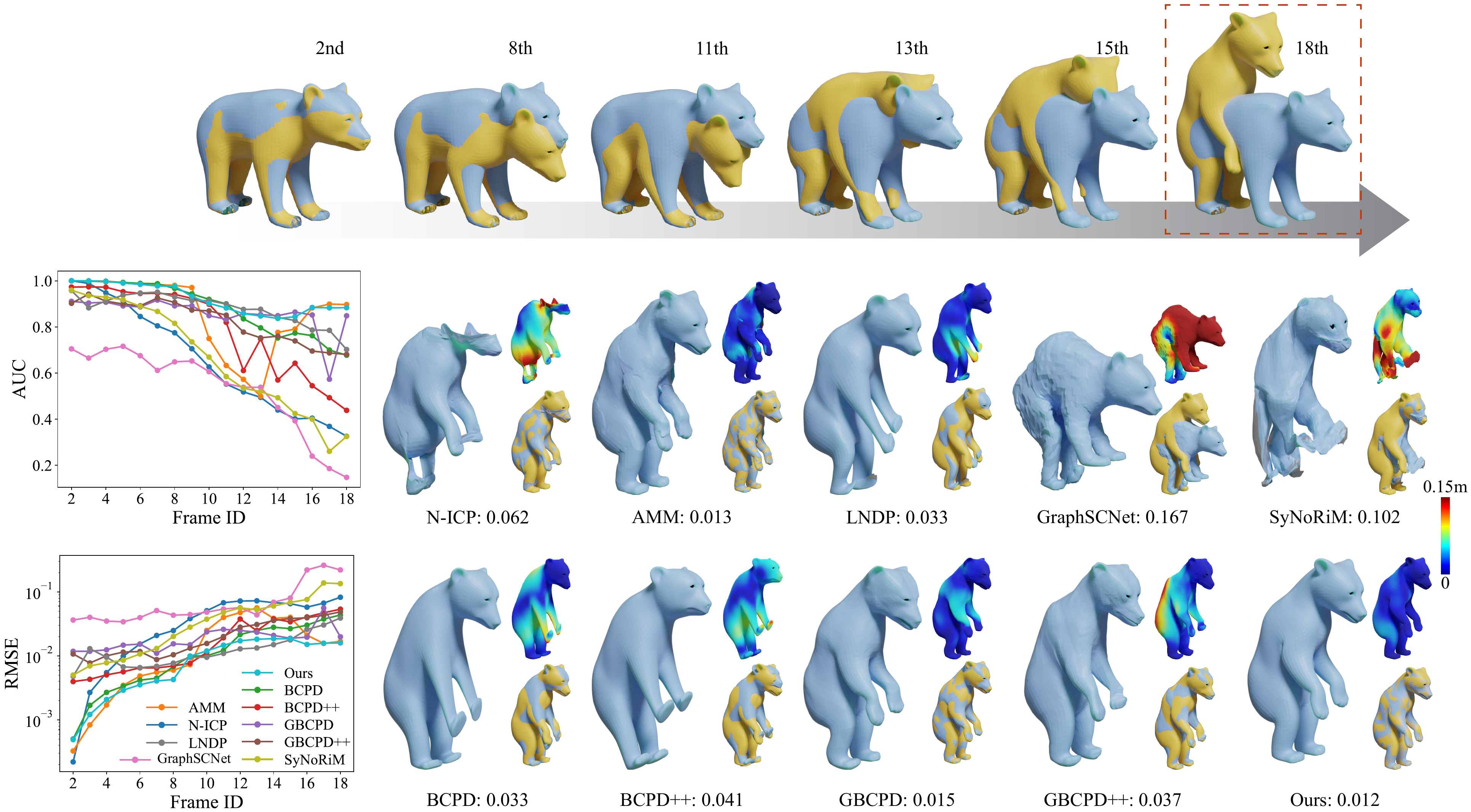}
	\caption{{The results obtained from different methods on DT4D data~\cite{li20214dcomplete} with gradually increasing deformation differences. The top shows the source and target models for several frames. The line graphs in the lower left corner show the results of AUC $\uparrow$ and RMSE $\downarrow$ changing with the indices of the frame respectively. The result of the 18th frame is visualized in the lower right corner.  For each method, we show the deformed mesh (left), alignment result (right-bottom), and an error map (right-top) that visualizes the distance between the ground-truth correspondences, as well as label the RMSE}.}
	\label{fig:dt4d}
\end{figure*}
\begin{figure*}[t]
	\centering
	\includegraphics[width=\textwidth]{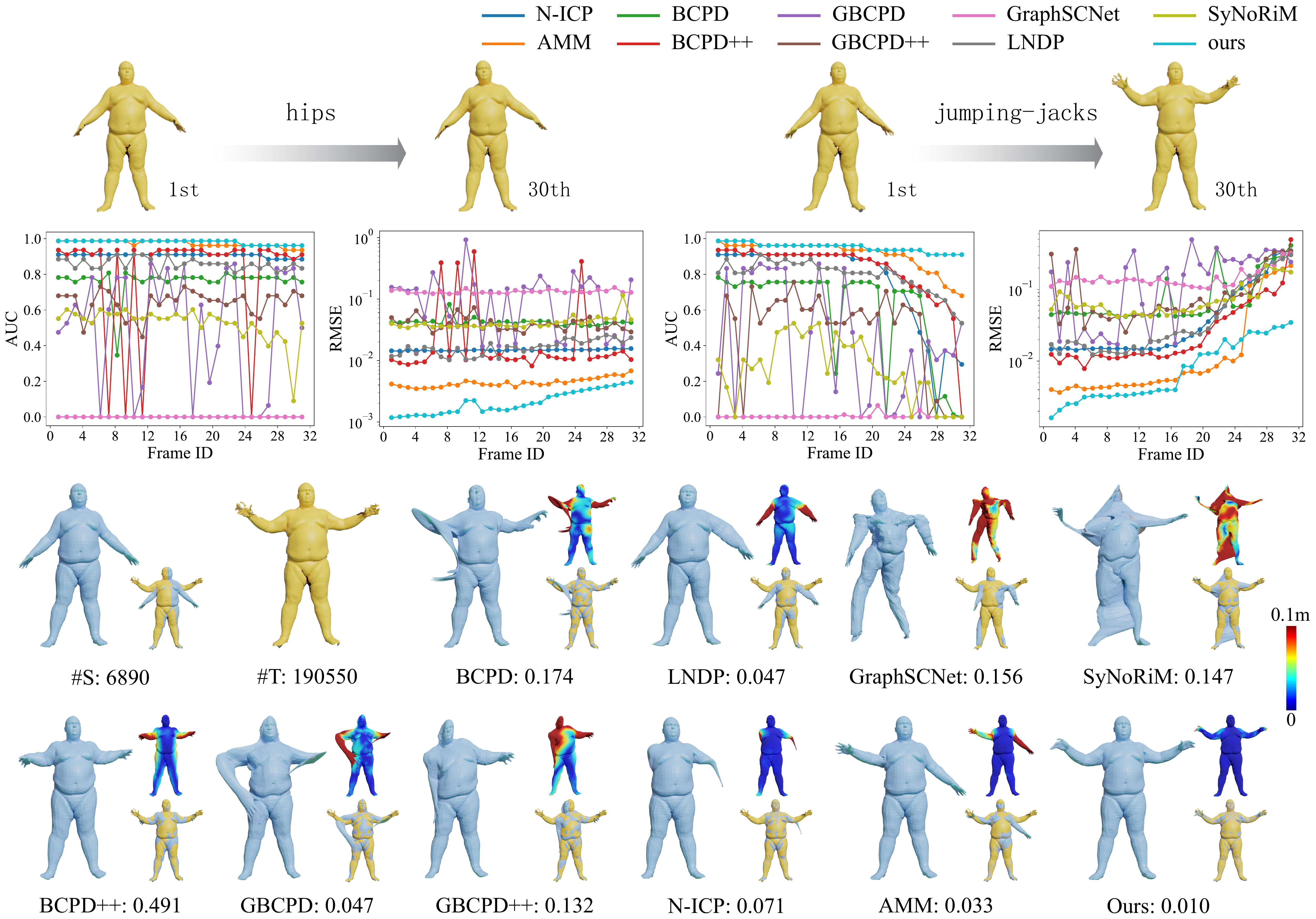}
	\caption{{The results obtained from different methods on DFAUST dataset~\cite{bogo2017dfaust} with gradually increasing deformation difference. The line graphs in the top show the results of AUC $\uparrow$ and RMSE $\downarrow$ changing with the indices of the frame respectively in ``hips'' and ``jumping-jacks'' sequences. The result of the 29th frame in ``jumping-jacks'' is visualized in the bottom. For each method, we show the deformed mesh (left), alignment result (right-bottom), and an error map (right-top) that visualizes the distance between the ground-truth correspondences, as well as label the RMSE.}}
	\label{fig:dfaust-bigmotion}
\end{figure*}

\begin{figure*}[!t]
	\centering
	\includegraphics[width=\textwidth]{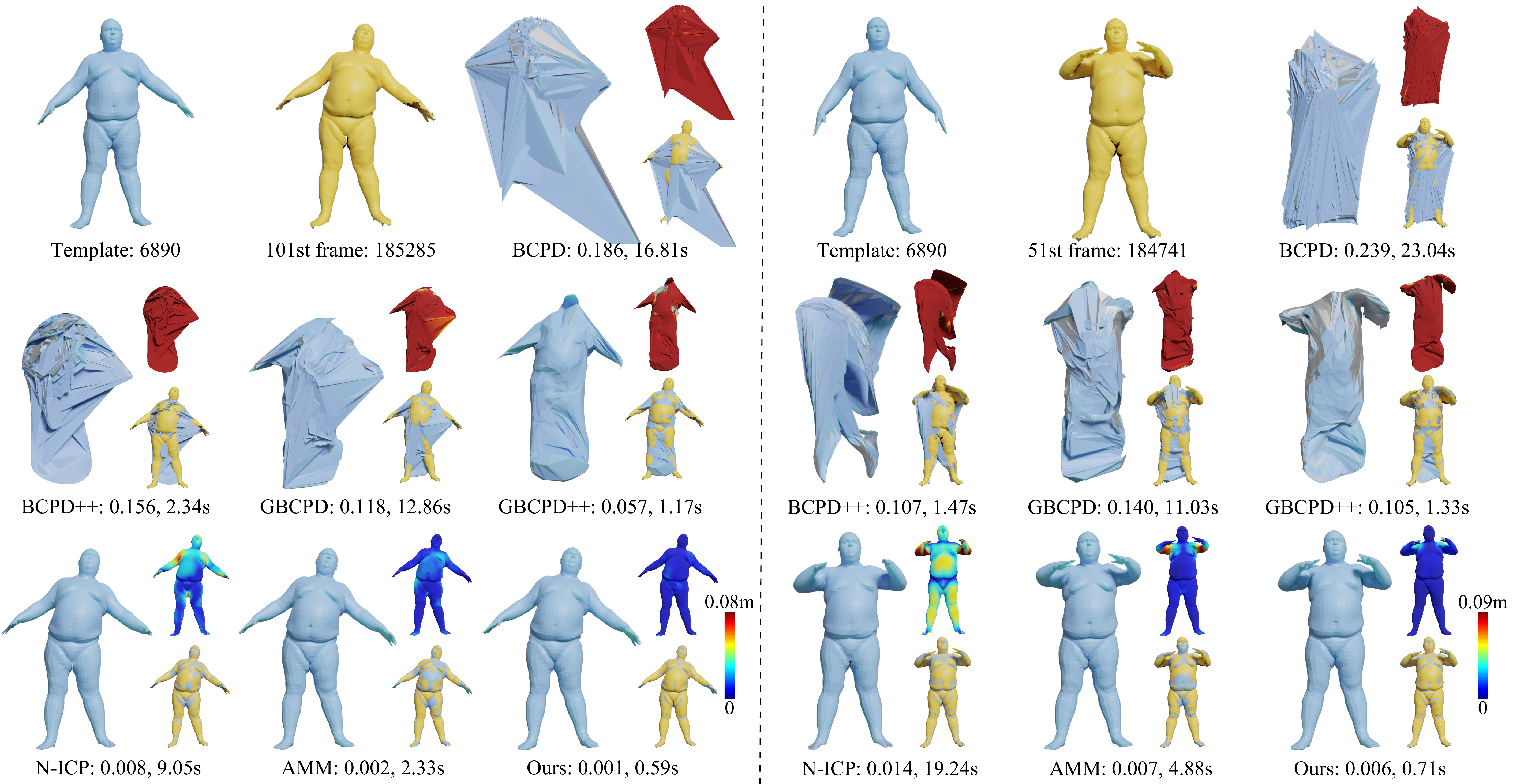}
	\caption{{Results from different methods on two problem instances from ``hips''(left) and ``chicken-wings'' (right) of the DFAUST dataset~\cite{bogo2017dfaust}. For each method, we show the deformed mesh (left), the alignment result (right-bottom), and an error map (right-top) that visualizes the distance between each point and its ground-truth positions, and label the value of $\text{RMSE}$ and the computational time.}}
	\label{fig:dfaust}
\end{figure*}

\mypara{Clean data}
We evaluated different registration methods using the AMA dataset~\cite{vlasic2008articulated} to assess their performance in continuous sequence scenarios. 
The dataset consists of 10 sequences of human continuous motion captured from real-world scenarios, and data have been processed into triangular meshes with the same connectivity structure in each sequence.
Following~\cite{yao2022fast}, we focused on the ``handstand'' and ``march1'' sequences from~\cite{vlasic2008articulated}. For each sequence, we selected 50 pairs of models to test, considering the $i$-th mesh as the source model and the ($i$+2)-th mesh as the target model, where $8\leq i < 48$. 

In Tab.~\ref{Tab:clean_small}, we present the average values of {RMSE, $\correrr$, AUC} and average computation time for each sequence. Additionally, we provide visualizations of two specific cases in Fig.~\ref{fig:human_clean} {and the cumulative curves in Fig.~\ref{fig:auc_curve}}. From the results, it can be observed that our method achieves the {highest accuracy in terms of RMSE, $\correrr$, and AUC} among all compared methods. Furthermore, the average computation time of our method is the second shortest and only longer than SyNoRiM, which is a learning-based method with GPU acceleration.

\mypara{Partial overlaps} 
In practice, many non-rigid registration problems involve surface pairs with partial overlaps that increase the difficulty of registration.
To evaluate the methods on such data, we used the ``crane'' sequence from the AMA dataset~\cite{vlasic2008articulated} as test cases. 
We first selected ten pairs of meshes $\{(m_j^s, m_j^t) \mid j = 1, \ldots, 10\} $ from the crane sequence, each consisting of two adjacent frames in the sequence.
For each mesh pair, we derived nine pairs of point clouds by simulating depth cameras from a fixed view angle $v^{s}$ for the source mesh $m_j^s$, and from nine different view angles for angle $\{v^{t}_1,...,v^{t}_9\}$ for the target mesh $m_j^t$ with increasing deviation from the source view angle.
This results in nine pairs of point clouds for each mesh pair, with decreasing overlap ratios.
we report the average values of the following overlap ratio $o$ for all pairs using the same view angles:
\begin{equation}
\label{eq:overlap_rate}
o = \frac{|\{\mathbf{v}_i \mid \|\newpos{\mathbf{v}}_i^\ast - \mathbf{u}_{\corresidxiter{i}{\ast}}\| < \overline{l}_t / \sqrt{3}\}|}{|\mathcal{V}|},
\end{equation}
where $\newpos{\mathbf{v}}_i^\ast$ is $\mathbf{v}_i$'s position  under the ground-truth deformation,  $\mathbf{u}_{\corresidxiter{i}{\ast}}$ is the closest target point to $\newpos{\mathbf{v}}_i^\ast$, and $\overline{l}_t$ is the average distance between neighboring points on the target shape. That is, $o$ represents the proportion of source points whose distance to the target shape, under the ground-truth deformation, is smaller than a threshold related to the sampling density.
Based on the results shown in Tab.~\ref{Tab:partial_human}, {Fig.~\ref{fig:human-partial} and Fig.~\ref{fig:auc_curve}}, it is evident that the proposed method achieved the highest matching accuracy among all the compared methods.

\begin{table}[t]
	\caption{{Mean values of $D^{\text{T-S}}_{\pp} (\times 10^{-5})$, $D^{\text{T-S}}_{\ppl}(\times 10^{-5})$, $\correrr$ ($\times 0.01$), AUC and average computational time (s) using different methods on the SHREC'20 non-rigid correspondence dataset~\cite{Dyke2020tracka}.}}
	\label{Tab:SHERC20a}
		\setlength{\tabcolsep}{3pt}
        \renewcommand{\arraystretch}{0.8}
	\centering
	\begin{small}
		\begin{tabular}{ c | c  c c c  c}
                \toprule
			\multicolumn{1}{c|}{Method} & $D^{\text{T-S}}_{\pp} \downarrow$ & $D^{\text{T-S}}_{\ppl} \downarrow$ & $\correrr\downarrow$ & AUC $\uparrow$ & Time \\
            \midrule
N-ICP~\cite{Amberg2007} & 66.43 & 37.12 & 4.59 & 0.58 &  79.59 \\
AMM~\cite{yao2022fast} & 16.81 & 13.05 & 2.70 & 0.76 &  5.95 \\
BCPD~\cite{hirose2021bayesian} & 6.81 & 4.56 & 3.23 & 0.69 &  16.80 \\
BCPD++~\cite{hirose2020acceleration} & 10.11 & 6.70 & 3.26 & 0.68 &  2.05 \\
GBCPD~\cite{Hirose2022geodesic} & \underline{1.07} & 0.58 & \underline{2.79} & \underline{0.73} &  7.17 \\
GBCPD++~\cite{Hirose2022geodesic} & 1.12 & \underline{0.56} & 2.98 & 0.72 &  1.84 \\
GraphSCNet~\cite{qin2023deep} & 37.10 & 23.99 & 5.33 & 0.52 &  \underline{1.59} \\
LNDP~\cite{li2022DeformationPyramid} & 203.71 & 114.46 & 14.11 & 0.23 &  3.06 \\
SyNoRiM~\cite{huang2023synorim} & 47.03 & 28.27 & 8.58 & 0.50 &  \textbf{0.54} \\ \midrule
Ours & \textbf{0.41} & \textbf{0.10} & \textbf{1.47} & \textbf{0.86} &  1.69 \\
            \bottomrule
		\end{tabular}
	\end{small}
\end{table}

In addition, we utilized the models from the SHREC'20 dataset~\cite{Dyke2020tracka} to evaluate the performance of the registration methods. The dataset consists of a complete mesh and 11 partial meshes of a real-world object, where the partial meshes exhibit diverse shapes with fine details, resulting from different deformations (stretch, indent, twist and inflate) and different filling materials. We considered the full mesh as the source surface, and the partial meshes as the target surfaces. 
We compute the $\correrr$ with the provided sparse ground-truth correspondences.
To further evaluate the errors of dense points, we calculate the point-to-point distance $D_{\pp}^{\text{T-S}}$ and the point-to-plane distance $D_{\ppl}^{\text{T-S}}$ from the partial meshes to the deformed source surfaces.
Tab.~\ref{Tab:SHERC20a} shows the numerical results, and Fig.~\ref{fig:shrec_compare} visualizes two specific cases.
We can see that our proposed method achieves notably higher accuracy compared to other methods in the evaluation. This superior performance can be attributed to the high degrees of freedom provided by our dense deformation field, as well as the reweighting scheme that can effectively handle partial overlaps.
Moreover, our method achieves the best speed among optimization-based methods, and only falls behind the GPU-accelerated learning-based methods SyNoRiM and GraphSCNet.

\mypara{Noisy data}
We assessed our method's performance on noisy data using the BEHAVE dataset~\cite{bhatnagar22behave}, which depicts human-object interactions and includes typical outliers produced by depth sensors. From this dataset, we selected the ``Data01'' sequence, containing 35 distinct motion sequences of an individual. Each sequence in BEHAVE provides frame-by-frame human body point clouds and their corresponding fitted SMPL~\cite{SMPL2015} mesh.  In total, we constructed 1586 test cases, using the SMPL mesh of the $i$-th frame as the source model, the point cloud of the $(i+1)$-th frame as the target model, and the SMPL mesh of the $(i+1)$-th frame as the ground-truth mesh for error calculation. A significant challenge arose from the large inter-frame differences within each sequence (as processed by~\cite{bhatnagar22behave}, see Fig.~\ref{fig:behave}), which impacted the performance of all evaluated methods. To mitigate this, we manually annotated 17 landmark corresponding points {(marked on the SMPL mesh and transferred to raw point clouds.)} and integrated a landmark term similar to that proposed in~\cite{Amberg2007} into the registration process. These landmarks were utilized by our method as well as all other compared approaches, with the exception of the BCPD-family of methods, which do not accommodate landmark inputs. As shown in Table~\ref{tab:behave},  Fig.~\ref{fig:behave} and Fig.~\ref{fig:auc_curve}, the comparison results demonstrate that our method outperforms existing techniques.

\begin{table}[t]
	\caption{{Mean values of $\text{RMSE}$($\times 0.01$), $\correrr$($\times 0.01$), AUC and average computational time (s) using different methods on the BEHAVE dataset~\cite{bhatnagar22behave}.}}
	\label{tab:behave}
		\setlength{\tabcolsep}{3pt}
        \renewcommand{\arraystretch}{0.8}
	\centering
	\begin{small}
		\begin{tabular}{ c | c  c c c}
                \toprule
			\multicolumn{1}{c|}{Method} & $\text{RMSE} \downarrow$ & $\correrr \downarrow$ & AUC $\uparrow$ & Time \\
            \midrule
N-ICP~\cite{Amberg2007} &\underline{4.04} & \underline{4.51} & 0.66 & 37.15 \\
AMM~\cite{yao2022fast}  &4.11 & 4.73 & \underline{0.68} & 8.32 \\
BCPD~\cite{hirose2021bayesian}  &23.53 & 33.89 & 0.17 & 82.34\\
BCPD++~\cite{hirose2020acceleration} & 15.94 & 21.22 & 0.40 & 2.52\\
GBCPD~\cite{Hirose2022geodesic}  & 26.99 & 36.93 & 0.16 & 47.76\\
GBCPD++~\cite{Hirose2022geodesic}  &18.12 & 25.12 & 0.31 & 2.14\\
GraphSCNet~\cite{qin2023deep}  & 16.19 & 20.76 & 0.22 & \textbf{0.04}\\
LNDP~\cite{li2022DeformationPyramid}  & 10.81 & 15.69 & 0.42 & 2.17 \\
SyNoRiM~\cite{huang2023synorim} & 74.32 & 56.86 & 0.03 & \underline{0.71} \\ \midrule
Ours  & \textbf{2.37} & \textbf{2.62} & \textbf{0.81} & 1.13\\ 
            \bottomrule
		\end{tabular}
	\end{small}
\end{table}

\mypara{Large deformation}
To verify the robustness of our method to the magnitude of deformation, we conducted evaluations using data with gradually increasing deformation differences. Specifically, we selected a sequence (``bear3EP\_Agression'') from the DT4D dataset~\cite{li20214dcomplete} {and two sequences (``hips'' and ``jumping-jacks'') from the DFAUST dataset~\cite{bogo2017dfaust}} to test our method. DT4D is a synthetic dataset that includes continuous motion sequences of various animals and humanoids.  
The shapes in each sequence have ground-truth correspondences. We set the first mesh as the source model and the $i$-th mesh as the target model ($i$=2,...,18) for this sequence. 
DFAUST contains multiple continuously moving human scans and the corresponding SMPL mesh~\cite{SMPL2015}. We set the SMPL mesh corresponding to the first frame as the source model and the $i$-th scan as the target model ($i$=2,...,30) to test the performance. The corresponding $i$-th SMPL mesh is used to obtain ground-truth correspondences for calculating the correspondence error $\correrr$.
From  Fig.~\ref{fig:dt4d} and Fig.~\ref{fig:dfaust-bigmotion}, we can observe that as the deformation increases, our method achieves higher or comparable accuracy compared to existing methods.

\mypara{Non-rigid tracking}
To evaluate the practicality of the proposed method, we conducted non-rigid tracking experiments on real-world data, including the DFAUST dataset~\cite{bogo2017dfaust}, the DeepDeform dataset~\cite{bozic2020deepdeform} and the face sequence from~\cite{guo2015robust}. 
For a real scan sequence $\{\mathcal{T}_1, ..., \mathcal{T}_N\}$, we choose a template model $\mathcal{S}_0$, and register it to $\mathcal{T}_1$ and obtain the deformed model $\newpos{\mathcal{S}}_1$. Then we deform $\newpos{\mathcal{S}}_{i}$ to align with $\mathcal{T}_{i+1}$($i=1,...,N-1$) and obtain a mesh sequence $\{\newpos{\mathcal{S}_{i}}\}_{i=1}^{N}$ aligned with $\{\mathcal{T}_{i}\}_{i=1}^{N}$. 
Due to the accumulation of errors in the registration process, this setting increases the difficulty of the problem and makes it easier to demonstrate the performance of different methods. 
Since there is usually a small difference between two adjacent frames, it is generally solved using optimization-based methods. {Since learning-based methods excel in acquiring global correspondences but often struggle with capturing intricate details and exhibit limited stability in continuous registration,} we ignore the comparisons with learning-based methods in these examples.

We utilized the template mesh from the SMPL model matching the first frame as $\mathcal{S}_0$, and deformed it to align the following $i$-th human scan, where $1\leq i < 200$. We also used the template meshes of the SMPL model matched to these scans as the ground-truth meshes. 
We chose four sequences: ``hips'', ``jiggle-on-toes'', ``jumping-jacks'' and ``chicken-wings'' of the identity labeled as 50002 to test, where the deformations of the latter two sequences are much larger than those of the first two sequences. 
We compute {RMSE} between the registered results with the ground-truth meshes, and show their average values and the average running time in Tab.~\ref{Tab:dfaust}. We also provide visualizations of two cases in Fig.~\ref{fig:dfaust}.     

The face sequence from~\cite{guo2015robust} is represented as depth maps and captures facial expressions and muscle movements, which were acquired using a single depth sensor. We utilized the provided template model from~\cite{guo2015robust} as the template model $\mathcal{S}_0$, and deformed it to align with the continuous $299$ point clouds, which were obtained by converting depth maps.

\begin{table}[t]
	\caption{{Mean values of RMSE$(\times 0.01)$ and average computational time (s) using different methods on the DFAUST dataset~\cite{bogo2017dfaust}. Due to error accumulation, N-ICP only obtained valid results on some sequences (195 for ``jiggle-on-toes'', 74 for ``jumping-jacks'', and 97 for ``chicken-wings''). We only make statistics on these valid values.}}
	\label{Tab:dfaust}
		\setlength{\tabcolsep}{1pt}
	\centering
    \renewcommand{\arraystretch}{0.8}
	\begin{small}
		\begin{tabular}{ c| c c c c}
            \toprule 
            Method & hips & jiggle-on-toes & jumping-jacks & chicken-wings \\
            \midrule
N-ICP & 3.70 / 8.54 & 1.73  / 12.21 & 10.16 / 14.18 & 4.74 / 14.29 \\
AMM & \underline{0.97} / {2.18} & \underline{0.93} / 2.60 & \underline{4.50} / {4.51} & \underline{2.49} / 4.32 \\
BCPD & 73.11  / 34.31 & 77.24  / 22.11 & 84.56 / 34.27 & 74.53  / 46.46 \\
BCPD++ & 57.98 / 3.63 & 77.23 / 2.86 & 80.28  / 3.77 & 51.77 / 5.84 \\
GBCPD & 59.75  / 13.71 & 79.39 / 12.32 & 83.90 / 13.66 & 55.22 / 14.10 \\
GBCPD++ & 53.08 / \underline{1.45} & 44.95 / \underline{1.45} & 35.56  / \underline{1.38} & 36.85  / \underline{1.62} \\ \midrule
Ours & \textbf{0.55} / \textbf{0.53} & \textbf{0.92} / \textbf{0.58} & \textbf{3.51} / \textbf{0.64} & \textbf{2.11} / \textbf{0.63} \\
            \bottomrule
		\end{tabular}
	\end{small}
\end{table}

\begin{table*}[t]
	\caption{
		{Mean value of  $\correrr(\times 0.001) \downarrow$ and AUC $\uparrow$ using different variants of our methods on the AMA dataset~\cite{vlasic2008articulated} and the SHREC'20 track non-rigid correspondence dataset~\cite{Dyke2020tracka}.}}
	\label{Tab:ablations}
	\setlength{\tabcolsep}{3.8pt}
	\centering
    \renewcommand{\arraystretch}{0.8}
	\begin{small}
		\begin{tabular}{ c | c c | c c |c |c }
			\toprule
            Variants & $E_{\alignm}^{C}$ / $E_{\alignm}$ & Robust Weights &  handstand & march1 & {SHREC'20~\cite{Dyke2020tracka}} &  Partial data from~\cite{vlasic2008articulated} \\
			\midrule
    With $E_{\pp}$ & $E_{\pp}$  & \checkmark & 2.91 / 0.77 &1.28 / 0.89 &2.81 / 0.74 & 2.42 / 0.77 \\
    With $E_{\ppl}$ & $E_{\ppl}$  & \checkmark &  0.56 / \underline{0.95} & \underline{0.09} / \underline{0.99} &1.76 / 0.83 &0.84 / 0.93  \\\midrule
    Welsch &  \checkmark  & Welsch & 0.44 / \textbf{0.96} &0.10 / 0.99 & \underline{1.53} / \underline{0.85} & \underline{0.58} / \underline{0.95} \\
    Huber  & \checkmark & Huber & \underline{0.41} / \textbf{0.96} & 0.19 / \underline{0.99} &1.88 / 0.84 &0.78 / 0.93 \\
    Geman McClure & \checkmark  & Geman McClure & 0.43 / \textbf{0.96}  & 0.21 / \underline{0.99} &1.90 / 0.84 &1.09 / 0.91 \\
    Without Weight & \checkmark  &  & 0.75 / \underline{0.95} &0.39 / \underline{0.99} &1.89 / 0.84 & 1.35 / 0.89 \\
    Hard Thres. & \checkmark  & Eq.~\eqref{eq:reliability-weight-hard-thres} & 0.63 / \underline{0.95} &0.25 / \underline{0.99} &1.93 / 0.83 & 1.34 / 0.89 
    \\\midrule
    Ours & \checkmark & \checkmark & \textbf{0.38} / \textbf{0.96} & \textbf{0.03} / \textbf{1.00} & \textbf{1.47} / \textbf{0.86} & \textbf{0.46} / \textbf{0.96} \\
			\bottomrule
		\end{tabular}
	\end{small}
\end{table*}

\begin{figure}[t]
	\centering
\includegraphics[width=\columnwidth]{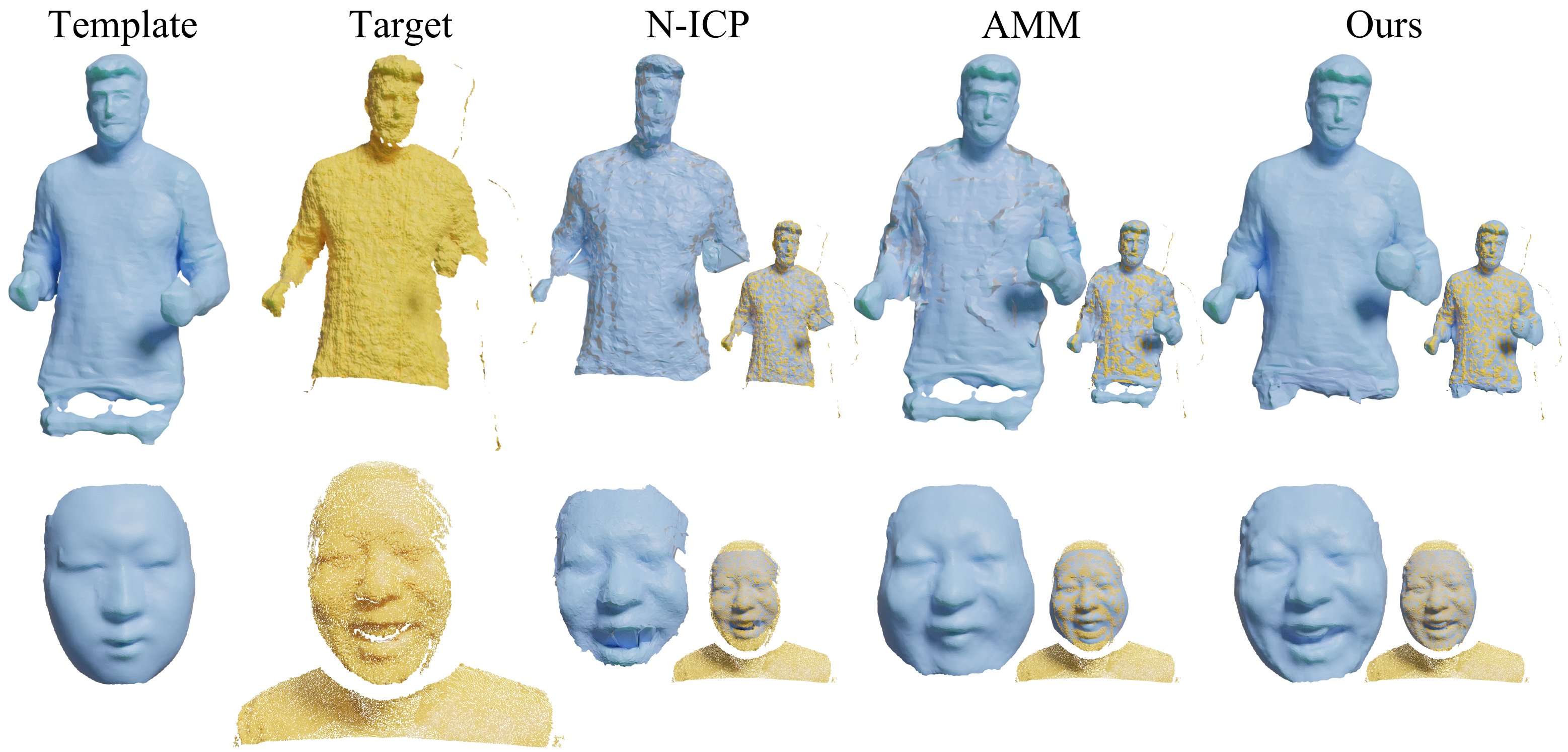}
	\caption{The results obtained from different methods on two problem instances from the DeepDeform dataset~\cite{bozic2020deepdeform}(top: 14th frame) and the face sequence from~\cite{guo2015robust}(bottom: 100th frame). For each method, we show the deformed mesh (left) and alignment result (right).}
	\label{fig:deepdeformface}
\end{figure}

The DeepDeform dataset~\cite{bozic2020deepdeform} is a real RGB-D video dataset, including various scenes such as humans, clothes, animals, etc. Since it does not provide well-defined template meshes, 
{we adopted the reconstructed mesh from~\cite{Cai2022NDR} as our template, first adjusting it to match the initial frame via a rigid transformation, and then deforming it to align with the subsequent 29 point clouds converted from the depth maps.} 
We notice that the BCPD-class methods are too ineffective, so we omit their results.
We show the 2 cases of registration results in Fig.~\ref{fig:deepdeformface} and we can observe that the proposed method is significantly better than other methods. It shows that our method can perform non-rigid registration more stably and reliably.
To more intuitively show the performance, we also render the frame-by-frame rendering results of different methods for these three datasets in the \textit{Supplementary Video}.

\begin{figure}[t]
	\centering
	\includegraphics[width=\columnwidth]{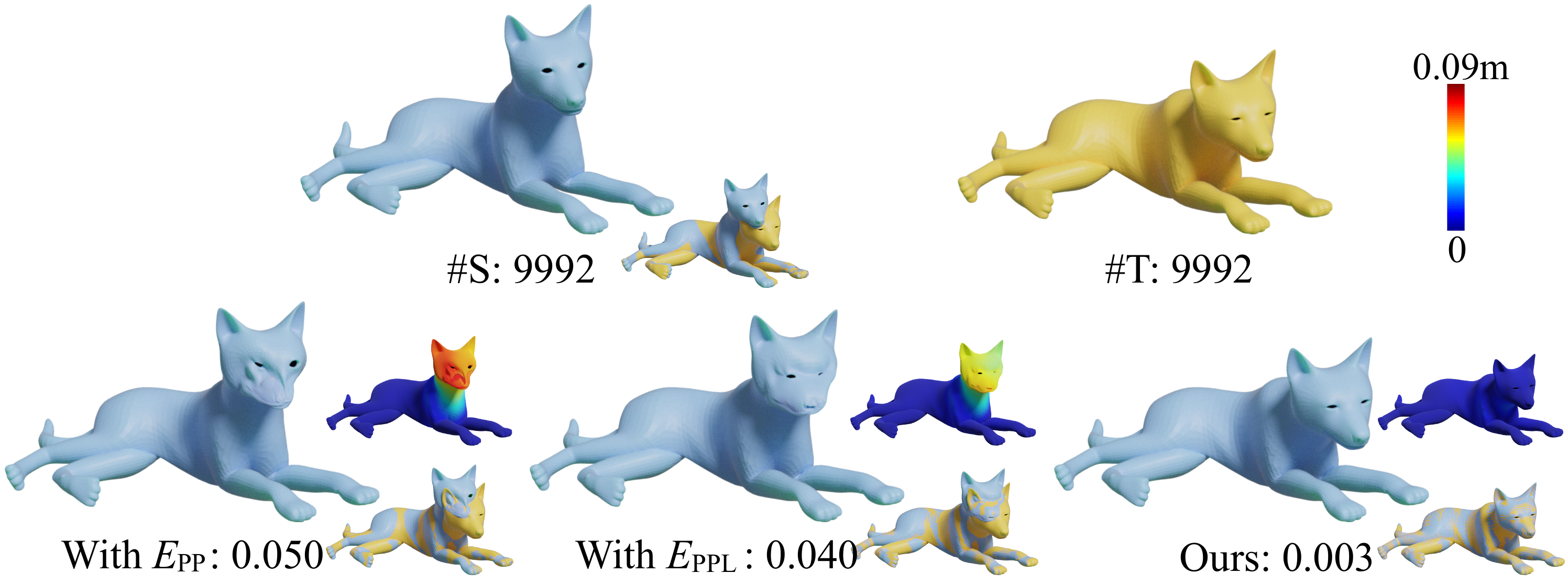}
	\caption{{Comparisons of our method and the variants with point-to-point distance and point-to-plane distance on a problem instance of ``doggieMN5$\_$Sleep'' from~\cite{li20214dcomplete}. For each variant, we show the deformed mesh (left), alignment result (right-bottom), and an error map (right-top) that visualizes the distance between the ground-truth correspondences, as well as label the {RMSE}.}}
	\label{fig:ablas-wosppl}
\end{figure}

\subsection{Effectiveness of Components}
\label{sec:ablations}

To measure the effectiveness of different components in our method, we conducted experiments by either removing a specific term or replacing it with an existing method. We compared these variants on the AMA dataset~\cite{vlasic2008articulated} with small differences (``handstand'' and ``march1'' sequences in Tab.~\ref{Tab:clean_small}) and partial overlaps (``crane'' sequence in Tab.~\ref{Tab:partial_human}). For the ``crane'' sequence, {we present the average values of the accuracy on all examples.} We also performed comparisons on the SHREC'20 non-rigid correspondence dataset~\cite{Dyke2020tracka} and the DT4D dataset~\cite{li20214dcomplete}. We show all numerical comparisons in Tab.~\ref{Tab:ablations} and visualized results in Figs.~\ref{fig:ablas-wosppl}, \ref{fig:ablas-adpweight} and \ref{fig:ablas-Rupdate}.

\mypara{Effectiveness of {symmetrized point-to-plane distance}}
We replaced 
the symmetrized point-to-plane 
distance metric $E_{\symm}$ in our target function by the point-to-point distance metric $E_{\pp}$ (\textit{With} $E_{\pp}$) and the point-to-plane distance metric $E_{\ppl}$ (\textit{With} $E_{\ppl}$) respectively while keeping other components and the reliable weights $\alpha_{i}^{(k)}$ the same. From Tab.~\ref{Tab:ablations} and Fig.~\ref{fig:ablas-wosppl},  
we observe that 
{the symmetrized point-to-plane distance} 
achieves better results than the variants based on $E_{\pp}$ and $E_{\ppl}$. We also show a test case on the DT4D dataset~\cite{li20214dcomplete}, which has more obvious visual differences due to its relatively large deformation.

\begin{figure}[t]
	\centering
	\includegraphics[width=\columnwidth]{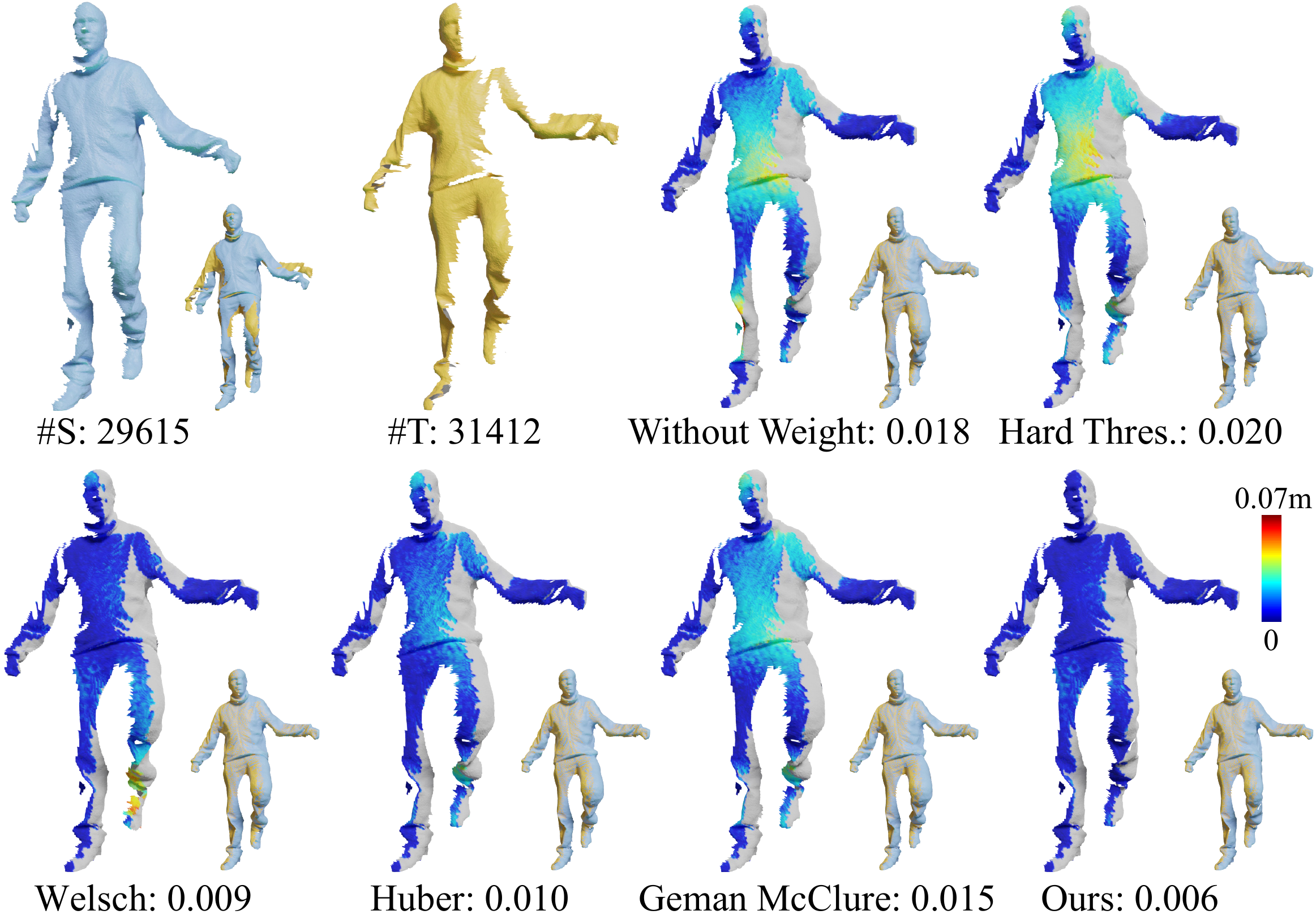}
	\caption{{Comparisons of our method and the variants with different weights for the alignment on a problem instance of partial overlapping data from ``crane'' sequence~\cite{vlasic2008articulated}. For each variant, we show the deformed mesh with an error map that visualizes the distance between the ground-truth correspondences (left) and alignment result (right-bottom), as well as label the RMSE.}}
	\label{fig:ablas-adpweight}
\end{figure}

\mypara{Effectiveness of the robust weights}
Moreover, to test the effectiveness of the adaptive robust weights, we compared with the following strategies: 
\begin{itemize}[leftmargin=*]
\item \textit{Without Weight}: setting $\alpha_i^{(k)}=1$ for all $k$ and $i$ (i.e., no weight);
\item \textit{Hard Thres.}:  setting 
\begin{equation}
\label{eq:reliability-weight-hard-thres}
\alpha_i^{(k+1)}=\left\{
\begin{aligned}
&0, & \text{ if } \|\newpos{\mathbf{v}}_i^{(k)}-\mathbf{u}_{\rho_i^{(k+1)}}\| > 3 \sigma \\ 
& & \text{ or }\newpos{\mathbf{n}}_i^{(k)}\cdot  \mathbf{n}^t_{\corresidx{i}^{(k+1)}}<0, \\
&1, & \text{Otherwise}.
\end{aligned}
\right.
\end{equation}
i.e., a hard thresholding of the weight based on the distance between the corresponding points. 
\end{itemize}
{We also evaluated a common strategy for enhancing robustness, which involves using established robust functions to measure the alignment error. Specifically, we replace $E_{\alignm}^i$ with
\[
\tilde{E}_{\alignm}^i = \beta_i\phi([(\widehat{\mathbf{n}}_i + \mathbf{n}_{\rho_i}^t)\cdot(\widehat{\mathbf{v}}_i - \mathbf{u}_{\rho_i})]), 
\]
where $\beta_i$ is a binary weight designed to filter out point pairs based on normal information; $\beta_i$ is set to 0 if $\widehat{\mathbf{n}}_i\cdot\mathbf{n}_{\rho_i'}^t<0$ and 1 otherwise. $\phi(\cdot)$ is a robust function, and we set it as Welsch loss, Huber loss and Geman-McClure loss respectively to conduct the experiments. These new optimization problems were solved using iteratively reweighted least squares (IRLS) algorithms, drawing on methodologies from~\cite{yao2022fast}\cite{bergstrom2014robust}\cite{adlerstein2025sandro}. }
From Tab.~\ref{Tab:ablations} and Fig.~\ref{fig:ablas-adpweight}, we can observe that our method with robust weights achieves the highest accuracy.

\begin{figure}[t]
	\centering
 \includegraphics[width=\columnwidth]{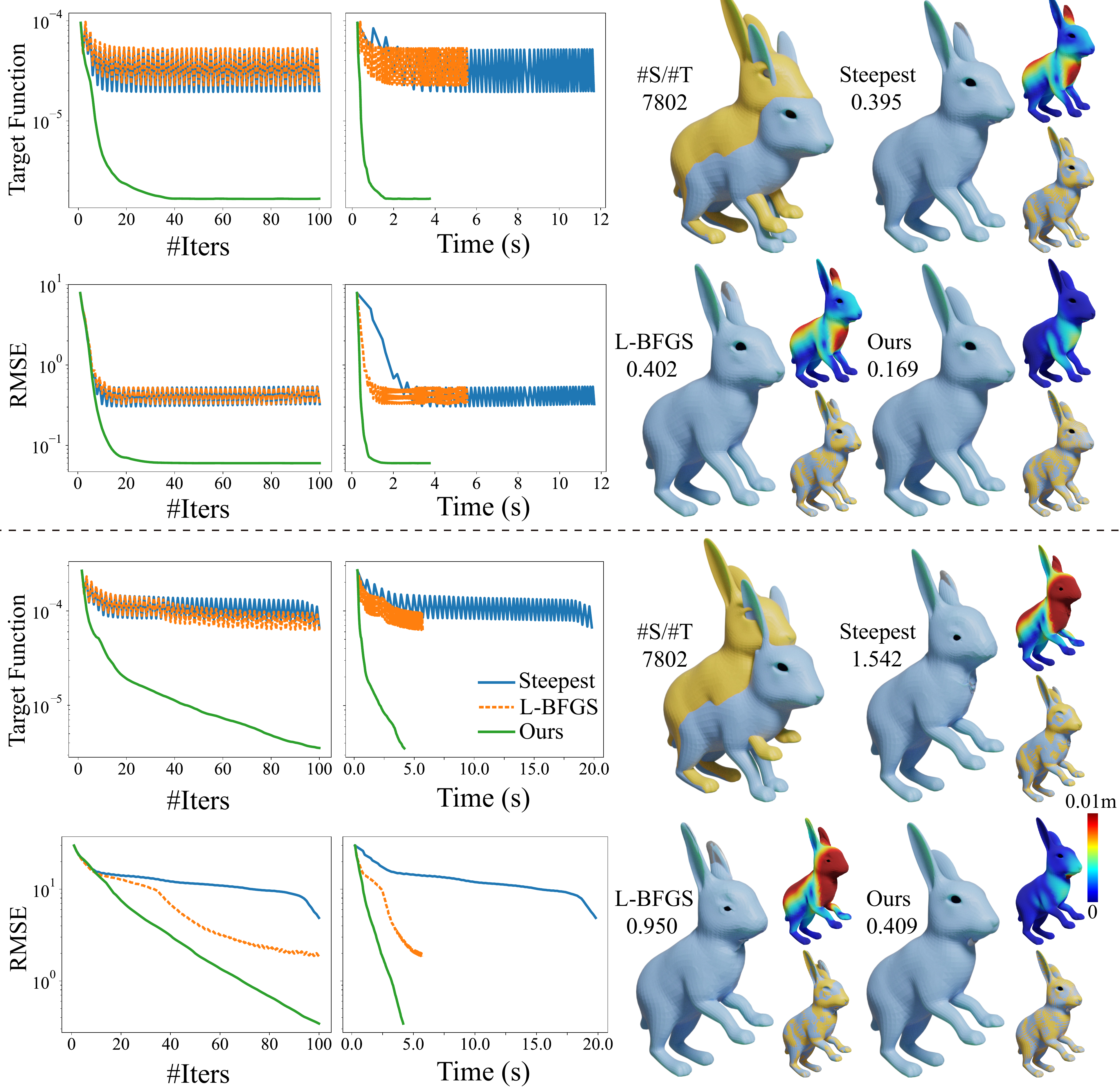}
	\caption{{Comparisons of our method and the variants of updating rotations with the steepest descent method or L-BFGS solver on two problem instances from ``bunnyQ$\_$Attack1'' sequence of~\cite{li20214dcomplete}. 
    For each instance, we show the curves of the objective function $E$ and RMSE changing with the number of iterations or time on the left. On the right, for each variant,  we show the deformed mesh (left), alignment result (right-bottom), and an error map (right-top) that visualizes the distance between the ground-truth correspondences, as well as label the RMSE ($\times 0.01$).}}
	\label{fig:ablas-Rupdate}
\end{figure}

\mypara{Efficiency of solving algorithms} 
To validate the effectiveness of our proposed solution for updating $\{\mathbf{R}_i\}$ in the optimization problem~\eqref{eq:Rproblem}, we compared it with gradient-based solvers that directly optimize the problem in ~\eqref{eq:Rproblem} by the iterative gradient-based optimization method: the steepest descent method and the L-BFGS algorithm. Since $\{\mathbf{R}_i\}$ needs to be in the rotation space, we transformed them in Lie algebra space $so(3)$ and solved them accordingly. Specifically, we replaced $\mathbf{R}_i\in\mathcal{R}$ with $\mathbf{r}_i=[r_1,r_2,r_3]^T\in so(3)$ via Rodrigues' rotation formula. 
By utilizing the transformed variables ${\mathbf{r}_i}$ as independent variables and removing the equality constraint in problem~\eqref{eq:Rproblem}, we can convert it into an unconstrained optimization problem. This allows us to employ optimization algorithms such as the steepest descent method or the L-BFGS algorithm to solve it.
To ensure optimal convergence of the algorithm, we set a maximum limit of $10^{3}$ iterations for solving each sub-problem of $\mathbf{r}_i$. Additionally, the algorithm is terminated prematurely if the difference in objective function values between two consecutive iterations falls below a threshold of $10^{-10}$. 
To eliminate potential discrepancies caused by adaptive weights, we set $\alpha_i^{(k)}=1$ for any $k$ and $i$ in this part.
Fig.~\ref{fig:ablas-Rupdate} displays the curves depicting the changes in the objective function values $E=E_{\alignm} + w_{\arap}E_{\arap}$ in Eq.~\eqref{eq:FineOptimization} and {RMSE} over the iterations or time. 
Due to the high degree of nonlinearity of the problem, gradient-based optimization methods are susceptible to getting trapped in local optima. 
It is important to note that the curve may exhibit fluctuations since using the closest point as the corresponding point does not necessarily result in a reduction of the symmetrized point-to-plane distance.
From Fig.~\ref{fig:ablas-Rupdate}, it is evident that our proposed method achieves stable and rapid convergence toward the optimal solution.  
\section{Limitations and Future Work}
Although our method has achieved good performance in many cases, 
it is important to note that our method primarily focuses on local non-rigid registration. By relying on the nearest point in Euclidean space for establishing correspondences, our method is limited by the initial spatial position. As a result, when significant differences exist between the source and target shapes, the incorporation of global information becomes crucial. This can be achieved through the integration of semantic clues, overall shape analysis, other global features, {and initial correspondences predicted by learning-based methods}. By incorporating global information, our method can potentially enhance its performance, robustness, and ability to handle larger deformations. Exploring the integration of global information represents a valuable direction for future improvements to our approach. {Additionally, our method does not currently account for changes in topology during deformation. For example, when a surface deforms and splits into two or more disconnected parts (as shown in Fig.~\ref{fig:deepdeformface}), our current approach is unable to handle such scenarios. A valuable direction for future work will be to develop techniques that can dynamically adjust the deformation graph and local connection relationships in response to these topological changes.}

\section{Conclusion}

We developed a novel optimization-based method for non-rigid surface registration, which offers several key advantages over existing approaches. Our method leveraged a symmetrized point-to-plane distance metric, resulting in a more precise alignment of geometric surfaces. By incorporating adaptive robust weights, our method effectively handles data with defects such as noise, outliers, or partial overlaps. To address the complexity of the objective function, we employed an alternating optimization scheme and designed a surrogate function that is easy to solve. Additionally, we introduced a graph-based deformation technique for coarse alignment to improve both the accuracy and solution speed. Experimental results demonstrated the superiority of our proposed method compared to state-of-the-art techniques in most examples. Our method achieved the best performance while maintaining a relatively fast solution speed.

\bibliographystyle{IEEEtran}
\bibliography{egbib}

\begin{thebibliography}{10}
\providecommand{\url}[1]{#1}
\csname url@samestyle\endcsname
\providecommand{\newblock}{\relax}
\providecommand{\bibinfo}[2]{#2}
\providecommand{\BIBentrySTDinterwordspacing}{\spaceskip=0pt\relax}
\providecommand{\BIBentryALTinterwordstretchfactor}{4}
\providecommand{\BIBentryALTinterwordspacing}{\spaceskip=\fontdimen2\font plus
\BIBentryALTinterwordstretchfactor\fontdimen3\font minus
  \fontdimen4\font\relax}
\providecommand{\BIBforeignlanguage}[2]{{%
\expandafter\ifx\csname l@#1\endcsname\relax
\typeout{** WARNING: IEEEtran.bst: No hyphenation pattern has been}%
\typeout{** loaded for the language `#1'. Using the pattern for}%
\typeout{** the default language instead.}%
\else
\language=\csname l@#1\endcsname
\fi
#2}}
\providecommand{\BIBdecl}{\relax}
\BIBdecl

\bibitem{Besl1992}
P.~Besl and N.~D. McKay, ``A method for registration of {3-D} shapes,''
  \emph{IEEE TPAMI}, vol.~14, no.~2, pp. 239--256, 1992.

\bibitem{Amberg2007}
B.~Amberg, S.~Romdhani, and T.~Vetter, ``Optimal step nonrigid {ICP} algorithms
  for surface registration,'' in \emph{CVPR}, 2007.

\bibitem{li2019robust}
K.~Li, J.~Yang, Y.-K. Lai, and D.~Guo, ``Robust non-rigid registration with
  reweighted position and transformation sparsity,'' \emph{IEEE TVCG}, vol.~25,
  no.~6, pp. 2255--2269, 2019.

\bibitem{yao2022fast}
Y.~Yao, B.~Deng, W.~Xu, and J.~Zhang, ``Fast and robust non-rigid registration
  using accelerated majorization-minimization,'' \emph{IEEE Transactions on
  Pattern Analysis \&amp; Machine Intelligence}, vol.~45, no.~08, pp.
  9681--9698, 2023.

\bibitem{allen2003space}
B.~Allen, B.~Curless, and Z.~Popovi{\'c}, ``The space of human body shapes:
  reconstruction and parameterization from range scans,'' \emph{ACM TOG},
  vol.~22, no.~3, pp. 587--594, 2003.

\bibitem{li2008global}
H.~Li, R.~W. Sumner, and M.~Pauly, ``Global correspondence optimization for
  non-rigid registration of depth scans,'' \emph{Comput. Graph. Forum},
  vol.~27, pp. 1421--1430, 2008.

\bibitem{chang2011global}
W.~Chang and M.~Zwicker, ``Global registration of dynamic range scans for
  articulated model reconstruction,'' \emph{ACM TOG}, vol.~30, no.~3, pp.
  26:1--26:15, 2011.

\bibitem{Yao_2020_CVPR}
Y.~Yao, B.~Deng, W.~Xu, and J.~Zhang, ``Quasi-newton solver for robust
  non-rigid registration,'' in \emph{CVPR}, 2020, pp. 7597--7606.

\bibitem{li2009robust}
H.~Li, B.~Adams, L.~J. Guibas, and M.~Pauly, ``Robust single-view geometry and
  motion reconstruction,'' \emph{ACM TOG}, vol.~28, no.~5, pp. 175:1--175:10,
  2009.

\bibitem{xu2017flycap}
L.~Xu, Y.~Liu, W.~Cheng, K.~Guo, G.~Zhou, Q.~Dai, and L.~Fang, ``{FlyCap}:
  Markerless motion capture using multiple autonomous flying cameras,''
  \emph{IEEE TVCG}, vol.~24, no.~8, pp. 2284--2297, 2018.

\bibitem{dou2016fusion4d}
M.~Dou, S.~Khamis, Y.~Degtyarev, P.~Davidson, S.~R. Fanello, A.~Kowdle, S.~O.
  Escolano, C.~Rhemann, D.~Kim, J.~Taylor, P.~Kohli, V.~Tankovich, and
  S.~Izadi, ``{Fusion4D}: Real-time performance capture of challenging
  scenes,'' \emph{ACM TOG}, vol.~35, no.~4, pp. 114:1--114:13, 2016.

\bibitem{yu2018doublefusion}
T.~Yu, Z.~Zheng, K.~Guo, J.~Zhao, Q.~Dai, H.~Li, G.~Pons-Moll, and Y.~Liu,
  ``{DoubleFusion}: Real-time capture of human performances with inner body
  shapes from a single depth sensor,'' in \emph{CVPR}, 2018, pp. 7287--7296.

\bibitem{li2020robust2}
Z.~Li, T.~Yu, C.~Pan, Z.~Zheng, and Y.~Liu, ``Robust {3D} self-portraits in
  seconds,'' in \emph{CVPR}, 2020, pp. 1344--1353.

\bibitem{Mitra2004}
N.~J. Mitra, N.~Gelfand, H.~Pottmann, and L.~Guibas, ``Registration of point
  cloud data from a geometric optimization perspective,'' in \emph{Proceedings
  of the 2004 Eurographics/ACM SIGGRAPH Symposium on Geometry Processing},
  2004, p. 22–31.

\bibitem{pottmann2006geometry}
H.~Pottmann, Q.-X. Huang, Y.-L. Yang, and S.-M. Hu, ``Geometry and convergence
  analysis of algorithms for registration of {3D} shapes,'' \emph{IJCV},
  vol.~67, no.~3, pp. 277--296, 2006.

\bibitem{Rusinkiewicz2019}
S.~Rusinkiewicz, ``A symmetric objective function for {ICP},'' \emph{ACM TOG},
  vol.~38, no.~4, jul 2019.

\bibitem{lange2016mm}
K.~Lange, \emph{MM Optimization Algorithms}.\hskip 1em plus 0.5em minus
  0.4em\relax SIAM, 2016.

\bibitem{sumner2007embedded}
R.~W. Sumner, J.~Schmid, and M.~Pauly, ``Embedded deformation for shape
  manipulation,'' \emph{TOG}, vol.~26, no.~3, pp. 80:1--80:7, 2007.

\bibitem{Sahillioglu2020}
Y.~Sahillio\u{g}lu, ``Recent advances in shape correspondence,'' \emph{The Vis.
  Comput.}, vol.~36, no.~8, p. 1705–1721, 2020.

\bibitem{Deng2022}
B.~Deng, Y.~Yao, R.~M. Dyke, and J.~Zhang, ``A survey of non-rigid {3D}
  registration,'' \emph{Comput. Graph. Forum}, vol.~41, no.~2, pp. 559--589,
  2022.

\bibitem{liao2009modeling}
M.~Liao, Q.~Zhang, H.~Wang, R.~Yang, and M.~Gong, ``Modeling deformable objects
  from a single depth camera,'' in \emph{ICCV}, 2009, pp. 167--174.

\bibitem{hontani2012robust}
H.~Hontani, T.~Matsuno, and Y.~Sawada, ``Robust nonrigid {ICP} using
  outlier-sparsity regularization,'' in \emph{CVPR}, 2012, pp. 174--181.

\bibitem{yu2017bodyfusion}
T.~Yu, K.~Guo, F.~Xu, Y.~Dong, Z.~Su, J.~Zhao, J.~Li, Q.~Dai, and Y.~Liu,
  ``{BodyFusion}: Real-time capture of human motion and surface geometry using
  a single depth camera,'' in \emph{ICCV}, 2017, pp. 910--919.

\bibitem{dou2017motion2fusion}
M.~Dou, P.~Davidson, S.~R. Fanello, S.~Khamis, A.~Kowdle, C.~Rhemann,
  V.~Tankovich, and S.~Izadi, ``{Motion2Fusion}: Real-time volumetric
  performance capture,'' \emph{ACM TOG}, vol.~36, no.~6, pp. 246:1--246:16,
  2017.

\bibitem{li2018articulatedfusion}
C.~Li, Z.~Zhao, and X.~Guo, ``{ArticulatedFusion}: Real-time reconstruction of
  motion, geometry and segmentation using a single depth camera,'' in
  \emph{ECCV}, 2018, pp. 317--332.

\bibitem{zhang2019interactionfusion}
H.~Zhang, Z.-H. Bo, J.-H. Yong, and F.~Xu, ``{InteractionFusion}: Real-time
  reconstruction of hand poses and deformable objects in hand-object
  interactions,'' \emph{ACM TOG}, vol.~38, no.~4, pp. 48:1--48:11, 2019.

\bibitem{guo2015robust}
K.~Guo, F.~Xu, Y.~Wang, Y.~Liu, and Q.~Dai, ``Robust non-rigid motion tracking
  and surface reconstruction using {L0} regularization,'' in \emph{ICCV}, 2015,
  pp. 3083--3091.

\bibitem{thomas2016augmented}
D.~Thomas and R.-I. Taniguchi, ``Augmented blendshapes for real-time
  simultaneous {3D} head modeling and facial motion capture,'' in \emph{CVPR},
  2016, pp. 3299--3308.

\bibitem{li2020topology}
C.~Li and X.~Guo, ``Topology-change-aware volumetric fusion for dynamic scene
  reconstruction,'' in \emph{ECCV}, 2020, pp. 258--274.

\bibitem{su2020robustfusion}
Z.~Su, L.~Xu, Z.~Zheng, T.~Yu, Y.~Liu, and L.~Fang, ``{RobustFusion}: Human
  volumetric capture with data-driven visual cues using a {RGBD} camera,'' in
  \emph{ECCV}, 2020, pp. 246--264.

\bibitem{zampogiannis2019topology}
K.~Zampogiannis, C.~Ferm{\"u}ller, and Y.~Aloimonos, ``Topology-aware non-rigid
  point cloud registration,'' \emph{IEEE TPAMI}, vol.~43, no.~3, pp.
  1056--1069, 2021.

\bibitem{myronenko2010point}
A.~Myronenko and X.~Song, ``Point set registration: Coherent point drift,''
  \emph{IEEE TPAMI}, vol.~32, no.~12, pp. 2262--2275, 2010.

\bibitem{Jian2011robust}
B.~Jian and B.~C. Vemuri, ``Robust point set registration using gaussian
  mixture models,'' \emph{IEEE TPAMI}, vol.~33, no.~8, pp. 1633--1645, 2011.

\bibitem{ma2013robust}
J.~Ma, J.~Zhao, J.~Tian, Z.~Tu, and A.~L. Yuille, ``Robust estimation of
  nonrigid transformation for point set registration,'' in \emph{CVPR}, 2013,
  pp. 2147--2154.

\bibitem{ge2014non}
S.~Ge, G.~Fan, and M.~Ding, ``Non-rigid point set registration with
  global-local topology preservation,'' in \emph{CVPR}, 2014, pp. 245--251.

\bibitem{hirose2021bayesian}
O.~Hirose, ``A {Bayesian} formulation of coherent point drift,'' \emph{IEEE
  TPAMI}, vol.~43, no.~7, pp. 2269--2286, 2021.

\bibitem{hirose2020acceleration}
------, ``Acceleration of non-rigid point set registration with downsampling
  and {Gaussian} process regression,'' \emph{IEEE TPAMI}, vol.~43, no.~8, pp.
  2858--2865, 2020.

\bibitem{Hirose2022geodesic}
------, ``Geodesic-based bayesian coherent point drift,'' \emph{IEEE TPAMI},
  pp. 1--18, 2022.

\bibitem{fan2022coherent}
A.~Fan, J.~Ma, X.~Tian, X.~Mei, and W.~Liu, ``Coherent point drift revisited
  for non-rigid shape matching and registration,'' in \emph{CVPR}, 2022, pp.
  1424--1434.

\bibitem{newcombe2015dynamicfusion}
R.~A. Newcombe, D.~Fox, and S.~M. Seitz, ``{DynamicFusion}: Reconstruction and
  tracking of non-rigid scenes in real-time,'' in \emph{CVPR}, 2015, pp.
  343--352.

\bibitem{xu2019unstructuredfusion}
L.~Xu, Z.~Su, L.~Han, T.~Yu, Y.~Liu, and L.~Fang, ``{UnstructuredFusion}:
  realtime {4D} geometry and texture reconstruction using commercial {RGBD}
  cameras,'' \emph{IEEE TPAMI}, vol.~42, no.~10, pp. 2508--2522, 2020.

\bibitem{ren2025ddm}
S.~Ren, J.~Hou, X.~Chen, H.~Xiong, and W.~Wang, ``Ddm: A metric for comparing
  3d shapes using directional distance fields,'' \emph{IEEE Transactions on
  Pattern Analysis and Machine Intelligence}, pp. 1--16, 2025.

\bibitem{huang2011global}
P.~Huang, C.~Budd, and A.~Hilton, ``Global temporal registration of multiple
  non-rigid surface sequences,'' in \emph{CVPR}.\hskip 1em plus 0.5em minus
  0.4em\relax IEEE, 2011, pp. 3473--3480.

\bibitem{li2017robust}
Z.~Li, Y.~Ji, W.~Yang, J.~Ye, and J.~Yu, ``Robust {3D} human motion
  reconstruction via dynamic template construction,'' in \emph{International
  Conference on 3D Vision}.\hskip 1em plus 0.5em minus 0.4em\relax IEEE, 2017,
  pp. 496--505.

\bibitem{bogo2014faust}
F.~Bogo, J.~Romero, M.~Loper, and M.~J. Black, ``Faust: Dataset and evaluation
  for 3d mesh registration,'' in \emph{Proceedings of the IEEE conference on
  computer vision and pattern recognition}, 2014, pp. 3794--3801.

\bibitem{bogo2017dfaust}
F.~Bogo, J.~Romero, G.~Pons-Moll, and M.~J. Black, ``Dynamic {FAUST}:
  {R}egistering human bodies in motion,'' in \emph{CVPR}, Jul. 2017.

\bibitem{eisenberger2021neuromorph}
M.~Eisenberger, D.~Novotny, G.~Kerchenbaum, P.~Labatut, N.~Neverova,
  D.~Cremers, and A.~Vedaldi, ``Neuromorph: Unsupervised shape interpolation
  and correspondence in one go,'' in \emph{Proceedings of the IEEE/CVF
  Conference on Computer Vision and Pattern Recognition}, 2021, pp. 7473--7483.

\bibitem{prokudin2023dynamic}
S.~Prokudin, Q.~Ma, M.~Raafat, J.~Valentin, and S.~Tang, ``Dynamic point
  fields,'' in \emph{Proceedings of the IEEE/CVF International Conference on
  Computer Vision}, 2023, pp. 7964--7976.

\bibitem{sundararaman2022reduced}
R.~S. Sundararaman, R.~Marin, E.~Rodola, and M.~Ovsjanikov, ``Reduced
  representation of deformation fields for effective non-rigid shape
  matching,'' \emph{Advances in Neural Information Processing Systems},
  vol.~35, pp. 10\,405--10\,420, 2022.

\bibitem{yang2022banmo}
G.~Yang, M.~Vo, N.~Neverova, D.~Ramanan, A.~Vedaldi, and H.~Joo, ``{BANMo}:
  Building animatable 3d neural models from many casual videos,'' in
  \emph{CVPR}, 2022.

\bibitem{yao2024dynosurf}
Y.~Yao, S.~Ren, J.~Hou, Z.~Deng, J.~Zhang, and W.~Wang, ``Dynosurf: Neural
  deformation-based temporally consistent dynamic surface reconstruction,'' in
  \emph{European Conference on Computer Vision}.\hskip 1em plus 0.5em minus
  0.4em\relax Springer, 2024, pp. 271--288.

\bibitem{corona2022learned}
E.~Corona, G.~Pons-Moll, G.~Alenya, and F.~Moreno-Noguer, ``Learned vertex
  descent: A new direction for 3d human model fitting,'' in \emph{European
  Conference on Computer Vision}.\hskip 1em plus 0.5em minus 0.4em\relax
  Springer, 2022, pp. 146--165.

\bibitem{marin2024nicp}
R.~Marin, E.~Corona, and G.~Pons-Moll, ``Nicp: neural icp for 3d human
  registration at scale,'' in \emph{European Conference on Computer
  Vision}.\hskip 1em plus 0.5em minus 0.4em\relax Springer, 2024, pp. 265--285.

\bibitem{prokudin2019efficient}
S.~Prokudin, C.~Lassner, and J.~Romero, ``Efficient learning on point clouds
  with basis point sets,'' in \emph{Proceedings of the IEEE/CVF international
  conference on computer vision}, 2019, pp. 4332--4341.

\bibitem{feng2021recurrent}
W.~Feng, J.~Zhang, H.~Cai, H.~Xu, J.~Hou, and H.~Bao, ``Recurrent multi-view
  alignment network for unsupervised surface registration,'' in \emph{CVPR},
  2021, pp. 10\,297--10\,307.

\bibitem{lei2022cadex}
J.~Lei and K.~Daniilidis, ``Cadex: Learning canonical deformation coordinate
  space for dynamic surface representation via neural homeomorphism,'' in
  \emph{Proceedings of the IEEE/CVF Conference on Computer Vision and Pattern
  Recognition}, 2022, pp. 6624--6634.

\bibitem{Cai2022NDR}
H.~Cai, W.~Feng, X.~Feng, Y.~Wang, and J.~Zhang, ``Neural surface
  reconstruction of dynamic scenes with monocular rgb-d camera,'' in
  \emph{NeurIPS}, 2022.

\bibitem{groueix20183d}
T.~Groueix, M.~Fisher, V.~G. Kim, B.~C. Russell, and M.~Aubry, ``3d-coded: 3d
  correspondences by deep deformation,'' in \emph{Proceedings of the european
  conference on computer vision (ECCV)}, 2018, pp. 230--246.

\bibitem{wei2016dense}
L.~Wei, Q.~Huang, D.~Ceylan, E.~Vouga, and H.~Li, ``Dense human body
  correspondences using convolutional networks,'' in \emph{CVPR}, 2016, pp.
  1544--1553.

\bibitem{bozic2020deepdeform}
A.~Bozic, M.~Zollh{\"o}fer, C.~Theobalt, and M.~Nie{\ss}ner, ``{DeepDeform}:
  Learning non-rigid {RGB-D} reconstruction with semi-supervised data,'' in
  \emph{CVPR}, 2020, pp. 7002--7012.

\bibitem{lepard2021}
Y.~Li and T.~Harada, ``Lepard: Learning partial point cloud matching in rigid
  and deformable scenes.'' \emph{CVPR}, 2022.

\bibitem{trappolini2021shape}
G.~Trappolini, L.~Cosmo, L.~Moschella, R.~Marin, S.~Melzi, and E.~Rodol{\`a},
  ``Shape registration in the time of transformers,'' \emph{NeurIPS}, 2021.

\bibitem{qin2023deep}
Z.~Qin, H.~Yu, C.~Wang, Y.~Peng, and K.~Xu, ``Deep graph-based spatial
  consistency for robust non-rigid point cloud registration,'' in \emph{CVPR},
  June 2023, pp. 5394--5403.

\bibitem{bhatnagar2020loopreg}
B.~L. Bhatnagar, C.~Sminchisescu, C.~Theobalt, and G.~Pons-Moll, ``Loopreg:
  Self-supervised learning of implicit surface correspondences, pose and shape
  for 3d human mesh registration,'' \emph{Advances in Neural Information
  Processing Systems}, vol.~33, pp. 12\,909--12\,922, 2020.

\bibitem{wang2019deep}
Y.~Wang and J.~M. Solomon, ``Deep closest point: Learning representations for
  point cloud registration,'' in \emph{Proceedings of the IEEE/CVF
  international conference on computer vision}, 2019, pp. 3523--3532.

\bibitem{wang2020sequential}
K.~Wang, J.~Xie, G.~Zhang, L.~Liu, and J.~Yang, ``Sequential {3D} human pose
  and shape estimation from point clouds,'' in \emph{CVPR}, 2020, pp.
  7275--7284.

\bibitem{bovzivc2020neural}
A.~Bo{\v{z}}i{\v{c}}, P.~Palafox, M.~Zollh{\"o}fer, A.~Dai, J.~Thies, and
  M.~Nie{\ss}ner, ``Neural non-rigid tracking,'' in \emph{NeurIPS},
  H.~Larochelle, M.~Ranzato, R.~Hadsell, M.~F. Balcan, and H.~Lin, Eds.,
  vol.~33.\hskip 1em plus 0.5em minus 0.4em\relax Curran Associates, Inc.,
  2020, pp. 18\,727--18\,737.

\bibitem{zeng2021corrnet3d}
Y.~Zeng, Y.~Qian, Z.~Zhu, J.~Hou, H.~Yuan, and Y.~He, ``{CorrNet3D}:
  Unsupervised end-to-end learning of dense correspondence for {3D} point
  clouds,'' in \emph{CVPR}, 2021, pp. 6052--6061.

\bibitem{Wang20193DN}
W.~Wang, D.~Ceylan, R.~Mech, and U.~Neumann, ``{3DN}: {3D} deformation
  network,'' in \emph{CVPR}, 2019, pp. 1038--1046.

\bibitem{Chen1992}
Y.~Chen and G.~Medioni, ``Object modelling by registration of multiple range
  images,'' \emph{Image and Vision Computing}, vol.~10, no.~3, pp. 145--155,
  1992.

\bibitem{Rusinkiewicz2001}
S.~Rusinkiewicz and M.~Levoy, ``Efficient variants of the icp algorithm,'' in
  \emph{Proceedings Third International Conference on 3-D Digital Imaging and
  Modeling}, 2001, pp. 145--152.

\bibitem{Sorkine2007arap}
O.~Sorkine and M.~Alexa, ``As-rigid-as-possible surface modeling,'' in
  \emph{Proceedings of the Fifth Eurographics Symposium on Geometry
  Processing}, ser. SGP '07, 2007, p. 109–116.

\bibitem{sorkine2017least}
O.~Sorkine-Hornung and M.~Rabinovich, ``Least-squares rigid motion using svd,''
  \emph{Computing}, vol.~1, no.~1, pp. 1--5, 2017.

\bibitem{li2022DeformationPyramid}
Y.~Li and T.~Harada, ``Non-rigid point cloud registration with neural
  deformation pyramid,'' \emph{NeurIPS}, 2022.

\bibitem{huang2023synorim}
J.~Huang, T.~Birdal, Z.~Gojcic, L.~J. Guibas, and S.~Hu, ``Multiway non-rigid
  point cloud registration via learned functional map synchronization,''
  \emph{IEEE TPAMI}, vol.~45, no.~2, pp. 2038--2053, 2023.

\bibitem{li20214dcomplete}
Y.~Li, H.~Takehara, T.~Taketomi, B.~Zheng, and M.~Nie{\ss}ner, ``4dcomplete:
  Non-rigid motion estimation beyond the observable surface.'' \emph{ICCV},
  2021.

\bibitem{vlasic2008articulated}
D.~Vlasic, I.~Baran, W.~Matusik, and J.~Popovi{\'c}, ``Articulated mesh
  animation from multi-view silhouettes,'' \emph{ACM TOG}, vol.~27, no.~3, pp.
  97:1--97:9, Aug. 2008.

\bibitem{Dyke2020tracka}
R.~M. Dyke, F.~Zhou, Y.-K. Lai, P.~L. Rosin, D.~Guo, K.~Li, R.~Marin, and
  J.~Yang, ``{SHREC 2020 Track}: Non-rigid shape correspondence of
  physically-based deformations,'' in \emph{Eurographics Workshop on 3D Object
  Retrieval}, T.~Schreck, T.~Theoharis, I.~Pratikakis, M.~Spagnuolo, and R.~C.
  Veltkamp, Eds.\hskip 1em plus 0.5em minus 0.4em\relax The Eurographics
  Association, 2020, pp. 19--26.

\bibitem{bhatnagar22behave}
B.~L. Bhatnagar, X.~Xie, I.~Petrov, C.~Sminchisescu, C.~Theobalt, and
  G.~Pons-Moll, ``Behave: Dataset and method for tracking human object
  interactions,'' in \emph{CVPR}.\hskip 1em plus 0.5em minus 0.4em\relax
  {IEEE}, jun 2022.

\bibitem{litany2017deep}
O.~Litany, T.~Remez, E.~Rodola, A.~Bronstein, and M.~Bronstein, ``Deep
  functional maps: Structured prediction for dense shape correspondence,'' in
  \emph{Proceedings of the IEEE international conference on computer vision},
  2017, pp. 5659--5667.

\bibitem{cao2023unsupervised}
\BIBentryALTinterwordspacing
D.~Cao, P.~Roetzer, and F.~Bernard, ``Unsupervised learning of robust spectral
  shape matching,'' \emph{ACM Trans. Graph.}, vol.~42, no.~4, Jul. 2023.
  [Online]. Available: \url{https://doi.org/10.1145/3592107}
\BIBentrySTDinterwordspacing

\bibitem{SMPL2015}
M.~Loper, N.~Mahmood, J.~Romero, G.~Pons-Moll, and M.~J. Black, ``{SMPL}: A
  skinned multi-person linear model,'' \emph{ACM TOG}, vol.~34, no.~6, pp.
  248:1--248:16, 2015.

\bibitem{bergstrom2014robust}
P.~Bergstr{\"o}m and O.~Edlund, ``Robust registration of point sets using
  iteratively reweighted least squares,'' \emph{Computational optimization and
  applications}, vol.~58, pp. 543--561, 2014.

\bibitem{adlerstein2025sandro}
M.~Adlerstein, J.~C. Virgolino~Soares, A.~Bratta, and C.~Semini, ``{SANDRO}: A
  robust solver with a splitting strategy for point cloud registration,'' in
  \emph{2025 IEEE International Conference on Robotics and Automation (ICRA)},
  2025, pp. 11\,112--11\,118.

\bibitem{qin2022geometric}
Z.~Qin, H.~Yu, C.~Wang, Y.~Guo, Y.~Peng, and K.~Xu, ``Geometric transformer for
  fast and robust point cloud registration,'' in \emph{CVPR}, June 2022, pp.
  11\,143--11\,152.

\end{thebibliography}

\begin{IEEEbiography}[{\includegraphics[width=1in]{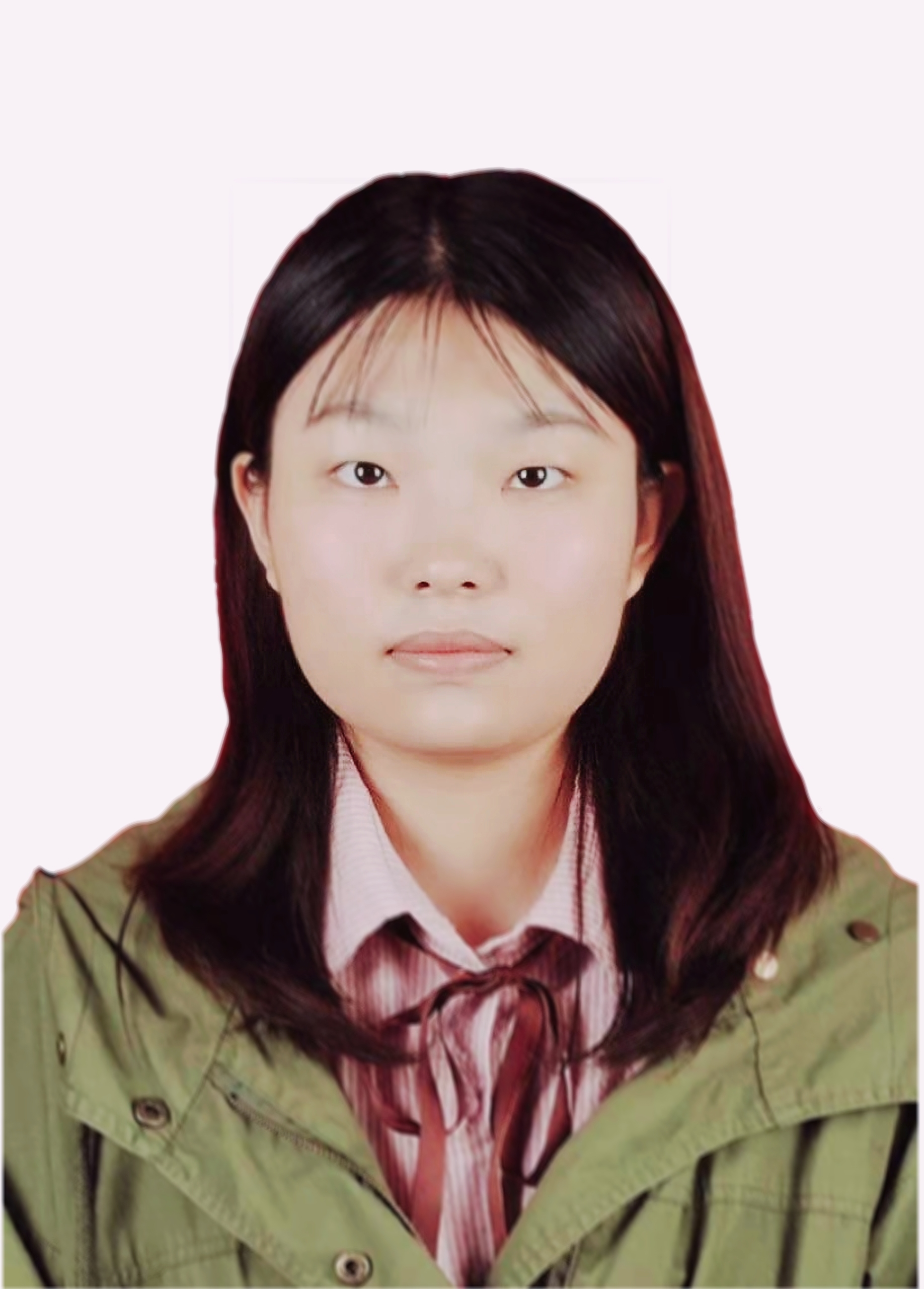}}]{Yuxin Yao} is currently a post-doctoral
researcher with the Department of Computer Science, City University of Hong Kong. She received the BE degree from the University of Electronic Science and Technology of China, Chengdu,
China, in 2018, and the PhD degree from the University of Science and Technology of China, Hefei, China, in 2023. Her research interests include computer vision, computer graphics, and 3D reconstruction.
\end{IEEEbiography}

\begin{IEEEbiography}[{\includegraphics[width=1in]{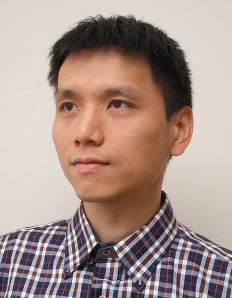}}]{Bailin Deng}
	is a Senior Lecturer in the School of Computer Science and Informatics at Cardiff University. He received the BEng degree in computer software (2005) and the MSc degree in computer science (2008) from Tsinghua University (China), and the PhD degree in technical mathematics (2011) from Vienna University of Technology (Austria). His research interests include geometry processing, numerical optimization, computational design, and digital fabrication. He is a member of IEEE, and an associate editor of IEEE Computer Graphics and Applications.
\end{IEEEbiography}

\begin{IEEEbiography}[{\includegraphics[width=1in]{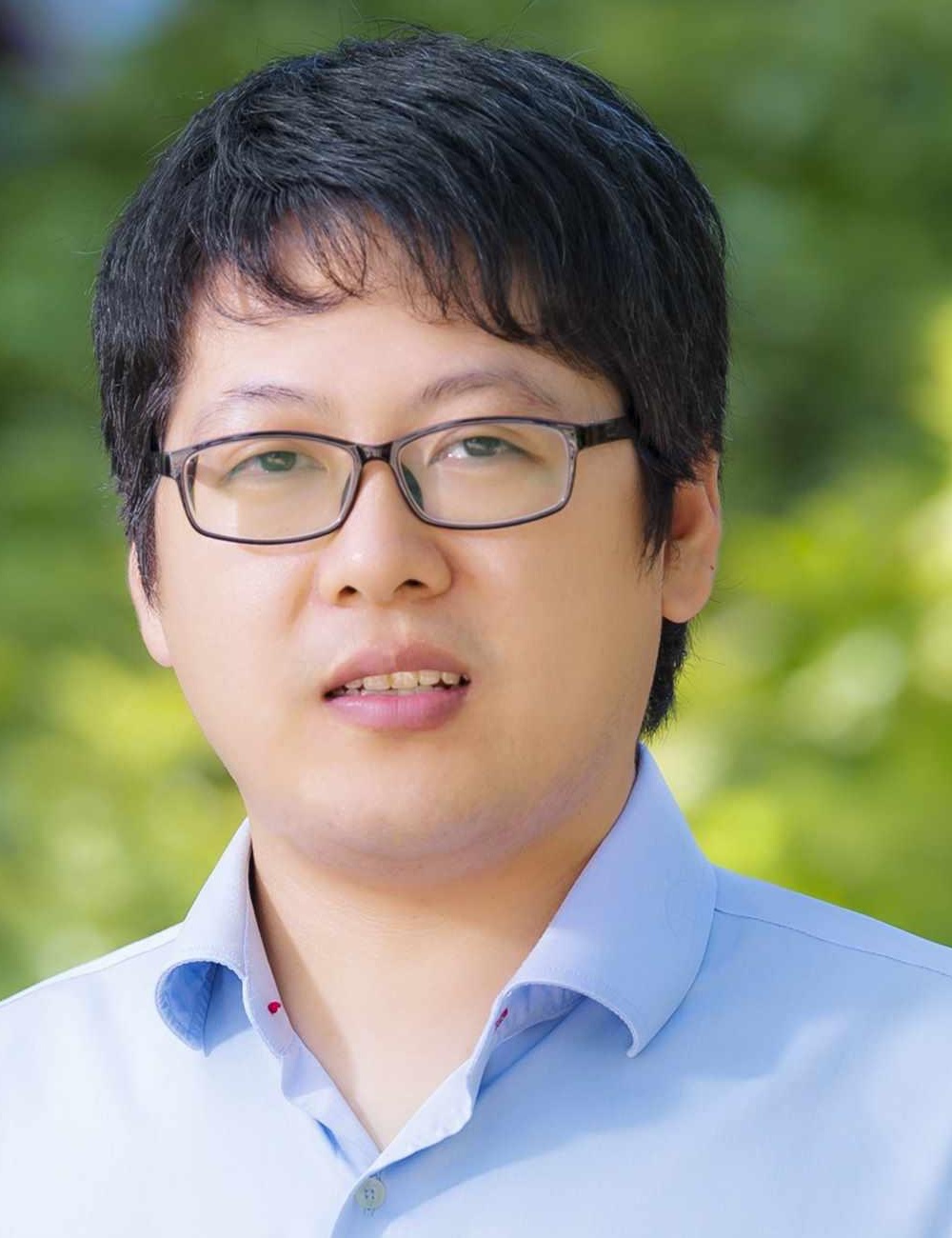}}]{Junhui Hou} is an Associate Professor with the Department of Computer Science, City University of Hong Kong. He holds a B.Eng. degree in information engineering (Talented Students Program) from the South China University of Technology, Guangzhou, China (2009), an M.Eng. degree in signal and information processing from Northwestern Polytechnical University, Xi’an, China (2012), and a Ph.D. degree from the School of Electrical and Electronic Engineering, Nanyang Technological University, Singapore (2016). His research interests are multi-dimensional visual computing.

Dr. Hou received the Early Career Award (3/381) from the Hong Kong Research Grants Council in 2018 and the NSFC Excellent Young Scientists Fund in 2024. He has served or is serving as an Associate Editor for \textit{IEEE Transactions on Visualization and Computer Graphics}, \textit{IEEE Transactions on Image Processing}, \textit{IEEE Transactions on Multimedia}, and \textit{IEEE Transactions on Circuits and Systems for Video Technology}. 
\end{IEEEbiography}

\begin{IEEEbiography}[{\includegraphics[width=1in]{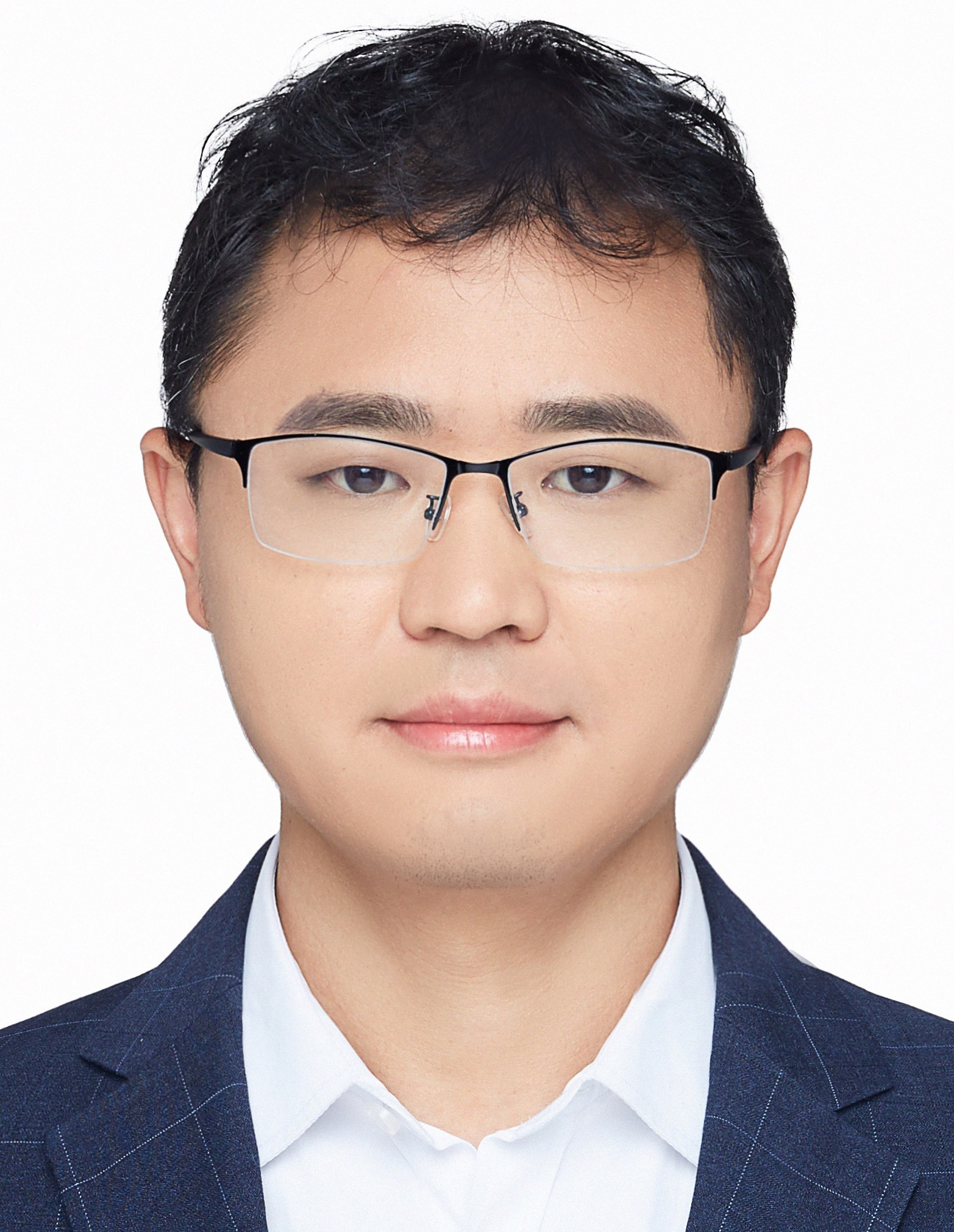}}]{Juyong Zhang}
is a professor in the School of Mathematical Sciences at University of Science and Technology of China. He received the BS degree from the University of Science and Technology of China in 2006, and the PhD degree from Nanyang Technological University, Singapore. His research interests include computer graphics, computer vision, and numerical optimization. He is an associate editor of IEEE Transactions on Mobile Computing and IEEE Computer Graphics and Applications.

\end{IEEEbiography}

\vfill

\clearpage
\appendices

\section{Technical Details}

\subsection{Subproblem of Updating $\widehat{\mathbf{v}}_i$}
\label{appx:matrix-definition}

This problem of updating $\widehat{\mathbf{v}}_i$ can be written in a matrix form:
\begin{equation}
\label{eq:optimize-vi-fine}
\{\newpos{\mathbf{v}}_i^{(k+1)}\} = \argmin_{\{\newpos{\mathbf{v}}_i\}} \frac{1}{|\mathcal{V}|}\|\mathbf{W}\mathbf{N}(\newpos{\mathbf{V}}-\mathbf{T})\|^2 + \frac{w_{\arap}}{2\mathcal{E}}\|\mathbf{B}\newpos{\mathbf{V}}-\mathbf{Y}\|^2,
\end{equation}
where $\newpos{\mathbf{V}}, \mathbf{T} \in \mathbb{R}^{3 |\mathcal{V}|}$
concatenate $\{\newpos{\mathbf{v}}_i\}$ and $\{\mathbf{u}_{\corresidx{i}^{(k+1)}}\}$ respectively. 
$\mathbf{W} =\diag(\sqrt{\pointpairweight{1}^{(k+1)}}, \ldots, \sqrt{\pointpairweight{|\mathcal{V}|}^{(k+1)}}) \in \mathbb{R}^{|\mathcal{V}| \times |\mathcal{V}|}$ stores the robust weights; 
$\mathbf{N} \in \mathbb{R}^{|\mathcal{V}| \times 3 |\mathcal{V}|}$ is a sparse matrix where the $i$-th row has coefficients $(\mathbf{R}_i^{(k)}\mathbf{n}_i + \mathbf{n}_{\corresidx{i}}^t)^T$ at the columns corresponding to $\newpos{\mathbf{v}}_i$;
$\mathbf{B} \in\mathbb{R}^{6|\mathcal{E}|\times 3|\mathcal{V}|}$ is a block sparse matrix that maps the source point positions $\newpos{\mathbf{V}}$ to scaled edge vectors between neighboring points, where the scaled vector from $\newpos{\mathbf{v}}_i$ to point $\newpos{\mathbf{v}}_j$ correspond to a $3\times 3|\mathcal{V}|$ block row with two non-zero blocks $-\mathbf{I}_3/\sqrt{|\mathcal{N}(\mathbf{v}_i)|}$ and $\mathbf{I}_3/\sqrt{|\mathcal{N}(\mathbf{v}_i)|}$ corresponding to $\newpos{\mathbf{v}}_i$ and $\newpos{\mathbf{v}}_j$ respectively, and $\mathbf{I}_3$ is a $3\times 3$ identity matrix; $\mathbf{Y} \in\mathbb{R}^{6|\mathcal{E}|}$ concatenates the vectors $\mathbf{R}_i^{(k)}\cdot(\mathbf{v}_i-\mathbf{v}_j)/\sqrt{|\mathcal{N}(\mathbf{v}_i)|}$ in an order consistent with $\mathbf{N}$. 
Then the problem in Eq.~\eqref{eq:optimize-vi-fine} can be solved via a linear system
\begin{equation}
\begin{aligned}
&(\mathbf{N}^T \mathbf{W}^T \mathbf{W} \mathbf{N}/|\mathcal{V}| + w_{\arap} \mathbf{B}^T \mathbf{B}/2|\mathcal{E}|)
\newpos{\mathbf{V}}^{(k+1)}\\
= & ~\mathbf{N}^T\mathbf{W}^T\mathbf{W} \mathbf{T}/|\mathcal{V}| + w_{\arap} \mathbf{B}^T\mathbf{Y}/2|\mathcal{E}|. 
\end{aligned}
\label{eq:solve-V}
\end{equation}
As the system matrix is sparse, symmetric, and positive definite, we solve it using Cholesky factorization. Moreover, the sparsity pattern of the matrix remains the same in each iteration. Thus, to improve efficiency, we perform symbolic factorization once before the first iteration, and only carry out numerical factorization in each iteration.

\subsection{Proof that $\overline{f}(\mathbf{R}_i|\mathbf{R}_i^{(k)})$ in Eq.~\eqref{eq:f-Ri-surrogate} is a Surrogate Function for $f(\mathbf{R}_i)$}
\label{appx:surrogate-function}
If $\overline{f}(\mathbf{R}_i|\mathbf{R}_i^{(k)})$ in Eq.~\eqref{eq:f-Ri-surrogate} is a surrogate function for $f(\mathbf{R}_i)$, then it needs to satisfied the conditions~\eqref{eq:surrogate-condition}. We consider two cases:
\begin{itemize}[leftmargin=*]
    \item If $\mathbf{d} = \mathbf{0}$, then
    \[
        \overline{f}(\mathbf{R}_i|\mathbf{R}_i^{(k)}) = f(\mathbf{R}_i) = 0, 
    \]
    which obviously meet the conditions~\eqref{eq:surrogate-condition}. 
    \item If $\mathbf{d} \neq \mathbf{0}$, then from Proposition~\ref{prop:f} we have 
    \begin{align*}
    f(\mathbf{R}_i) &= \|\mathbf{d}\|^2\cdot \min_{\mathbf{h}\in\mathcal{P}}\|\mathbf{R}_i\mathbf{n}_i - \mathbf{h}\|^2\\
    & \leq \|\mathbf{d}\|^2\cdot\|\mathbf{R}_i\mathbf{n}_i-\mathbf{h}_*^{(k)}\|^2 
    = \overline{f}(\mathbf{R}_i|\mathbf{R}_i^{(k)}) \quad \forall \mathbf{R}_i,
    \end{align*}
    and 
    \begin{align*}
    f(\mathbf{R}_i^{(k)}) &= \|\mathbf{d}\|^2\cdot \min_{\mathbf{h}\in\mathcal{P}}\|\mathbf{R}_i^{(k)}\mathbf{n}_i - \mathbf{h}\|^2\\
    & = \|\mathbf{d}\|^2\cdot\|\mathbf{R}_i^{(k)} \mathbf{n}_i-\mathbf{h}_*^{(k)}\|^2 
    = \overline{f}(\mathbf{R}_i^{(k)}|\mathbf{R}_i^{(k)}),
    \end{align*}
    which also satisfies the conditions~\eqref{eq:surrogate-condition}.
\end{itemize}

\subsection{{Details and} Numerical Solver for Coarse Alignment}
\label{appx:solver-coarse}

{The smoothness term $E_{\smooth}$ for coarse alignment can be written as 
    \begin{equation}
    E_{\smooth} = \frac{1}{2|\mathcal{E}_{\mathcal{G}}|} \sum_{\mathbf{p}_i\in\mathcal{V}_{\mathcal{G}}} \sum_{\mathbf{p}_j\in\mathcal{N}(\mathbf{p}_i)}\|\mathbf{D}_{ij}\|^2,
    \end{equation}
    where $\mathbf{D}_{ij}$ is the difference of the graph node $\mathbf{p}_i$ deformed by the affine transformations $\mathbf{X}_j$ and $\mathbf{X}_i$ and calculated by
    \[\mathbf{D}_{ij} = r_{ij}[\mathbf{A}_j(\mathbf{p}_i-\mathbf{p}_j)+\mathbf{p}_j + \mathbf{t}_j - (\mathbf{p}_i+\mathbf{t}_i)],
    \]
    and 
    $r_{ij}=\displaystyle\frac{2|\mathcal{E}_{\mathcal{G}}|\cdot \|\mathbf{p}_i-\mathbf{p}_j\|^{-1}}{\sum_{\mathbf{p}_i\in\mathcal{V}_{\mathcal{G}}}\sum_{\mathbf{p}_j\in\mathcal{N}(\mathbf{p}_i)}\|\mathbf{p}_i-\mathbf{p}_j\|^{-1}}
    $
    is a normalization weight.
    $E_{\rot}$ is to require the affine transformation associated with each node to be close to a rigid transformation:
    \begin{equation}
    E_{\rot} = \frac{1}{|\mathcal{V}_{\mathcal{G}}|} \sum_{\mathbf{p}_j\in\mathcal{V}_{\mathcal{G}}}\|\mathbf{A}_j - \proj_{\mathcal{R}}(\mathbf{A}_j)\|_F^2,
    \end{equation}
    where $\proj_{\mathcal{R}}(\cdot)$ is the projection operator and $\mathcal{R}$ is the rotation matrix group.
    Compared to the ARAP term, $E_{\rot}$ enforces local rigidity at a larger scale and helps to maintain the structure of the source shape.
    Further details of these two terms can be found in~\cite{yao2022fast}.}
    
Similar to the numerical solver in Sec.~\ref{Sec:numericalSolver}, we replace position variables $\{\newpos{\mathbf{v}}_i\}$ with transformations $\{(\mathbf{A}_j,\mathbf{t}_j)\}$ associated with graph nodes, and alternately update 
(1) the closest points $\{\mathbf{u}_{\rho_i}\}$,
(2) the transformations $\{(\mathbf{A}_j,\mathbf{t}_j)\}$, and
(3) the rotation matrix variables $\{\mathbf{R}_i\}$. 
Except for (2), others are the same as Sec.~\ref{Sec:numericalSolver}. 
When solving for the transformation $\{(\mathbf{A}_j,\mathbf{t}_j)\}$ while fixing  $\{\mathbf{u}_{\rho_i}\}$ and $\{\mathbf{R}_i\}$, we observe that the objective function is a quadratic function except for the rotation matrix term $E_{\rot}$ because the projection operator $\proj_{\mathcal{R}}(\cdot)$ depends on $\mathbf{A}_j$. Following~\cite{yao2022fast}, we fix $\mathbf{A}_j$ to $\mathbf{A}_j^{(k)}$ and derive a quadratic problem. The matrix form of this quadratic problem can be written as
\begin{equation}
\label{eq:optimize-x-Coarse}
\begin{aligned}
\mathop{\arg\min}_{\{\mathbf{X}_j\}} &\frac{1}{|\mathcal{S}|}\|\mathbf{W}\mathbf{N}(\mathbf{F}\mathbf{X}-\mathbf{Q})\|^2 + \frac{w_{\arap}^{C}}{2|\mathcal{E}|}\|\mathbf{B}\mathbf{X}-\mathbf{L}\|^2 \\
& + \frac{w_{\smooth}}{2|\mathcal{E}_{\mathcal{G}}|}\|(\mathbf{H}\mathbf{X}-\mathbf{Y})\|^2+ \frac{w_{\rot}}{|\mathcal{V}_{\mathcal{G}}|}\|\mathbf{J}\mathbf{X}-\mathbf{Z}\|^2. 
\end{aligned}
\end{equation}
where $\mathbf{X}_j=[\overline{\mathbf{A}}_j^T, \mathbf{t}_j^T]^T\in\mathbb{R}^{12}$. $\overline{\mathbf{A}}_j$ is a vector obtained by stacking the columns of $\mathbf{A}_j$.
$\mathbf{X}=[\mathbf{X}_1^T,...,\mathbf{X}_{|\mathcal{V}_{\mathcal{G}}|}^T]^T\in\mathbb{R}^{12|\mathcal{V}_{\mathcal{G}}|}$ concatenates all $\mathbf{X}_j$. 
$\mathbf{F}\in\mathbb{R}^{3|\mathcal{V}|\times 12|\mathcal{V}_{\mathcal{G}}|}$ is a block matrix with each block 
\[
\mathbf{F}_{ij} = \left\{
\begin{aligned}
&w_{ij}\cdot[\mathbf{v}_{i}-\mathbf{p}_j, 1]^T\otimes \mathbf{I}, & \text{if }\mathbf{p}_j\in\mathcal{I}(\mathbf{v}_{i}) \\
&\mathbf{0}, & \text{otherwise},
\end{aligned}
\right.
\]
where $\otimes$ is the Kronecker product,  $\mathbf{I}_3\in\mathbb{R}^{3\times 3}$ is the identity matrix.  
$\mathbf{W} =\diag(\sqrt{\widetilde{\alpha}_1}, ..., \sqrt{\widetilde{\alpha}_{|\mathcal{V}|}})\in\mathbb{R}^{|\mathcal{V}|\times |\mathcal{V}|}$ stores the robust weights $\{\pointpairweight{i}^{(k+1)}\}$ defined in Eq.~\eqref{eq:robustweight}, where $\widetilde{\alpha}_{i}=\pointpairweight{i}^{(k+1)}$ if $\mathbf{v}_i\in\mathcal{S}$ and $\widetilde{\alpha}_i=0$ for the others. 
$\mathbf{Q}\in\mathbb{R}^{3|\mathcal{V}|\times 1}$ is a block vector with each block 
\[
\mathbf{Q}_i = \mathbf{u}_{\corresidx{i}^{(k+1)}}-\sum_{\mathbf{p}_j\in\mathcal{I}(\mathbf{v}_{i})}w_{ij}\mathbf{p}_j, 
\]
$\mathbf{N}\in\mathbb{R}^{|\mathcal{V}|\times 3\mathcal{V}}$ is a block diagonal matrix with each block $\mathbf{N}_{ii}=(\newpos{\mathbf{n}}_{i}^{(k)}+\mathbf{n}^t_{\corresidx{i}^{(k+1)}})^T$, where $\newpos{\mathbf{n}}_{i}^{(k)}=\mathbf{R}_i^{(k)}\mathbf{n}_i$ according to Eq.~\eqref{eq:updateNormal}. 

For the ARAP term, $\mathbf{B}\in\mathbb{R}^{6|\mathcal{E}|\times 12|\mathcal{V}_{\mathcal{G}}|}$ is a sparse block matrix, where each row block of is associated with $(\mathbf{v}_{i},\mathbf{v}_j)$ and is equals to $(\mathbf{F}_{i:}-\mathbf{F}_{j:})/\sqrt{|\mathcal{N}(\mathbf{v}_{i})|}$, where $\mathbf{F}_{i:}$ and $\mathbf{F}_{j:}$ is the $i$-th and $j$-th row block of $\mathbf{F}$ respectively. 
Each row of $\mathbf{L}\in\mathbb{R}^{6|\mathcal{E}|}$ is associated with $(\mathbf{v}_{i},\mathbf{v}_j)$ with the values $\sum_{\mathbf{v}_j\in\mathcal{N}(\mathbf{v}_{i})}\|\mathbf{R}_{i}^{(k)}(\mathbf{v}_{i}-\mathbf{v}_j)\|/\sqrt{|\mathcal{N}(\mathbf{v}_{i})|}$.

For the smoothness term, $\mathbf{H}\in\mathbb{R}^{6|\mathcal{E}_{\mathcal{G}}|\times 12|\mathcal{V}_{\mathcal{G}}|}$ and $\mathbf{Y}\in\mathbb{R}^{6|\mathcal{E}_{\mathcal{G}}|\times 1}$ store the computation of a term $\mathbf{D}_{ij}$ in the same row. For each $\mathbf{D}_{ij}$, $\mathbf{H}$ stores the two non-zero blocks $[r_{ij}(\mathbf{p}_i^T-\mathbf{p}_j^T), r_{ij}]\otimes \mathbf{I}_3$ and $[0,0,0,-r_{ij}]\otimes \mathbf{I}_3$ corresponding to $\mathbf{X}_j$ and $\mathbf{X}_i$ respectively. $\mathbf{Y}$ stores $r_{ij}(\mathbf{p}_i-\mathbf{p}_j)$.

For rotation matrix term, $\mathbf{J}\in\mathbb{R}^{9|\mathcal{V}_{\mathcal{G}}|\times 12|\mathcal{V}_{\mathcal{G}}|}$ is a block diagonal matrix with each block $\mathbf{J}_{ii} = [\mathbf{I}_9, \mathbf{0}]\in\mathbb{R}^{9\times 12}$, where $\mathbf{I}_9\in\mathbb{R}^{9\times 9}$ is the identity matrix. $\mathbf{Z}=[\overline{\proj_{\mathcal{R}}(\mathbf{A}_1^{(k)})}^T,...,\overline{\proj_{\mathcal{R}}(\mathbf{A}^{(k)}_{|\mathcal{V}_{\mathcal{G}}|})}^T]^T\in\mathbf{R}^{9|\mathcal{V}_{\mathcal{G}}|}$, where $\overline{\proj_{\mathcal{R}}(\mathbf{A}_i^{(k)})}$ is the vector stores $\proj_{\mathcal{R}}(\mathbf{A}_i^{(k)})$ by column pivot.

Therefore, the optimal solution $\mathbf{X}^{(k+1)}$ can be solved by a system of equations 
\begin{equation}
\label{eq:solve-x}
\mathbf{K}^{(k)}\mathbf{X}^{(k+1)} = \mathbf{b}^{(k)}.
\end{equation}
where $\mathbf{K}^{(k)}$ and $\mathbf{b}^{(k)}$ are respectively: 
\[
\begin{aligned}
& \frac{1}{|\mathcal{S}|}(\mathbf{WNF})^T\mathbf{WNF} + \frac{w_{\arap}^C}{2|\mathcal{E}|}\mathbf{B}^T\mathbf{B}+ \frac{w_{\smooth}}{2|\mathcal{E}_{\mathcal{G}}|}\mathbf{H}^T\mathbf{H} + \frac{w_{\rot}}{|\mathcal{V}_{\mathcal{G}}|}\mathbf{J}^T\mathbf{J},\\
& \frac{1}{|\mathcal{S}|}(\mathbf{WNF})^T\mathbf{WNQ} + \frac{w_{\arap}^C}{2|\mathcal{E}|}\mathbf{B}^T\mathbf{L} + \frac{w_{\smooth}}{2|\mathcal{E}_{\mathcal{G}}|}\mathbf{H}^T\mathbf{Y} + \frac{w_{\rot}}{|\mathcal{V}_{\mathcal{G}}|}\mathbf{J}^T\mathbf{Z}.
\end{aligned}
\]

After updating $\{\mathbf{X}_j\}$, $\{\newpos{\mathbf{v}}_i\}$ can be calculated by Eq.~\eqref{eq:deformed-pos}.

\mypara{Termination criteria} 
The termination criteria for coarse alignment are the same as in Sec.~\ref{Sec:numericalSolver}, except that $\epsilon$ is set to $10^{-3}$. Algorithm~\ref{Alg:coarse-registration} illustrates the solver for coarse alignment using a deformation graph.

{
\subsection{Usage of Initial Correspondences}
When we obtain the initial correspondence $\mathcal{L} = \{\mathbf{v}_l, \mathbf{u}_l\}$ through global information or manual labeling, we introduced a landmark term 
\begin{equation}
E_{\text{landmark}} = \sum_{l} \|\widehat{\mathbf{v}}_l - \mathbf{u}_{l}\|^2
\end{equation}
similar to~\cite{Amberg2007}. Then we added $w_{\text{landmark}}E_{\text{landmark}}$ 
to the objective functions in problems \eqref{eq:FineOptimization} and \eqref{eq:coarseOptimization}, where $w_{\text{landmark}}=100/|\mathcal{L}|$ by default.  
}

\begin{algorithm}[t]
	\caption{Coarse alignment using a deformation graph.}
	\label{Alg:coarse-registration}
	\KwIn{$\{\mathbf{v}_i, \mathbf{n}_i\}_{i=1}^{|\mathcal{V}|}$: the source points and normals;\\ ~~~~$\{\mathbf{u}_i, \mathbf{n}_i^t\}_{i=1}^{|\mathcal{U}|}$: the target points and normals;\\ 
    \\~~~~$K$: maximum number of iterations;\\~~~~$\epsilon$: convergence threshold. 
	}
	\KwResult{The deformed point positions $\{\newpos{\mathbf{v}}_i\}_{i=1}^{|\mathcal{V}|}$.
	} 
	\BlankLine
        Initialize deformation graph $\{\mathcal{V}_{\mathcal{G}},\mathcal{E}_{\mathcal{G}}\}$ following~\cite{yao2022fast}\;
	Initialize $\mathbf{X}^{(0)}$ with identity transformations\;
 Set $\mathbf{R}_i^{(0)} = \mathbf{I}$ and $\newpos{\mathbf{v}}_i^{(0)}=\mathbf{v}_i$ for all $i$\;
 Construct a subset $\mathcal{S}$ from $\mathcal{V}$ by the farthest point sampling\;
 $k=0$\;
 \While{$k < K$ and ${\|\newpos{\mathbf{V}}^{(k+1)}-\newpos{\mathbf{V}}^{(k)}\|}/{\sqrt{|\mathcal{V}|}} <\epsilon$}{
  For each $\mathbf{v}_i\in\mathcal{S}$, find the closest point $\mathbf{u}_{\corresidxiter{i}{k+1}}$ for $\newpos{\mathbf{v}}_i^{(k)}$\;
    Compute weight $\pointpairweight{i}^{(k+1)}$ with Eq.~\eqref{eq:robustweight}\;
    Compute $\mathbf{X}^{(k+1)}$ via linear system~\eqref{eq:solve-x}\;
    Compute the deformed positions $\{\newpos{\mathbf{v}}_i^{(k+1)}\}$ by Eq.~\eqref{eq:deformed-pos}\;
    Compute $\{\mathbf{R}_i^{(k+1)}\}$ with Eq.~\eqref{eq:RUpdate}\;
    $\newpos{\mathbf{n}}_i^{(k+1)} = \mathbf{R}_i^{(k+1)}\mathbf{n}_i$\;          
    $k = k+1$\;
}
\end{algorithm}

\section{More ablation studies}

{
\subsection{Performance of Different Numbers of Graph Nodes}
For the coarse alignment stage, the number of graph nodes will affect the accuracy and speed of registration. We set the sampling radius $R=3 \overline{l}_s, 5\overline{l}_s, 10\overline{l}_s, 15\overline{l}_s$ to run our method and show the results in Tab.~\ref{Tab:ablas_nnodes} and Fig.~\ref{fig:ablas-nnodes}. To avoid the influence of the fine stage, we only perform coarse alignment in this part. We can see that a smaller sampling radius $R$, indicating a denser deformation graph,  typically leads to more accurate alignment, and it takes a longer time for optimization and less time for graph construction. 
Nonetheless, an excessive number of nodes tends to make the deformation field resemble pointwise deformation, introducing redundancy that complicates problem-solving and elevates the risk of converging towards suboptimal solutions, such as the case of $R=3\overline{l}_s$. 
We set $R=10\overline{l}_s$ as a trade-off.
}

\begin{table*}[t]
	\caption{
		{Mean value of  $\correrr(\times 0.001) \downarrow$ and AUC $\uparrow$ using different variants of our methods on the AMA dataset~\cite{vlasic2008articulated} and the SHREC'20 track non-rigid correspondence dataset~\cite{Dyke2020tracka}.}}
	\label{Tab:ablations2}
	\setlength{\tabcolsep}{3.8pt}
	\centering
	\begin{small}
		\begin{tabular}{ c | c  c  c  c  c c c | c c |c |c }
			\toprule
            \multirow{2}{*}{\makecell[c]{Variants}} & \multicolumn{4}{c}{Coarse stage} & \multicolumn{2}{c}{Fine stage} & \multirow{2}{*}{\makecell[c]{Robust \\ Weights} } &  \multicolumn{2}{c|}{AMA dataset~\cite{vlasic2008articulated}} & \multirow{2}{*}{SHREC'20~\cite{Dyke2020tracka}} &  \multirow{2}{*}{\makecell[c]{Partial data \\ from~\cite{vlasic2008articulated}}}\\\cmidrule(r){2-5}\cmidrule(r){6-7}\cmidrule(r){9-10} 
			& $E_{\alignm}^C$ & $E_{\arap}^C$ & $E_{\smooth}$ & $E_{\rot}$ & $E_{\alignm}$ & $E_{\arap}$ &  &  handstand & march1  &  \\
			\midrule
    Without ARAP & \checkmark &  & \checkmark & \checkmark &  & & \checkmark &  1.38 / 0.89 &0.96 / 0.93 &2.70 / 0.75 &0.82 / 0.93 \\
    Without Smo. & \checkmark & \checkmark &  & \checkmark &  & & \checkmark & 0.69 / \underline{0.94} & \underline{0.11} / \underline{0.99} & \underline{1.59} / \underline{0.85} &0.51 / \underline{0.95}  \\
    Without Rot.Mat. & \checkmark & \checkmark & \checkmark &  &  & & \checkmark & \underline{0.66} / \underline{0.94} & \underline{0.11} / \underline{0.99} &1.61 / 0.84 & \underline{0.50} / \underline{0.95} \\
    Coarse & \checkmark & \checkmark & \checkmark & \checkmark &  & & \checkmark & 0.69 / 0.93 & 0.12 / \underline{0.99} & \underline{1.59} / \underline{0.85} & 0.51 / \underline{0.95} \\\midrule 
    Fine &  &  &  & & \checkmark &\checkmark & \checkmark & 0.81 / 0.93 &0.41 / 0.97 &1.70 / 0.84 &0.70 / 0.93 \\\midrule
    Ours & \checkmark & \checkmark & \checkmark &\checkmark & \checkmark & \checkmark & \checkmark & \textbf{0.38} / \textbf{0.96} & \textbf{0.03} / \textbf{1.00} & \textbf{1.47} / \textbf{0.86} & \textbf{0.46} / \textbf{0.96} \\
			\bottomrule
		\end{tabular}
	\end{small}
\end{table*}

\begin{figure}[t]
	\centering
	\includegraphics[width=\columnwidth]{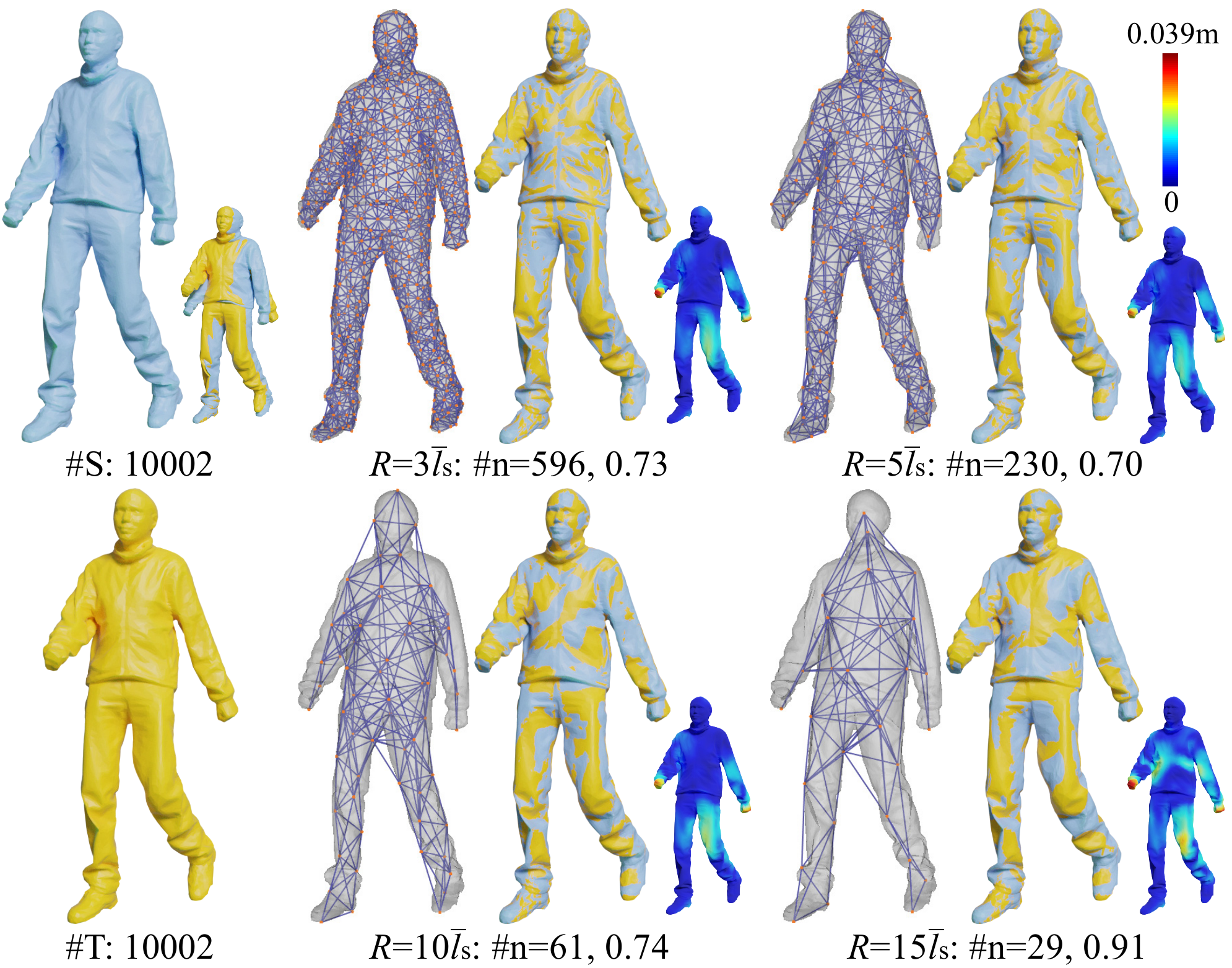}
	\caption{{Comparisons of our method with different number of graph nodes on ``march1'' sequences from the AMA dataset~\cite{vlasic2008articulated}. For each variant, we show the deformation graph (left), alignment result (middle), and an error map (right-bottom) that visualizes the distance between each point and the ground-truth corresponding points on the target shape, as well as label the number of graph nodes and RMSE ($\times 0.01$).}}
	\label{fig:ablas-nnodes}
\end{figure}
\begin{table}[t]
	\caption{
		{Average values of RMSE ($\times 0.01$),  $\correrr$ ($\times 0.01$) and running time (s) of our method with different sample radius for deformation graph on ``march1" sequences from~\cite{vlasic2008articulated}. The running time shows the initialization time (s) ( deformation graph construction time / the optimization time)}}
	\label{Tab:ablas_nnodes}
	\setlength{\tabcolsep}{5pt}
	\centering
	\begin{small}
		\begin{tabular}{ c | c c c c}
			\toprule 
            Variants &  RMSE $\downarrow$ & $\correrr$ $\downarrow$ & $\#$nodes &  Time \\
			\midrule
$R = 3\overline{l}_s$  & 0.51 & 0.15 & 587 & 0.50 / 0.79 \\
$R = 5\overline{l}_s$  & 0.40 & 0.11 & 277 & 0.58 / 0.57 \\
$R = 10\overline{l}_s$ & 0.41 & 0.12 & 59 & 0.64 / 0.50 \\
$R = 15\overline{l}_s$ & 0.48 & 0.16 & 27 & 0.68 / 0.46 \\
			\bottomrule
		\end{tabular}
	\end{small}
\end{table}

{
\subsection{Effectiveness of Different Terms and Coarse Alignment}
Furthermore, we conducted experiments to evaluate the performance of the proposed method when only the initial alignment is used (i.e. \textit{Coarse}) or when no coarse alignment is used (i.e. \textit{Fine}). For the coarse stage, we also examined the impact of omitting each regularization term:
\begin{itemize}[leftmargin=*]
\item \textit{Without ARAP}: we utilized only the coarse stage deformation and did not incorporate the $E_{\text{ARAP}}^C$ term. In this case, the calculation of the normal after deformation in Eq.~\eqref{eq:updateNormal} was replaced by being estimated from the current iteration points $\newpos{\mathbf{v}}_i^{(k)}$. 
\item \textit{Without Smo.}: we utilized only the coarse stage deformation and did not incorporate the $E_{\text{smo}}$ term; 
\item \textit{Without Rot.Mat.}: we utilized only the coarse stage deformation and did not incorporate the $E_{\text{rot}}$ term. 
\end{itemize}

\begin{figure}[t]
	\centering
	\includegraphics[width=\columnwidth]{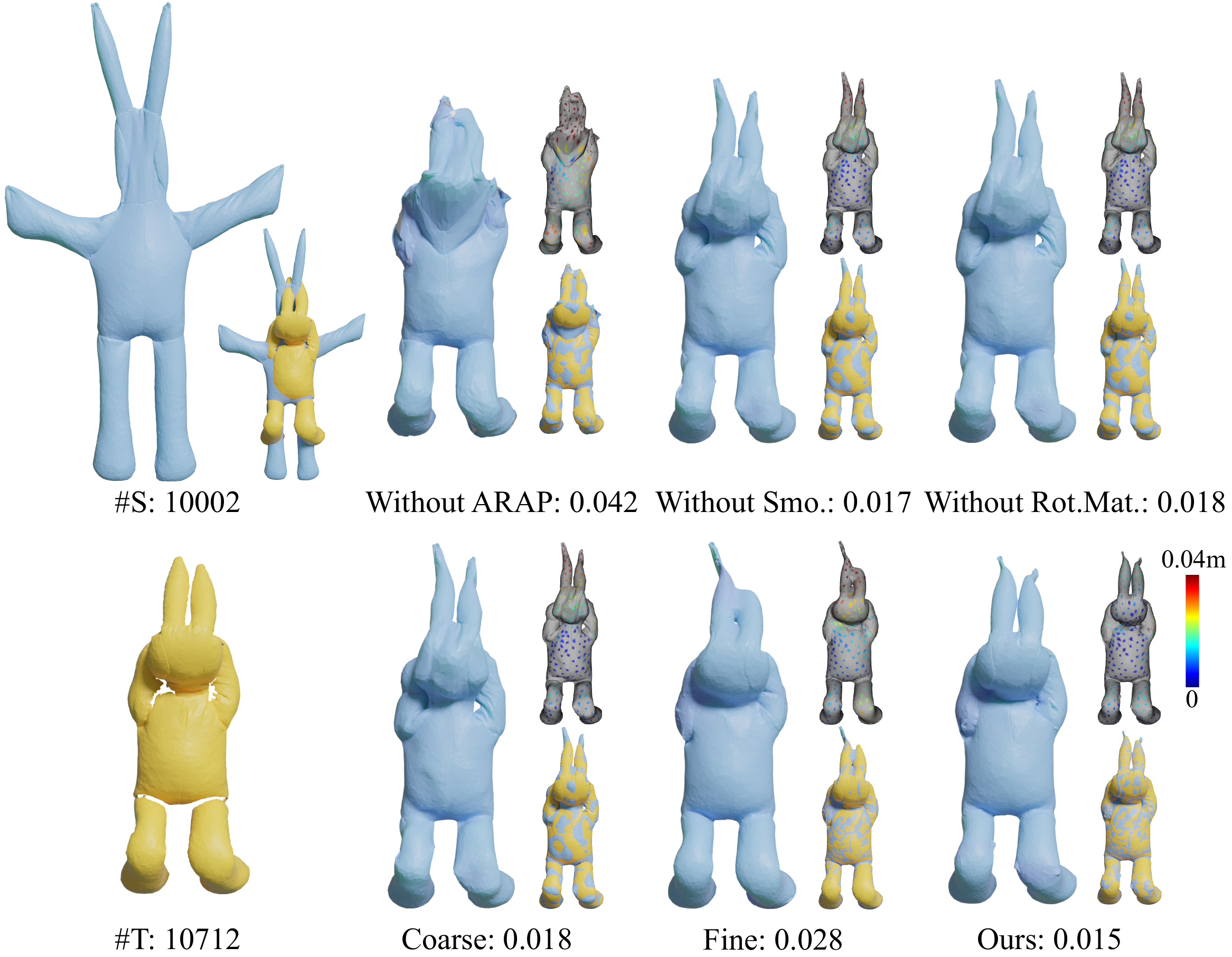}
	\caption{{Comparisons of our method and the variants with ignoring some parts on a problem instance from the SHREC'20 track dataset~\cite{Dyke2020tracka}. For each variant, we show the deformed mesh (left), alignment result (right-bottom), and an error map (right-top) that visualizes the distance between the ground-truth correspondences, as well as label RMSE.}}
	\label{fig:ablas-woterms}
\end{figure}

From Tab.~\ref{Tab:ablations2}, 
we observe that the ARAP constraint played the most crucial role. It ensured the stability of the deformation, while the smoothness term and the rotation matrix term also contributed to maintaining a coherent and well-behaved deformation. 
For the fine stage, when the ARAP term was not used, the absence of regularization led to deformations concentrating on specific points, resulting in poor registration results. So we omitted its results.   Overall, the coarse stage focuses on matching the basic shape, while the fine stage aims to capture local details. The combination of both stages yields superior results. 
Fig.~\ref{fig:ablas-woterms} presents the rendering results for one of these cases.}

\section{The Settings of the Parameters on Experiments}
\label{appx:para-setting}
In the following, we elaborate on the detailed experimental settings:
\begin{itemize}
    \item For N-ICP, we set the coefficient of the smooth term to 10 for all experiments.
    \item For AMM, we set $k_{\alpha}=100, k_{\beta}=1$ for DeformingThings4D datset~\cite{li20214dcomplete} and articulated mesh animation dataset~\cite{vlasic2008articulated}, $k_{\alpha}=50, k_{\beta}=1$ for SHREC'20 Track dataset~\cite{Dyke2020tracka}, $k_{\alpha}=100, k_{\beta}=10$ for constructed data with partial overlaps from articulated mesh animation dataset~\cite{vlasic2008articulated} and the BEHAVE dataset~\cite{bhatnagar22behave}, $k_{\alpha}=1000, k_{\beta}=100$ for DFAUST~\cite{bogo2017dfaust}, DeepDeform~\cite{bozic2020deepdeform} and face sequence from~\cite{guo2015robust}. The sampling radius $R=8\cdot\overline{l}_s$ for constructed data with partial overlaps from~\cite{vlasic2008articulated} and $R=5\cdot\overline{l}_s$ for others.
    \item For BCPD, we set $\omega=0,\beta=0.3,\lambda=10^{-4},\gamma=10,K=150,J=300,c=10^{-6},n=500$ for the SHREC'20 Track dataset~\cite{Dyke2020tracka} and the BEHAVE dataset~\cite{bhatnagar22behave} and $\omega=0,\beta=2.0,\lambda=20,\gamma=10,K=70,J=300,c=10^{-6},n=500$ for others.
    \item For BCPD++, we set $\omega=0,\beta=2,\lambda=50,\gamma=10,J=300,c=10^{-6}$ and $n=500$ for all datasets. $K=150$ for SHREC'20 Track dataset~\cite{Dyke2020tracka} and $K=70$ for others. We set downsampling points 3000 and neighbor ball radius 0.08 for all experiments. 
    \item For GBCPD, we set $\omega=0,\gamma=3,J=300,c=10^{-6}$ and $n=500$ for all datasets. We set $\beta=0.7,\lambda=100,K=200,\tau=0.2$ for DeformingThings4D datset~\cite{li20214dcomplete}, $\beta=1,\lambda=100,K=200,\tau=0.2$ for SHREC'20 Track dataset~\cite{Dyke2020tracka}, and $\beta=1.3,\lambda=500,K=300,\tau=0.5$ for other datasets. 
    \item For GBCPD++, we set $\omega=0,\lambda=100,J=300,c=10^{-6}$ and $n=500$ for all datasets. We set $\beta=0.7,\gamma=1,K=100$ for SHREC'20 Track dataset~\cite{Dyke2020tracka} and DeformingThings4D datset~\cite{li20214dcomplete}, and $\beta=1.2,\gamma=3,K=300$ for others. We set downsampling points 3000 and neighbor ball radius 0.02. 
    \item For LNDP, we use the pre-trained point cloud matching and outlier rejection models with the supervised version. 
    \item For SyNoRiM, we use the pre-trained model ``MPC-DT4D'' to test all datasets. 
    \item For GraphSCNet, we first use the release code of GeoTransformer~\cite{qin2022geometric} with the pre-trained model to obtain the initial correspondences. Then we import these correspondences and the deformation graph nodes obtained by our method into the release code of GraphSCNet, and test all examples with the provided pre-trained model. 
    \item For our method, 
    we set $w_{\smooth}=0.01, w_{\rot}=10^{-4}, w_{\arap}^{C}=500$ and $w_{\arap}=5\times 10^{3}$ for the DeepDeform dataset~\cite{bozic2020deepdeform} and the BEHAVE dataset~\cite{bozic2020deepdeform}, $w_{\smooth}=0.01, w_{\rot}=10^{-4}, w_{\arap}^{C}=5\times 10^3$ and $w_{\arap}=10^{3}$ for face sequence from~\cite{guo2015robust}, $w_{\smooth}=0.001, w_{\rot}=10^{-5}, w_{\arap}^{C}=10$ and $w_{\arap}=200$ for DFAUST dataset~\cite{bogo2017dfaust}, and $w_{\smooth}=0.01, w_{\rot}=10^{-4}, w_{\arap}^{C}=500$ and $w_{\arap}=200$ for other datsets. 
\end{itemize}

\end{document}